\newtheorem{theorem}{Theorem}
\newtheorem{lemma}[theorem]{Lemma}
\newtheorem{corollary}[theorem]{Corollary}
\newtheorem{definition}[theorem]{Definition}
\setlist{nolistsep}
\DeclareMathOperator*{\argmin}{arg\,min}
\newenvironment{breakablealgorithm}
  {
   \begin{center}
     \refstepcounter{algorithm}
     \hrule height.8pt depth0pt \kern2pt
     \renewcommand{\caption}[2][\relax]{
       {\raggedright\textbf{\ALG@name~\thealgorithm} ##2\par}%
       \ifx\relax##1\relax 
         \addcontentsline{loa}{algorithm}{\protect\numberline{\thealgorithm}##2}%
       \else 
         \addcontentsline{loa}{algorithm}{\protect\numberline{\thealgorithm}##1}%
       \fi
       \kern2pt\hrule\kern2pt
     }
  }{
     \kern2pt\hrule\relax
   \end{center}
  }
\title{A De-singularity Subgradient Approach \\ for the Extended Weber Location Problem}
\author{
Zhao-Rong Lai$^1$
\and
Xiaotian Wu$^2$\and
Liangda Fang$^2$\And
Ziliang Chen$^{*3,4}$
\affiliations
$^1$Department of Mathematics, College of Information Science and Technology, Jinan University\\
$^2$Department of Computer Science, College of Information Science and Technology, Jinan University\\
$^3$Research Institute of Multiple Agents and Embodied Intelligence, Peng Cheng Laboratory\\
$^4$Guangdong Institute of Smart Education, Jinan University
\emails
\{laizhr, wxiaotian, fangld\}@jnu.edu.cn,
c.ziliang@yahoo.com
}
\begin{document}

\def \bbE {\mathbb E}
\def \bbR {\mathbb R}
\def \bbN {\mathbb N}
\def \bbS {\mathbb S}
\def \bbZ {\mathbb Z}

\def \mD {\mathcal{D}}
\def \mF {\mathcal{F}}
\def \mG {\mathcal{G}}
\def \mI {\mathcal{I}}
\def \mL {\mathcal{L}}
\def \mM {\mathcal{M}}
\def \mN {\mathcal{N}}
\def \mO {\mathcal{O}}
\def \mP {\mathcal{P}}
\def \mQ {\mathcal{Q}}
\def \mS {\mathcal{S}}
\def \mT {\mathcal{T}}
\def \mU {\mathcal{U}}

\def \fa {\mathfrak{a}}
\def \fb {\mathfrak{b}}
\def \frS {\mathfrak{S}}

\def \sL {\mathscr{L}}
\def \sS {\mathscr{S}}

\def \ba {\bm{a}}
\def \bb {\bm{b}}
\def \bc {\bm{c}}
\def \bd {\bm{d}}
\def \bh {\bm{h}}
\def \bp {\bm{p}}
\def \bq {\bm{q}}
\def \bx {\bm{x}}
\def \by {\bm{y}}
\def \bz {\bm{z}}
\def \bw {\bm{w}}
\def \bu {\bm{u}}
\def \bv {\bm{v}}
\def \br {\bm{r}}
\def \bs {\bm{s}}
\def \bR {\bm{R}}
\def \bS {\bm{S}}
\def \bI {\bm{I}}
\def \bA {\bm{A}}
\def \bB {\bm{B}}
\def \bC {\bm{C}}
\def \bD {\bm{D}}
\def \bE {\bm{E}}
\def \bF {\bm{F}}
\def \bG {\bm{G}}
\def \bH {\bm{H}}
\def \bP {\bm{P}}
\def \bQ {\bm{Q}}
\def \bR {\bm{R}}
\def \bU {\bm{U}}
\def \bV {\bm{V}}
\def \bW {\bm{W}}
\def \bX {\bm{X}}

\def \tM {\tilde{M}}

\def \bone {\mathbf{1}}

\def \blambda {\bm{\lambda}}

\def \fp {\mathfrak{p}}
\def \frs {\mathfrak{s}}

\def \bbRn {\bbR^{n}}
\def \bbRm {\bbR^{m}}
\def \bbRN {\bbR^{N}}
\def \bbRM {\bbR^{M}}
\def \bbNM {\bbN_M}
\def \bbNk {\bbN_k}
\def \bbNn {\bbN_n}
\def \bbNm {\bbN_m}
\def \bbZn {\bbZ_n}

\def \tS {\tilde{S}}
\def \tbQ {\tilde{\bQ}}
\def \wtDelta {\widetilde{\Delta}}

\def \st {\text{s.\ t.}}
\def \tr {\mathrm{tr}}
\def \prox {\mathrm{prox}}
\def \supp {\mathrm{supp}}
\def \trun {\mathrm{trun}}
\def \dist {\mathrm{dist}}
\def \Fix {\mathrm{Fix}}
\def \gra {\mathrm{gra}\hspace{2pt}}
\def \dom {\mathrm{dom}\hspace{2pt}}
\def \crit {\mathrm{crit}\hspace{2pt}}

\newcommand\leqs{\leqslant}
\newcommand\geqs{\geqslant}
\newcommand{\ud}{\,\mathrm{d}}

\maketitle


\begin{abstract}
The extended Weber location problem is a classical optimization problem that has inspired some new works in several machine learning scenarios recently. However, most existing algorithms may get stuck due to the singularity at the data points when the power of the cost function $1\leqs q<2$, such as the widely-used iterative Weiszfeld approach. In this paper, we establish a de-singularity subgradient approach for this problem. We also provide a complete proof of convergence which has fixed some incomplete statements of the proofs for some previous Weiszfeld algorithms. Moreover, we deduce a new theoretical result of superlinear convergence for the iteration sequence in a special case where the minimum point is a singular point. We conduct extensive experiments in a real-world machine learning scenario to show that the proposed approach solves the singularity problem, produces the same results as in the non-singularity cases, and shows a reasonable rate of linear convergence. The results also indicate that the $q$-th power case ($1<q<2$) is more advantageous than the $1$-st power case and the $2$-nd power case in some situations. Hence the de-singularity subgradient approach is beneficial to advancing both theory and practice for the extended Weber location problem.
\end{abstract}

\section{Introduction}
\label{sec:intro}
\renewcommand{\thefootnote}{\fnsymbol{footnote}}
\footnotetext[1]{Corresponding author.}
\footnotetext[7]{This is a preprint version of an accepted paper for IJCAI 2024. Contents may be changed in its final version. The supplementary material and code for this paper are available at \url{https://github.com/laizhr/qPWAWS}.}
\renewcommand{\thefootnote}{\arabic{footnote}}
The extended Weber location problem is a classical optimization problem \cite{weiszconvOR,lqnormmean3,weiszoldnew} that has been introduced to some new machine learning scenarios recently \cite{lqmean,olpsjmlr,SSPO,SPOLC,egrmvgap}. It finds the point that minimizes the $q$-th power of the Euclidean distances from $m$ fixed data points $\mathbf{x}_1,\cdots, \mathbf{x}_m\in \mathbb{R}^d$ \cite{l1median1,extendfermat,extendfermat2}:
\begin{equation}
\setlength{\abovedisplayskip}{3pt}
\setlength{\belowdisplayskip}{3pt}
\label{eqn:lqmedian}
\mathbf{x}_*=\argmin_\mathbf{y} C_q(\mathbf{y})=\argmin_\mathbf{y} \sum_{i=1}^m \| \mathbf{y}-\mathbf{x}_i  \|^q,
\end{equation}
where $\|\cdot\|^q$ denotes the $q$-th power of the Euclidean distance, and $C_q(\mathbf{y})$ is the $q$-th power cost function at the point $\mathbf{y}\in \mathbb{R}^d$. The $q$-th power median $\mathbf{x}_*$ (especially when $1\leqs q<2$) has recently been found useful in rotation averaging and gives more robust results when outliers exist \cite{lqmean}. We will further show its advantage in online portfolio selection \cite{olpsjmlr,egrmvgap} in this paper.

\subsection{The Singularity Problem}
\label{sec:singprob}
In order to have a quick insight into the aim of this paper, we first compute the gradient of $C_q(\mathbf{y})$ w.r.t. $\mathbf{y}$:
\begin{equation}
\setlength{\abovedisplayskip}{3pt}
\setlength{\belowdisplayskip}{3pt}
\label{eqn:lqgradient}
\nabla C_q(\mathbf{y})= \sum_{i=1}^m q\| \mathbf{y}-\mathbf{x}_i  \|^{q-2}( \mathbf{y}-\mathbf{x}_i).
\end{equation}
It is well-defined when $q\geqs 2$ and one could easily find some gradient descent methods to work out the minimum \cite{lqmeangrad}, which is beyond the main concern of this paper. However, it is singular at the data points $\{\mathbf{x}_i\}_{i=1}^m$ when $1\leqs q<2$, since at least one of the weights $\| \mathbf{y}-\mathbf{x}_i  \|^{q-2}$ becomes undefined (see Figure \ref{fig:singnonsingular}). This paper mainly addresses this case and we assume $1\leqs q<2$ in the rest of this paper if not specified.

\begin{figure}[!htb]
\vspace{-5pt}
\centering
\subfloat[Nonsingular]{
\centering
\includegraphics[width=0.3\textwidth]{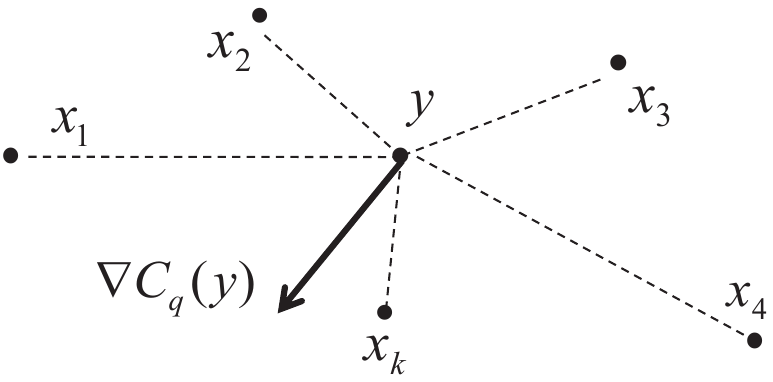}}\\
\subfloat[Singular]{
\centering
\includegraphics[width=0.3\textwidth]{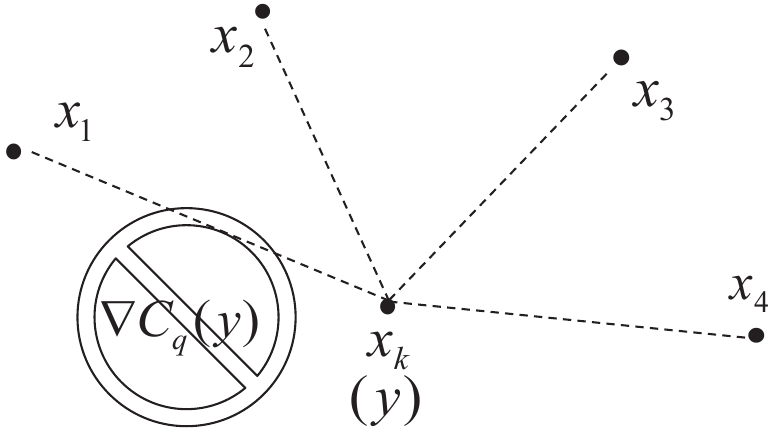}}
\caption{The singularity problem for the extended Weber location problem (\ref{eqn:lqmedian}) with $1\leqs q<2$: (a) The gradient $\nabla C_q(\mathbf{y})$ is well-defined when $\mathbf{y}\notin\{\mathbf{x}_i\}_{i=1}^m$. (b) The gradient $\nabla C_q(\mathbf{y})$ does not necessarily exist when $\mathbf{y}$ hits some $\mathbf{x}_k$.}
\label{fig:singnonsingular}
\vspace{-5pt}
\end{figure}

Since $C_q(\mathbf{y})$ is convex and continuous in $\mathbb{R}^d$ including these data points $\{\mathbf{x}_i\}_{i=1}^m$, we can still turn to the subgradient approach for solutions.
\begin{definition}[\cite{rockafellar2009variational}]
\label{def:frechetsubdifferential}
The Fr{\'e}chet subdifferential of $C_q$ at $\mathbf{y}$, denoted by $\partial C_q(\mathbf{y})$, is the set of all vectors $\mathbf{v}\in\mathbb{R}^d$ satisfying 
\begin{equation}
\label{eqn:frechetsubdifferential}
\setlength{\abovedisplayskip}{3pt}
\setlength{\belowdisplayskip}{3pt}
\partial C_q(\mathbf{y}){\triangleq} \left\{\mathbf{v}\in\mathbb{R}^d: \liminf_{\substack{\bm{z}\to\bm{y}\\ \bm{z}\neq\bm{y}}}\frac{C_q(\mathbf{z}){-}C_q(\mathbf{y}){-}\bm{v}^\top(\bm{z}{-}\bm{y})}{\|\bm{z}-\bm{y}\|_2}{\geqs}0\right\}.
\end{equation}
\end{definition}
Since $C_q$ is also proper ($\dom(C_q)\ne \emptyset$ and $C_q(\mathbf{y})\ne -\infty$, $\forall \mathbf{y}\in\mathbb{R}^d$), its Fr{\'e}chet subdifferential is equivalent to the classical subdifferential in the literature of convex analysis.
\begin{definition}[Subdifferential in Convex Analysis]
\label{def:convsubdifferential}
\begin{equation}
\label{eqn:convsubdifferential}
\partial C_q(\mathbf{y})\triangleq \left\{\mathbf{v}\in\mathbb{R}^d: \forall \mathbf{z}, \  C_q(\mathbf{z})-C_q(\mathbf{y})-\bm{v}^\top(\bm{z}-\bm{y})\geqs0\right\}.
\end{equation}
\end{definition}
If $C_q$ is differentiable at $\mathbf{y}$, $\partial C_q(\mathbf{y})$ reduce to a gradient $\nabla C_q(\mathbf{y})$. Based on the properties of $C_q$ ($1\leqs q<2$), it is easy to verify that at least one Fr{\'e}chet subgradient exists at each singular point $\mathbf{x}_k\in\{\mathbf{x}_i\}_{i=1}^m$ by satisfying the limit inferior inequality in (\ref{eqn:frechetsubdifferential}). However, few existing works have been done to deal with this singularity problem, because it is usually considered a rare event in practice or just overlooked.

One of the most typical and widely-used gradient approaches is the general $q$-th Power Weiszfeld Algorithm ($q$PWA, \cite{extendfermat,extendfermat2,lqmean}). Since $C_q(\mathbf{y})$ is convex, $\mathbf{y}$ is the minimum of $C_q(\mathbf{y})$ if and only if $\nabla C_q(\mathbf{y})=\mathbf{0}$. It leads to the following update formula of $q$PWA at the $p$-th iteration:
\begin{align}
\label{eqn:lqwa2}
\mathbf{y}_{(p+1)}=& \frac{\sum_{i=1}^m \| \mathbf{y}_{(p)}-\mathbf{x}_i  \|^{q-2}\mathbf{x}_i}{\sum_{i=1}^m \| \mathbf{y}_{(p)}-\mathbf{x}_i  \|^{q-2}},
\end{align}
where $\mathbf{y}_{(p)}$ is the current iterate and $\mathbf{y}_{(p+1)}$ is the next iterate computed by a re-weighted sum of the data points $\{\mathbf{x}_i\}_{i=1}^m$. The weights of $\{\mathbf{x}_i\}_{i=1}^m$ are determined by the current distances from $\mathbf{y}_{(p)}$ to $\{\mathbf{x}_i\}_{i=1}^m$ and normalized by their sum.

(\ref{eqn:lqwa2}) can be further transformed to:
\begin{align}
\label{eqn:gradientform}
\mathbf{y}_{(p+1)}=& \mathbf{y}_{(p)}-\frac{\sum_{i=1}^m \| \mathbf{y}_{(p)}-\mathbf{x}_i  \|^{q-2}(\mathbf{y}_{(p)}-\mathbf{x}_i)}{\sum_{i=1}^m \| \mathbf{y}_{(p)}-\mathbf{x}_i  \|^{q-2}}\nonumber\\
\triangleq&\mathbf{y}_{(p)}-\vartheta \nabla C_q(\mathbf{y}_{(p)}),
\end{align}
where $\vartheta=1/(q\sum_{i=1}^m \| \mathbf{y}_{(p)}-\mathbf{x}_i  \|^{q-2})$ is an automatic step size for the gradient descent approach. Therefore, the Weiszfeld algorithm (\ref{eqn:lqwa2}) is actually a gradient descent method (\ref{eqn:gradientform}), which shall fail if $\mathbf{y}_{(p)}$ hits one of $\{\mathbf{x}_i\}_{i=1}^m$ and $1\leqs q<2$.

\subsection{The Significance and Difficulty of This Singularity Problem}
\label{sec:singprob}
This singularity problem actually happens frequently and unexpectedly. We can easily give a simple example in Table \ref {tab:exampleesc}. Let $q=1.1$ and $6$ data points be 
\[
\{\mathbf{x}_i\}_{i=1}^6\triangleq \{(-2,0),(-1,0),(1,0),(2,0),(0,1),(0,-1)\}.
\]
The starting point $\mathbf{y}_{(0)}\triangleq (1.68645,0)$ is chosen distinct from $\{\mathbf{x}_i\}_{i=1}^6$. The existing algorithm $q$PWA gets stuck with just $1$ iteration since it hits one data point $\mathbf{x}_3=(1,0)$ and cannot proceed. The resulted cost function is $C_q=9.4201$, which is not the true minimum. On the contrary, the proposed algorithm $q$PWAWS successfully escapes from the singular point $\mathbf{x}_3$ and finds the true solution $(0,0)$ with the true minimum $C_q=8.2871$. Since $C_q$ is a continuous function on $\{\mathbf{x}_i\}_{i=1}^6$, when some $\mathbf{x}_i$ change continuously, the ``bad'' starting point $\mathbf{y}_{(0)}$ also changes accordingly, possibly throughout the whole $\mathbb{R}^2$. \cite{openweiszfeld,l1median3} further point out that the bad starting point set $\{\mathbf{y}_{(0)}\}$ itself may constitute a continuum set that can be dense in an open region of $\mathbb{R}^d$ with $d\geqslant 2$, even the entire $\mathbb{R}^d$. Hence this singularity problem is quite significant, if not serious.
\begin{table}[!htb]
\vspace{-5pt}
\centering
\small{
\begin{tabular}{cccc}
\hline
$q=1.1$ & Iter & $\hat{\mathbf{x}}$ & $C_q(\hat{\mathbf{x}})$\\
 \hline
$q$PWA & 1 (get stuck) & $(1,0)$ & $9.4201$  \\
$q$PWAWS (ours) & 24 & $(0,0)$ & $8.2871$  \\
  \hline
\end{tabular}
}
\caption{An example of the singularity problem.}
\label{tab:exampleesc}
\vspace{-10pt}
\end{table}

This singularity problem cannot be radically eliminated by straightforward treatments, such as perturbations and random restarts. We remind that $\mathbf{0}$ is not necessarily a subgradient for $C_q(\mathbf{x}_k)$. If $\mathbf{0}\in \partial C_q(\mathbf{x}_k)$, then $C_q(\mathbf{y})-C_q(\mathbf{x}_k) \geqs 0$ for all $\mathbf{y}$ based on (\ref{eqn:convsubdifferential}). Thus $\mathbf{x}_k$ should be a minimum point of $C_q$. However, whether $\mathbf{x}_k$ minimizes $C_q$ is unknown beforehand. Another treatment is to find a starting point with a smaller cost than the costs of any data points $\{\mathbf{x}_i\}_{i=1}^m$ by computing $\{C_q(\mathbf{x}_i)\}_{i=1}^m$ at first \cite{lqmean}, but it is rather exhaustive especially when the number of data points $m$ is large. It requires at least $m$ iterates to find a starting point, but sometimes we only need fewer iterates to find a minimum point. Table \ref{tab:comcost} shows that our method requires only $9.31$ iterates in average to find the minimum in the ``NYSE(N), $m=10$, $q=1.9$'' case, while the above treatment conducts $m=10$ iterates only to find a starting point.Furthermore, if $\mathbf{x}_k$ is exactly the minimum, we have to check whether $\mathbf{0}\in \partial C_q(\mathbf{x}_k)$. After all, once we figure out $\partial C_q(\mathbf{x}_k)$, we can get rid of these troubles. In fact, there is no need to introduce additional treatments because our de-singularity approach can immediately replace the ordinary gradient step without increasing computational complexity, which is a great advantage.

In the $q=1$ case, there is a remedy that removes the singular term in the gradient \cite{kuhnl1median,l1median3}, but it does not investigate the subgradient $\partial C_q(\mathbf{x}_k)$. More importantly, there are some incomplete statements in the proof of convergence, which will be fixed in Section \ref{sec:anaconvergence}. In the $1<q<2$ case, neither has the subgradient $\partial C_q(\mathbf{x}_k)$ been figured out nor has the proof of convergence been completed, yet.

\subsection{Our Results}
\label{sec:ourres}
We mainly establish a complete ``de-singularity'' subgradient approach for the extended Weber location problem (\ref{eqn:lqmedian}) ($1\leqs q<2$). The key contributions fall into four aspects:
\begin{enumerate}[leftmargin=*]
 \item By removing the singular term, we propose a de-singularity subgradient of $C_q$ at each singular point $\mathbf{x}_k\in\{C_q(\mathbf{x}_i)\}_{i=1}^m$. It can replace the ordinary gradient without increasing computational complexity.
 \item If $\mathbf{x}_k$ is not a minimum point, we pull the stuck iterate in the de-singularity subgradient descent direction that can reduce the cost $C_q$, so that the iterates can monotonically converge to the exact minimum. In the rest of this paper, we call the new algorithm $q$-th Power Weiszfeld Algorithm without Singularity ($q$PWAWS).
 \item We present a complete proof of convergence for $q$PWAWS, which has fixed some incomplete statements of the proofs for some previous Weiszfeld algorithms.
 \item If one of $\{\mathbf{x}_i\}_{i=1}^m$ is the minimum, we prove that $q$PWAWS enjoys a superlinear convergence for the iteration sequence when $1<q<2$, which is a new theoretical result. If none of $\{\mathbf{x}_i\}_{i=1}^m$ is the minimum, we show that $q$PWAWS enjoys a reasonable rate of linear convergence for the iteration sequence with computational experiments.
\end{enumerate}

The proposed algorithm $q$PWAWS can use any starting point in $\mathbb{R}^d$ and guarantee a monotonic convergence to the exact minimum. Moreover, this de-singularity subgradient approach can also be adopted in other gradient-related algorithms for problem (\ref{eqn:lqmedian}).

\section{Related Works on the Weiszfeld Algorithms}
\label{sec:problemandrelatework}
In this section, we assume that the data points $\{\mathbf{x}_i\}_{i=1}^m$ are distinct and non-collinear (i.e., they do not lie in a hyperplane). Thus the cost function $C_q(\mathbf{y})$ in (\ref{eqn:lqmedian}) is strictly convex and has a unique minimum. We review some related works on how to derive the Weiszfeld algorithms.

\subsection{Convergence in the Non-singularity Case}
\label{sec:convernonsing}
For the basic $q$PWA (\ref{eqn:lqwa2}), it has been proven that $C_q(\mathbf{y}_{(p+1)})<C_q(\mathbf{y}_{(p)})$ if $\mathbf{y}_{(p+1)}\ne \mathbf{y}_{(p)}$, and that as long as the sequence of iterates does not hit any data points, it monotonically converges to the exact minimum \cite{kuhnl1median,extendfermat,extendfermat2,lqmean}, while some incomplete statements are to be fixed in Section \ref{sec:anaconvergence}. The rate of convergence for $q$PWA ($1<q<2$) has not been deduced, yet.

\subsection{Iterative Re-weighted Least Squares (IRLS) Interpretation}
\label{sec:lqIRLSinterpret}
Based on the current iterate $\mathbf{y}_{(p)}$, a weighted $2$-nd power cost function $\tilde{C}_q(\mathbf{y})$ can be created to approximate $C_q(\mathbf{y})$:
\begin{equation}
\setlength{\abovedisplayskip}{3pt}
\setlength{\belowdisplayskip}{3pt}
\label{eqn:approxcq}
\tilde{C}_q(\mathbf{y})= \sum_{i=1}^m \| \mathbf{y}_{(p)}-\mathbf{x}_i  \|^{q-2}\| \mathbf{y}-\mathbf{x}_i\|^2.
\end{equation}
By setting the gradient of $\tilde{C}_q(\mathbf{y})$ as zero, we have the same update formula as (\ref{eqn:lqwa2}). It is a kind of Iterative Re-weighted Least Squares (IRLS) techniques \cite{irlswty,irlsidc2,irlseld,lqmean}.

\subsection{$1$-st Power Weiszfeld Algorithm without Singularity}
\label{sec:l1medianws}
A remedy for the $L_1$-median is studied in \cite{kuhnl1median,l1median3,weiszconvOR}. First, we define a de-singularity Weiszfeld transform that excludes the singular point as follows:
\begin{equation}
\setlength{\abovedisplayskip}{3pt}
\setlength{\belowdisplayskip}{3pt}
\label{eqn:l1waws}
\tilde{\mathbf{T}}(\mathbf{y})= \frac{\sum_{\mathbf{x}_i\ne \mathbf{y}} \| \mathbf{y}-\mathbf{x}_i  \|^{-1}\mathbf{x}_i}{\sum_{\mathbf{x}_i\ne \mathbf{y}} \| \mathbf{y}-\mathbf{x}_i  \|^{-1}}.
\end{equation}
Then the next iterate is the combination of $\tilde{\mathbf{T}}(\mathbf{y}_{(p)})$ and the current iterate $\mathbf{y}_{(p)}$:
\begin{equation}
\setlength{\abovedisplayskip}{3pt}
\setlength{\belowdisplayskip}{3pt}
\label{eqn:l1waws2}
\mathbf{y}_{(p+1)}= (1-\lambda)\tilde{\mathbf{T}}(\mathbf{y}_{(p)})+\lambda \mathbf{y}_{(p)},
\end{equation}
where $0\leqs\lambda\leqs 1$ is a mixing parameter.

In order to drag the iterate out of the singular point and reduce the cost simultaneously, we adopt the $1$-st power de-singularity subgradient (a formal definition will be given in Definition \ref{defn:desinggrad}) and set $\lambda$ as follows:
\begin{align}
\label{eqn:l1resi}
&\nabla D_1(\mathbf{y}_{(p)})=\sum_{\mathbf{x}_i\ne \mathbf{y}_{(p)}}\|  \mathbf{y}_{(p)}- \mathbf{x}_i\|^{-1} ( \mathbf{y}_{(p)}-\mathbf{x}_i),\\
\label{eqn:l1lbdopt}
&\lambda=\left\{  \begin{array}{ll}
0 & \text{if} \quad\mathbf{y}_{(p)}\notin \{\mathbf{x}_i\}_{i=1}^m\\
\min \left\{ 1, \frac{1}{\|\nabla D_1(\mathbf{y}_{(p)})\|} \right\} & \text{if} \quad\mathbf{y}_{(p)}\in \{\mathbf{x}_i\}_{i=1}^m
\end{array} \right..
\end{align}
Compared with (\ref{eqn:lqgradient}), $\nabla D_1(\mathbf{y}_{(p)})$ removes the singular term in $\nabla C_1(\mathbf{y}_{(p)})$. If $\mathbf{y}_{(p)}$ does not hit any data point, then (\ref{eqn:l1waws2}) is just the same as (\ref{eqn:lqwa2}). Else, $\lambda$ ensures $C_1(\mathbf{y}_{(p+1)})\leqs C_1(\mathbf{y}_{(p)})$ and the iterates monotonically converge to the exact minimum, which are explained by \cite{l1median3}.

\section{$q$-th Power Weiszfeld Algorithm without Singularity}
\label{sec:lqwaws}
Although there is a remedy for the $1$-st power case, it is nontrivial to deal with the general $q$-th power ($1<q<2$) case. We will see that the $1$-st power case and the $q$-th power ($1<q<2$) case are different in the characterization of minimum, the minimizing strategy, and the rate of convergence, since they have different smoothness. Nevertheless, we still present a unified $q$-th power ($1\leqs q<2$) framework for the integrity of the approach.

In the rest of this paper, we generalize the $q$-th power ($1\leqs q<2$) cost function in (\ref{eqn:lqmedian}) to:
\begin{equation}
\setlength{\abovedisplayskip}{3pt}
\setlength{\belowdisplayskip}{3pt}
\label{eqn:lqmediangen}
 C_q(\mathbf{y})=\sum_{i=1}^m \xi_i\| \mathbf{y}-\mathbf{x}_i  \|^q,
\end{equation}
where $\xi_i>0$ is the multiplicity of the data point $\mathbf{x}_i$. It can deal with duplication and collinearity of data points. For example, if $\mathbf{x}_1=\mathbf{x}_2=\mathbf{x}_3$ and the other points are distinct, then we can merge $\mathbf{x}_1, \mathbf{x}_2, \mathbf{x}_3$ and set $\xi_1=3$, $\xi_i=1 (i\geqs 4)$. The collinear data points can also be merged to be non-collinear ones by the same way. Hence, we assume that the data points are distinct and non-collinear in the rest of this paper. Thus $C_q(\mathbf{y})$ is strictly convex on $\mathbf{y}$ and there is a unique minimum point $\mathbf{M}$. To be convenient for illustrations, we set $\eta_i\triangleq\xi_i^{\frac{1}{q}}$ and further change (\ref{eqn:lqmediangen}) to:
\begin{equation}
\setlength{\abovedisplayskip}{3pt}
\setlength{\belowdisplayskip}{3pt}
\label{eqn:lqmediangenreal}
 C_q(\mathbf{y})=\sum_{i=1}^m \eta_i^q\| \mathbf{y}-\mathbf{x}_i  \|^q.
\end{equation}

The illustration of $q$PWAWS consists of $5$ steps: 
\begin{enumerate}[leftmargin=*]
\item The general update formula is introduced as an easy beginning. 
\item The $q$-th power de-singularity subgradient is defined to characterize the subgradients and the minimum.
\item The $q$-th power de-singularity subgradient is adopted to get out of the singular point and reduce the cost simultaneously.
\item The proof of convergence is conducted.
\item The theoretical rate of convergence for the iteration sequence in a special case is deduced with the properties of the $q$-th power de-singularity subgradient.
\end{enumerate}

\subsection{General Update Formula without Coincidence}
\label{sec:genupdatewc}
First, we consider the simplest case where the current iterate $\mathbf{y}_{(p)}$ does not coincide with the data points $\{\mathbf{x}_i\}_{i=1}^m$. We first state the general iterative update formula in this case:
\begin{align}
\label{eqn:lqwa3}
\mathbf{y}_{(p+1)}= \mathbf{T}_1( \mathbf{y}_{(p)})\triangleq\frac{\sum_{i=1}^m \eta_i^q\| \mathbf{y}_{(p)}-\mathbf{x}_i  \|^{q-2}\mathbf{x}_i}{\sum_{i=1}^m \eta_i^q\| \mathbf{y}_{(p)}-\mathbf{x}_i  \|^{q-2}}.
\end{align}
As a more general case than \cite{kuhnl1median,extendfermat,extendfermat2,lqmean}, we present the following non-increasing theorem:
\begin{theorem}
\label{thm:nonincreasing}
If $\mathbf{y}_{(p)}\notin \{\mathbf{x}_i\}_{i=1}^m$, then $C_q(\mathbf{T}_1( \mathbf{y}_{(p)}))\leqs C_q(\mathbf{y}_{(p)})$ with equality only when $\mathbf{T}_1( \mathbf{y}_{(p)})= \mathbf{y}_{(p)}$.
\end{theorem}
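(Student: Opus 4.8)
The plan is to reduce the statement to the Iterative Re-weighted Least Squares surrogate of (\ref{eqn:approxcq}), written here for the weighted cost (\ref{eqn:lqmediangenreal}) as
\[
\tilde{C}_q(\mathbf{y})=\sum_{i=1}^m \eta_i^q\|\mathbf{y}_{(p)}-\mathbf{x}_i\|^{q-2}\|\mathbf{y}-\mathbf{x}_i\|^2,
\]
and then to bridge $\tilde{C}_q$ and $C_q$ with a single concavity inequality. First I would note that, because $\mathbf{y}_{(p)}\notin\{\mathbf{x}_i\}_{i=1}^m$, every weight $\eta_i^q\|\mathbf{y}_{(p)}-\mathbf{x}_i\|^{q-2}$ is strictly positive and finite, so $\tilde{C}_q$ is a positive-definite quadratic (its Hessian equals $2\big(\sum_i \eta_i^q\|\mathbf{y}_{(p)}-\mathbf{x}_i\|^{q-2}\big)\bI\succ\mathbf{0}$), hence strictly convex with a unique minimizer. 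Setting $\nabla\tilde{C}_q(\mathbf{y})=\mathbf{0}$ recovers exactly the update (\ref{eqn:lqwa3}), so $\mathbf{T}_1(\mathbf{y}_{(p)})$ is that unique minimizer. Abbreviating $b_i\triangleq\|\mathbf{y}_{(p)}-\mathbf{x}_i\|$ and $a_i\triangleq\|\mathbf{T}_1(\mathbf{y}_{(p)})-\mathbf{x}_i\|$, two facts follow at once: $\tilde{C}_q(\mathbf{y}_{(p)})=\sum_i \eta_i^q b_i^q=C_q(\mathbf{y}_{(p)})$, and, since $\mathbf{y}_{(p)}$ is only one competitor against the minimizer, $\tilde{C}_q(\mathbf{T}_1(\mathbf{y}_{(p)}))=\sum_i \eta_i^q b_i^{q-2}a_i^2\leqs C_q(\mathbf{y}_{(p)})$, with equality iff $\mathbf{T}_1(\mathbf{y}_{(p)})=\mathbf{y}_{(p)}$ by uniqueness of the minimizer.

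The second and principal step is to pass from the surrogate value $\sum_i \eta_i^q b_i^{q-2}a_i^2$ back to the true cost $C_q(\mathbf{T}_1(\mathbf{y}_{(p)}))=\sum_i \eta_i^q a_i^q$. Here I would use that $g(t)=t^{q/2}$ is concave on $[0,\infty)$, since $q/2\in[\tfrac12,1)$ for $1\leqs q<2$. The tangent-line inequality for a concave function, applied at the base point $b_i^2$ and evaluated at $a_i^2$, gives $a_i^q=g(a_i^2)\leqs g(b_i^2)+g'(b_i^2)(a_i^2-b_i^2)=b_i^q+\tfrac{q}{2}b_i^{q-2}(a_i^2-b_i^2)$. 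Multiplying by $\eta_i^q>0$ and summing over $i$ yields
\[
C_q(\mathbf{T}_1(\mathbf{y}_{(p)}))\leqs C_q(\mathbf{y}_{(p)})+\tfrac{q}{2}\big(\tilde{C}_q(\mathbf{T}_1(\mathbf{y}_{(p)}))-C_q(\mathbf{y}_{(p)})\big),
\]
and the parenthesised difference is $\leqs 0$ by the first step. Since $q/2>0$, this proves $C_q(\mathbf{T}_1(\mathbf{y}_{(p)}))\leqs C_q(\mathbf{y}_{(p)})$.

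For the equality clause I would argue as follows. If the overall inequality is tight, then the final bound $\tfrac{q}{2}\big(\tilde{C}_q(\mathbf{T}_1(\mathbf{y}_{(p)}))-C_q(\mathbf{y}_{(p)})\big)\leqs 0$ must itself be an equality, forcing $\tilde{C}_q(\mathbf{T}_1(\mathbf{y}_{(p)}))=C_q(\mathbf{y}_{(p)})=\tilde{C}_q(\mathbf{y}_{(p)})$. But then $\mathbf{y}_{(p)}$ attains the minimum value of the strictly convex surrogate, whose minimizer is unique and equal to $\mathbf{T}_1(\mathbf{y}_{(p)})$; hence $\mathbf{T}_1(\mathbf{y}_{(p)})=\mathbf{y}_{(p)}$. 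The main obstacle I anticipate is precisely the concavity bridge of the second step: it is the device that lets the quadratic surrogate's descent be transferred to the nonsmooth $q$-th power cost, and one must keep the weights $b_i^{q-2}$ manifestly well defined (this is exactly the hypothesis $\mathbf{y}_{(p)}\notin\{\mathbf{x}_i\}$ that evades the singularity) and confirm that $g(t)=t^{q/2}$ remains concave at the endpoint $q=1$, where it is $\sqrt{t}$. The equality characterization itself is comparatively routine once the surrogate's strict convexity is in hand.
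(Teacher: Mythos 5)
Your proof is correct. It shares the paper's overall scaffolding --- both arguments pass through the IRLS surrogate $\tilde{C}_q$ of (\ref{eqn:approxcqgen}), identify $\mathbf{T}_1(\mathbf{y}_{(p)})$ in (\ref{eqn:lqwa3}) as its unique minimizer, and use $\tilde{C}_q(\mathbf{y}_{(p)})=C_q(\mathbf{y}_{(p)})$ --- but the bridge from the surrogate value back to the true cost is genuinely different. The paper invokes Lemma \ref{lem:nonincreasing} (with $a_i=\|\eta_i(\mathbf{y}_{(p)}-\mathbf{x}_i)\|$, $b_i=\|\eta_i(\mathbf{T}_1(\mathbf{y}_{(p)})-\mathbf{x}_i)\|$ and $n=2$), whose proof rests on convexity in the \emph{exponent}: $g(t)=\sum_i a_i^{q-t}b_i^t$ is convex in $t$, so $g(2)<g(0)$ forces $g(q)<g(0)$. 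You instead use the tangent-line inequality for the concave map $t\mapsto t^{q/2}$ in the \emph{argument}, i.e.\ the standard majorize--minimize device $a_i^q\leqs b_i^q+\frac{q}{2}b_i^{q-2}(a_i^2-b_i^2)$, summed with weights $\eta_i^q$. Both are elementary, but your route buys something the paper's does not state: the quantitative descent estimate $C_q(\mathbf{T}_1(\mathbf{y}_{(p)}))-C_q(\mathbf{y}_{(p)})\leqs\frac{q}{2}\bigl(\tilde{C}_q(\mathbf{T}_1(\mathbf{y}_{(p)}))-\tilde{C}_q(\mathbf{y}_{(p)})\bigr)$, which ties the decrease of $C_q$ linearly to the decrease of the surrogate and could be reused in rate analyses; conversely, the paper's lemma is stated for general $0<q<n$ and its equality analysis pins down $a_i=b_i$ for every $i$, a slightly stronger conclusion than the theorem needs. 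The two details you flag both check out: the tangent inequality is anchored at the base point $b_i^2>0$ (exactly the hypothesis $\mathbf{y}_{(p)}\notin\{\mathbf{x}_i\}_{i=1}^m$) and remains valid even if $a_i=0$, since a concave function lies below its tangents on all of $[0,\infty)$; and at $q=1$ the map is $\sqrt{t}$, still concave, so the unified treatment of $1\leqs q<2$ is sound. Your equality case --- tightness forces $\tilde{C}_q(\mathbf{T}_1(\mathbf{y}_{(p)}))=\tilde{C}_q(\mathbf{y}_{(p)})$, whence $\mathbf{y}_{(p)}$ attains the minimum of the strictly convex surrogate and must coincide with its unique minimizer --- is logically the same as the paper's.
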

The proof is provided in Supplementary \ref{proof:nonincreasing}.

\subsection{Characterization of Subgradients and Minimum}
\label{sec:charmin}
Second, we establish the de-singularity subgradient and characterize the minimum point $\mathbf{M}$ of $C_q(\mathbf{y})$ in (\ref{eqn:lqmediangenreal}). The following corollary is a direct result of Theorem \ref{thm:nonincreasing}.

\begin{corollary}
\label{cor:charnonsing}
If $\mathbf{y}_{(p)}\notin \{\mathbf{x}_i\}_{i=1}^m$, then $\mathbf{T}_1( \mathbf{y}_{(p)})= \mathbf{y}_{(p)} \Leftrightarrow$ $\mathbf{y}_{(p)}$ is the minimum point $\mathbf{M}$ of $C_q(\mathbf{y})$.
\end{corollary}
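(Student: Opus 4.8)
The plan is to establish the two implications of the equivalence separately. The backward implication is the one that follows \emph{directly} from Theorem~\ref{thm:nonincreasing}, while the forward implication rests on identifying the fixed points of $\mathbf{T}_1$ with the critical points of $C_q$ and then invoking strict convexity. Throughout, the standing hypothesis $\mathbf{y}_{(p)}\notin\{\mathbf{x}_i\}_{i=1}^m$ is essential: it guarantees that every weight $\eta_i^q\|\mathbf{y}_{(p)}-\mathbf{x}_i\|^{q-2}$ is finite and strictly positive, so that $C_q$ is differentiable at $\mathbf{y}_{(p)}$ and the denominator in (\ref{eqn:lqwa3}) does not vanish.

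For the backward direction ($\mathbf{y}_{(p)}=\mathbf{M}\Rightarrow\mathbf{T}_1(\mathbf{y}_{(p)})=\mathbf{y}_{(p)}$) I would argue by contraposition straight from Theorem~\ref{thm:nonincreasing}. Setting $\mathbf{y}_{(p)}=\mathbf{M}$, which by assumption avoids the data points, the theorem applies; if $\mathbf{T}_1(\mathbf{M})\ne\mathbf{M}$ then the strict-descent clause forces $C_q(\mathbf{T}_1(\mathbf{M}))<C_q(\mathbf{M})$, contradicting that $\mathbf{M}$ is the unique minimizer of $C_q$. Hence $\mathbf{T}_1(\mathbf{M})=\mathbf{M}$. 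This is the clean ``direct consequence'' advertised in the text.

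For the forward direction I would first record the gradient-step representation of the transform: exactly as in (\ref{eqn:gradientform}), one rewrites (\ref{eqn:lqwa3}) as $\mathbf{T}_1(\mathbf{y}_{(p)})=\mathbf{y}_{(p)}-\vartheta\,\nabla C_q(\mathbf{y}_{(p)})$ with step size $\vartheta=1/\!\big(q\sum_{i=1}^m\eta_i^q\|\mathbf{y}_{(p)}-\mathbf{x}_i\|^{q-2}\big)>0$. Because $\vartheta>0$, the fixed-point condition $\mathbf{T}_1(\mathbf{y}_{(p)})=\mathbf{y}_{(p)}$ is equivalent to the critical-point condition $\nabla C_q(\mathbf{y}_{(p)})=\mathbf{0}$. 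Since $C_q$ in (\ref{eqn:lqmediangenreal}) is convex and differentiable at $\mathbf{y}_{(p)}$, the first-order optimality condition then makes $\mathbf{y}_{(p)}$ a global minimizer, and strict convexity (uniqueness of the minimum) identifies it with $\mathbf{M}$.

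The step I expect to be the subtle point is the forward direction, precisely because it is \emph{not} a consequence of Theorem~\ref{thm:nonincreasing} alone: the theorem's equality clause tells us when $\mathbf{T}_1$ fixes a point but not that such a fixed point must be the global minimum. Bridging that gap requires the gradient-form identity above together with the convexity and uniqueness of the minimizer of $C_q$. The only technical care needed is to confirm that the rewriting into a gradient step is legitimate at $\mathbf{y}_{(p)}$, which is exactly where the hypothesis $\mathbf{y}_{(p)}\notin\{\mathbf{x}_i\}_{i=1}^m$ (positivity of $\vartheta$ and differentiability of $C_q$) is used.
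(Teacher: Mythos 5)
Your proof is correct and takes essentially the same route as the paper's: the paper's one-line argument is exactly your forward direction, namely that $\mathbf{T}_1(\mathbf{y}_{(p)})=\mathbf{y}_{(p)}$ is equivalent to $\nabla C_q(\mathbf{y}_{(p)})=\mathbf{0}$ (the positive normalizing factor cannot vanish when $\mathbf{y}_{(p)}\notin\{\mathbf{x}_i\}_{i=1}^m$), combined with strict convexity of $C_q$, and this equivalence already handles both implications at once. Your separate backward argument via the strict-descent clause of Theorem~\ref{thm:nonincreasing} is a valid but unnecessary alternative to simply reading the gradient identity in the other direction.
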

The proof is provided in Supplementary \ref{proof:charnonsing}. If $\mathbf{y}_{(p)} \in \{\mathbf{x}_i\}_{i=1}^m$, then the minimum characterization relies on the subgradient(s) in $\partial C_q(\mathbf{y}_{(p)})$. Without loss of generality, suppose $\mathbf{y}_{(p)}=\mathbf{x}_k$. 
\begin{definition}[$\mathbf{q}$\textbf{-th Power De-singularity Subgradient}]
\label{defn:desinggrad}
Let $D_q(\mathbf{y})$ be the main component of $C_q(\mathbf{y})$ that excludes the term $\eta_k^q\| \mathbf{y}-\mathbf{x}_k  \|^q$. Then the $q$-th power de-singularity subgradient $\nabla D_q(\mathbf{y})$ of $C_q(\mathbf{y})$ is:
\begin{align}
\label{eqn:dq}
D_q(\mathbf{y})&\triangleq\sum_{i\ne k} \eta_i^q\| \mathbf{y}-\mathbf{x}_i  \|^q,\\
\label{eqn:dqgrad}
\nabla D_q(\mathbf{y})&{=}\sum_{i\ne k} q\eta_i^q\| \mathbf{y}{-}\mathbf{x}_i  \|^{q-2}(\mathbf{y}{-}\mathbf{x}_i),\quad 1\leqs q<2.
\end{align}
\end{definition}
Compared with the ordinary gradient (\ref{eqn:lqgradient}), the de-singularity subgradient (\ref{eqn:dqgrad}) does not require more computation but checking whether $\mathbf{y}=\mathbf{x}_k$ for the $k$-th summand, which can be done at the same time and in the same loop with summand computing. Hence the de-singularity subgradient will not increase computational complexity.

\begin{theorem}[Characterization of Subgradients and Minimum]
\label{thm:charsing}
Let $\mathbf{y}_{(p)}=\mathbf{x}_k$. Then 
\begin{align}
\label{eqn:subgrad}
\partial C_q(\mathbf{x}_k)&{=}\left\{  \begin{array}{@{}ll}
\{\nabla D_1(\mathbf{x}_k){+} \eta_k \mathbf{u}:  \forall\mathbf{u}, \|\mathbf{u}\|{\leqs} 1\}   &    \text{if}  \quad q{=}1   \\
\{\nabla D_q(\mathbf{x}_k)\} &    \text{if} \quad 1{<} q{<}2
\end{array} \right..
\end{align}
According to Fermat's rule, $\mathbf{x}_k$ is the minimum point $\mathbf{M}$ if and only if $\mathbf{0}\in \partial C_q(\mathbf{x}_k)$.
\end{theorem}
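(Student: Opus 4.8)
The plan is to split $C_q$ at the singular point into a piece that is classically differentiable there and the single offending term, read off the subdifferential of each piece, and recombine them by the convex sum rule. Writing $g_k(\mathbf{y})\triangleq \eta_k^q\|\mathbf{y}-\mathbf{x}_k\|^q$, we have $C_q=D_q+g_k$ with $D_q$ as in (\ref{eqn:dq}). Since the data points are distinct, $\|\mathbf{x}_k-\mathbf{x}_i\|>0$ for every $i\ne k$, so each summand of $D_q$ is smooth in a neighborhood of $\mathbf{x}_k$; hence $D_q$ is differentiable at $\mathbf{x}_k$ and $\partial D_q(\mathbf{x}_k)=\{\nabla D_q(\mathbf{x}_k)\}$ is the singleton given by (\ref{eqn:dqgrad}). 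All the action is therefore concentrated in the single term $g_k$.

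The subdifferential of $g_k$ at its center is exactly where the two regimes diverge, and computing it is the crux of the proof. For $q=1$, $g_k(\mathbf{y})=\eta_k\|\mathbf{y}-\mathbf{x}_k\|$ is a scaled Euclidean norm centered at $\mathbf{x}_k$; I would verify directly from Definition \ref{def:convsubdifferential} that $\mathbf{v}\in\partial g_1(\mathbf{x}_k)$ iff $\eta_k\|\mathbf{z}-\mathbf{x}_k\|\geqs \mathbf{v}^\top(\mathbf{z}-\mathbf{x}_k)$ for all $\mathbf{z}$, which by Cauchy--Schwarz is equivalent to $\|\mathbf{v}\|\leqs\eta_k$, giving $\partial g_1(\mathbf{x}_k)=\{\eta_k\mathbf{u}:\|\mathbf{u}\|\leqs 1\}$, a full ball. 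For $1<q<2$, by contrast, I claim $g_k$ is Fr\'echet-differentiable at $\mathbf{x}_k$ with \emph{vanishing} gradient: the difference quotient satisfies $\|\mathbf{h}\|^{-1}\,|g_k(\mathbf{x}_k+\mathbf{h})-g_k(\mathbf{x}_k)|=\eta_k^q\|\mathbf{h}\|^{q-1}\to 0$ as $\mathbf{h}\to\mathbf{0}$, precisely because $q-1>0$, so the limit-inferior condition of (\ref{eqn:frechetsubdifferential}) collapses to $\mathbf{0}$ and $\partial g_k(\mathbf{x}_k)=\{\mathbf{0}\}$. This differentiability of $\|\cdot\|^q$ at the origin for $q>1$ is the structural reason $C_q$ has a \emph{single} subgradient at a data point when $1<q<2$, whereas at $q=1$ it does not.

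With both pieces in hand I would close via the convex sum rule (Moreau--Rockafellar): because $C_q$, $D_q$, and $g_k$ are finite and continuous on all of $\mathbb{R}^d$, the qualification condition is trivially met and $\partial C_q(\mathbf{x}_k)=\partial D_q(\mathbf{x}_k)+\partial g_k(\mathbf{x}_k)$ holds with \emph{equality}, not merely inclusion. Substituting the two computations yields (\ref{eqn:subgrad}): the Minkowski sum $\{\nabla D_1(\mathbf{x}_k)+\eta_k\mathbf{u}:\|\mathbf{u}\|\leqs 1\}$ in the $q=1$ case, and the singleton $\{\nabla D_q(\mathbf{x}_k)\}$ in the $1<q<2$ case. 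The concluding assertion is then immediate from Fermat's rule: reading Definition \ref{def:convsubdifferential} at $\mathbf{v}=\mathbf{0}$, the condition $\mathbf{0}\in\partial C_q(\mathbf{x}_k)$ says exactly that $C_q(\mathbf{z})\geqs C_q(\mathbf{x}_k)$ for all $\mathbf{z}$, i.e. $\mathbf{x}_k=\mathbf{M}$.

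I expect the main obstacle to be the $1<q<2$ differentiability claim, namely pinning down $\partial g_k(\mathbf{x}_k)=\{\mathbf{0}\}$ rather than a nontrivial set: one must confirm that \emph{every} direction yields a zero directional derivative and that the full Fr\'echet limit (not merely the one-sided directional limits) vanishes, so that the point is genuinely a point of differentiability and not merely Gateaux-regular. The remaining two ingredients are routine once this is settled: the $q=1$ norm subdifferential is classical, and the sum-rule equality requires only the (here obvious) finiteness/continuity qualification, so neither should present real difficulty.
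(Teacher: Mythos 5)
Your proof is correct, and it takes a genuinely different route from the paper's. The paper argues via one-sided directional derivatives: it differentiates $\lambda\mapsto C_q(\mathbf{x}_k+\lambda\mathbf{z})$ for a unit direction $\mathbf{z}$, lets $\lambda\to 0^{+}$ to obtain $\nabla D_1(\mathbf{x}_k)^\top\mathbf{z}+\eta_k$ (for $q=1$) and $\nabla D_q(\mathbf{x}_k)^\top\mathbf{z}$ (for $1<q<2$), minimizes over $\mathbf{z}$, and characterizes the minimum by nonnegativity of all directional derivatives; the set identity (\ref{eqn:subgrad}) is then read off from these formulas, implicitly invoking the support-function description of the convex subdifferential, $\partial f(\mathbf{x})=\{\mathbf{v}:\mathbf{v}^\top\mathbf{z}\leqs f'(\mathbf{x};\mathbf{z})\ \forall\mathbf{z}\}$ --- a step the paper states rather tersely (``one can find that the subgradient sets \ldots are equivalent''). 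Your decomposition $C_q=D_q+g_k$ combined with the sum rule makes exactly that set equality explicit: the ball for $q=1$ is the classical norm subdifferential, and the singleton for $1<q<2$ follows from your estimate $\eta_k^q\|\mathbf{h}\|^{q-1}\to 0$, which is uniform in direction since the difference quotient depends only on $\|\mathbf{h}\|$ --- so the Gateaux-versus-Fr\'echet obstacle you flag at the end is already settled by your own displayed computation. Two small tightening remarks: you should note explicitly that $D_q$ and $g_k$ are themselves convex (each $\|\cdot-\mathbf{x}_i\|^q$ is the composition of the norm with the increasing convex map $t\mapsto t^q$, $q\geqs 1$), since the sum rule requires convexity of both summands; and because one summand is differentiable at $\mathbf{x}_k$ in each regime, the elementary additivity $\partial(f_1+f_2)(\mathbf{x})=\nabla f_1(\mathbf{x})+\partial f_2(\mathbf{x})$ for $f_1$ differentiable suffices, so the full Moreau--Rockafellar qualification machinery is not even needed. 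What the paper's route buys in exchange is that the same directional-derivative computation directly delivers the minimum characterization and feeds the descent mechanism of Theorem \ref{thm:singmin}; your route buys a cleaner and fully justified derivation of the subdifferential sets from textbook facts.
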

The proof is provided in Supplementary \ref{proof:charsing}. It is obvious that $\nabla D_q(\mathbf{x}_k)\in \partial C_q(\mathbf{x}_k)$ for all $1\leqs q<2$. This is why $\nabla D_q(\mathbf{x}_k)$ is termed a de-singularity subgradient: it removes the singularity and then becomes a subgradient. One can also see that (\ref{eqn:l1resi}) is a special case of (\ref{eqn:subgrad}) with $\mathbf{y}_{(p)}=\mathbf{x}_k$ and $\mathbf{u}=\mathbf{0}$. 

$-\nabla D_q(\mathbf{x}_k)$ can be interpreted as the resultant implemented on $\mathbf{x}_k$ towards other data points (see Figure \ref{fig:resultant}). When $q=1$ and $\|\nabla D_1(\mathbf{x}_k)\|$ is smaller than the intrinsic force $\eta_k$ implemented on $\mathbf{x}_k$, the system remains still and $C_q(\mathbf{x}_k)$ has reached its minimum. But when $1<q<2$, $\|\nabla D_q(\mathbf{x}_k)\|$ should be zero to keep the system still, no matter how large the intrinsic force $\eta_k$ is.

\begin{figure}[!htb]
\vspace{-5pt}
\centering
\includegraphics[width=0.8\columnwidth]{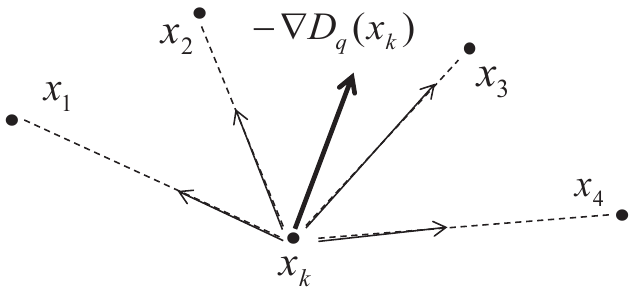}
\caption{$-\nabla D_q(\mathbf{x}_k)$ can be interpreted as the resultant implemented on $\mathbf{x}_k$ towards other data points.}
\label{fig:resultant}
\vspace{-5pt}
\end{figure}

We have characterized the minimum no matter whether the current iterate $\mathbf{y}_{(p)}$ hits the data points $\{\mathbf{x}_i\}_{i=1}^m$ or not. If $\mathbf{y}_{(p)}$ has not reached the minimum point $\mathbf{M}$, then another iteration should be implemented. When $\mathbf{y}_{(p)}\notin \{\mathbf{x}_i\}_{i=1}^m$, the general formula (\ref{eqn:lqwa3}) can be adopted. When $\mathbf{y}_{(p)}\in \{\mathbf{x}_i\}_{i=1}^m$, we will show later that $\mathbf{y}_{(p)}-\lambda \nabla D_q(\mathbf{y}_{(p)})$ can get away from the singular point and reduce the $q$-th power cost simultaneously.

\subsection{Update Formula with Coincidence}
\label{sec:updatecoin}
The update formula with coincidence $\mathbf{y}_{(p)}=\mathbf{x}_k$ should be set up separately for $q=1$ and $1<q<2$. The $q=1$ case is provided in Supplementary \ref{sec:updatecoinqone}. Now we turn to solve the $1<q<2$ case. The following theorem shows a way to reduce the $q$-th power cost when the iterate gets stuck in the singular point.
\begin{theorem}[De-singularity Subgradient Descent Method]
\label{thm:singmin}
If $\mathbf{y}_{(p)}=\mathbf{x}_k$ and $\|\nabla D_q(\mathbf{x}_k)\|> 0$, then there exists a $\lambda_*>0$ such that for any $0<\lambda\leqs\lambda_*$, $C_q(\mathbf{x}_k-\lambda \nabla D_q(\mathbf{x}_k))<C_q(\mathbf{x}_k)$, $1<q<2$.
\end{theorem}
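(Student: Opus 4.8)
The plan is to recognize Theorem~\ref{thm:singmin} as a one-sided descent statement and to reduce it to a directional-derivative computation, exploiting the crucial feature that the exponent $q>1$ makes the singular power term contribute nothing to first order. First I would split the cost about the stuck point by writing $C_q(\mathbf{y})=\eta_k^q\|\mathbf{y}-\mathbf{x}_k\|^q+D_q(\mathbf{y})$, isolating exactly the term that is non-smooth at $\mathbf{x}_k$; by Definition~\ref{defn:desinggrad}, $D_q$ collects the remaining summands, each of which involves a data point $\mathbf{x}_i$ with $i\ne k$.

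Next I would evaluate the cost difference along the descent ray. Abbreviating $\mathbf{g}\triangleq\nabla D_q(\mathbf{x}_k)$ and substituting $\mathbf{y}=\mathbf{x}_k-\lambda\mathbf{g}$, the singular term collapses to $\eta_k^q\|\lambda\mathbf{g}\|^q=\eta_k^q\|\mathbf{g}\|^q\lambda^q$, so that
\begin{equation}
C_q(\mathbf{x}_k-\lambda\mathbf{g})-C_q(\mathbf{x}_k)=\eta_k^q\|\mathbf{g}\|^q\lambda^q+\big(D_q(\mathbf{x}_k-\lambda\mathbf{g})-D_q(\mathbf{x}_k)\big).
\end{equation}
Since every data point $\mathbf{x}_i$ with $i\ne k$ is distinct from $\mathbf{x}_k$, each term of $D_q$ is smooth in a neighborhood of $\mathbf{x}_k$, so $D_q$ is continuously differentiable there with gradient $\mathbf{g}$ at $\mathbf{x}_k$. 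A first-order Taylor expansion then gives $D_q(\mathbf{x}_k-\lambda\mathbf{g})-D_q(\mathbf{x}_k)=-\lambda\mathbf{g}^\top\mathbf{g}+o(\lambda)=-\|\mathbf{g}\|^2\lambda+o(\lambda)$. Dividing the displayed difference by $\lambda>0$ and letting $\lambda\to0^+$, the term $\eta_k^q\|\mathbf{g}\|^q\lambda^{q-1}\to0$ because $q>1$, while the remainder $o(\lambda)/\lambda\to0$; hence the one-sided directional derivative equals $-\|\mathbf{g}\|^2$, which is strictly negative by the hypothesis $\|\nabla D_q(\mathbf{x}_k)\|>0$. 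Consequently there exists $\lambda_*>0$ with $C_q(\mathbf{x}_k-\lambda\mathbf{g})<C_q(\mathbf{x}_k)$ for all $0<\lambda\leqs\lambda_*$.

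The hard part is ensuring the order bookkeeping is airtight, namely that the singular contribution $\eta_k^q\|\mathbf{g}\|^q\lambda^q$ is genuinely dominated by the smooth linear decrease. This is precisely where $1<q<2$ is indispensable: because $q>1$ we have $\lambda^q=o(\lambda)$, so the non-smoothness of $C_q$ at $\mathbf{x}_k$ is of strictly higher order than the descent it competes against, and the sign of the first-order term is dictated entirely by $D_q$. I would emphasize the contrast with $q=1$, where $\lambda^q=\lambda$ places the singular term on the same order and one instead obtains $\lambda\|\mathbf{g}\|\big(\eta_k-\|\mathbf{g}\|\big)$ to first order; there descent requires the strictly stronger condition $\|\mathbf{g}\|>\eta_k$, which is exactly why the $q=1$ case needs the separate mixing rule in~(\ref{eqn:l1lbdopt}) and is treated apart. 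A minor remaining check is that $\mathbf{x}_k-\lambda\mathbf{g}$ stays inside the smoothness neighborhood of $D_q$ for $\lambda$ small, which holds automatically by continuity.
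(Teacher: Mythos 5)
Your proof is correct and takes essentially the same route as the paper's: both decompose $C_q$ into the singular term $\eta_k^q\lambda^q\|\nabla D_q(\mathbf{x}_k)\|^q$ plus the smooth part $D_q$, Taylor-expand $D_q$ along the ray, and observe that for $1<q<2$ the singular term is $o(\lambda)$ and hence dominated by the linear decrease $-\lambda\|\nabla D_q(\mathbf{x}_k)\|^2$. The only cosmetic difference is that the paper carries a second-order expansion with the explicit Hessian term $\frac{\lambda^2}{2}G(\mathbf{x}_k)$ (used afterwards to motivate the initial step size $\lambda_0$ in~(\ref{eqn:initlbd0})), whereas your first-order expansion with $o(\lambda)$ remainder already suffices for the theorem itself.
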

The proof is provided in Supplementary \ref{proof:singmin}. The key point of Theorem \ref{thm:singmin} is that the negative de-singularity subgradient part (resultant) $-\lambda \| \nabla D_q(\mathbf{x}_k) \|^2$ is of lower order infinitesimal than the positive singular part (resistance) $\eta_k^q\lambda^q\| \nabla D_q(\mathbf{x}_k)  \|^q$. Thus when $\lambda$ is sufficiently small, the resultant overcomes the resistance and drags the current iterate $\mathbf{y}_{(p)}$ out of the singular point and the $q$-th power cost is reduced simultaneously. An algorithm can be designed based on this mechanism. By omitting $\frac{\lambda}{2}G(\mathbf{x}_k)+o(\lambda)$ on the left side of (\ref{eqn:cqdecom4}), we can approximately start with the following $\lambda_0$:
\begin{align}
\label{eqn:initlbd0}
\lambda_0= \min\left\{\frac{1}{q}\eta_k^{-\frac{q}{q-1}}\| \nabla D_q(\mathbf{x}_k) \|^{\frac{2-q}{q-1}},1\right\}.
\end{align}
As long as $C_q(\mathbf{x}_k-\lambda_w \nabla D_q(\mathbf{x}_k))\geqs C_q(\mathbf{x}_k)$, we reduce $\lambda_w$ with a factor $\rho<1$: $\lambda_{w+1}\leftarrow \rho\lambda_w$, $w=0,1,\cdots$, until we find some $\lambda_*$ such that $C_q(\mathbf{x}_k-\lambda_* \nabla D_q(\mathbf{x}_k))< C_q(\mathbf{x}_k)$ (see Figure \ref{fig:escapesing}). According to Theorem \ref{thm:singmin}, this $\lambda_*$ is sure to be found. Then the next iterate is:
\begin{align}
\label{eqn:singnextiter}
\mathbf{y}_{(p+1)}=\mathbf{T}_2( \mathbf{x}_k)\triangleq\mathbf{x}_k-\lambda_* \nabla D_q(\mathbf{x}_k).
\end{align}
This strategy is very efficient to get away from the singular point: it takes only $2.75$ iterates in average in the worst case of our experiments (see Table \ref{tab:escapesing}). It can be even more efficient if we further reduce $\lambda_0$ and $\rho$. Besides, the choice of $\lambda_*$ can also be absorbed in momentum acceleration methods, such as Nesterov \cite{nester0,nester1} and Adam \cite{Adam}. Details can be found in the code link provided in the section of acknowledgments.

\begin{figure}[!htb]
\vspace{-5pt}
\centering
\includegraphics[width=0.8\columnwidth]{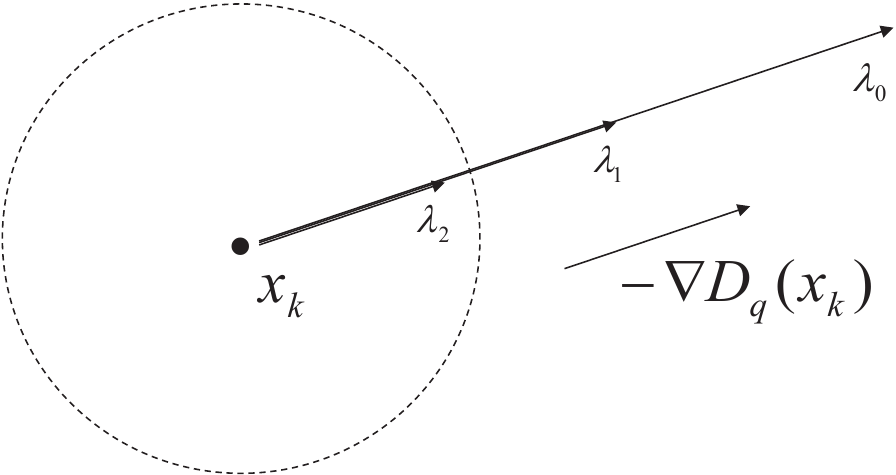}
\caption{Theorem \ref{thm:singmin} indicates that a small displacement from $\mathbf{x}_k$ towards $-\nabla D_q(\mathbf{x}_k)$ can reduce the $q$-th power cost. Thus we can start from a $\lambda_0$ and reduce it with a factor $\rho<1$ at each time, until $\mathbf{x}_k-\lambda_* \nabla D_q(\mathbf{x}_k)$ reduces the cost.}
\label{fig:escapesing}
\vspace{-5pt}
\end{figure}

\subsection{A Complete Proof of Convergence}
\label{sec:anaconvergence}
From (\ref{eqn:lqwa3}) and (\ref{eqn:singnextiter}), the $q$-th Power Weiszfeld Algorithm without Singularity ($q$PWAWS) can be designed as ($1<q<2$):
\begin{align}
\label{eqn:lqwagen}
\mathbf{y}_{(p+1)}{=}\mathbf{T}( \mathbf{y}_{(p)}){\triangleq}\left\{\begin{array}{l@{}l}
\mathbf{T}_1( \mathbf{y}_{(p)})=&\frac{\sum_{i=1}^m \eta_i^q\| \mathbf{y}_{(p)}-\mathbf{x}_i  \|^{q-2}\mathbf{x}_i}{\sum_{i=1}^m \eta_i^q\| \mathbf{y}_{(p)}-\mathbf{x}_i  \|^{q-2}} \\
&\qquad\text{if}\: \mathbf{y}_{(p)}\notin \{\mathbf{x}_i\}_{i=1}^m\\
\mathbf{T}_2( \mathbf{y}_{(p)})=&\mathbf{x}_k{-}\lambda_* \nabla D_q(\mathbf{x}_k) \\
&\qquad\text{if}\: \mathbf{y}_{(p)}=\mathbf{x}_k
\end{array} \right.,
\end{align}
where $\mathbf{T}_1( \mathbf{y}_{(p)})$ is the update formula without coincidence (\ref{eqn:lqwa3}) and $\mathbf{T}_2( \mathbf{y}_{(p)})$ is the update formula with coincidence (\ref{eqn:singnextiter}). Note that the $q=1$ algorithm without singularity has been proposed by \cite{kuhnl1median,l1median3}, but their convergence proofs are incomplete. Thus we focus on the proof of the $1<q<2$ case as well as fix their incomplete statements in this subsection. In calculation, the $q=1$ case and the $1<q<2$ case can be unified.

Starting with any initial point $\mathbf{y}_{(0)}$, the $q$PWAWS generates a sequence of iterates $\mathbf{y}_{(0)},\mathbf{y}_{(1)},\cdots,\mathbf{y}_{(p)}\cdots$. It is necessary to prove that the sequence converges to the minimum point $\mathbf{M}$ unless some $\mathbf{y}_{(p)}$ hits $\mathbf{M}$.

\begin{lemma}
\label{lem:convexhull}
The sequence $\{\mathbf{y}_{(p)}\}$ generated by $q$PWAWS (\ref{eqn:lqwagen}) visits each $\mathbf{x}_k\ne \mathbf{M}$ at most once and will not get stuck. Except for at most a finite set of iterates, $\{\mathbf{y}_{(p)}\}$ and $\mathbf{M}$ lie in the convex hull of the data points $\{\mathbf{x}_i\}_{i=1}^m$.
\end{lemma}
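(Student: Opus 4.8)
The plan is to make everything rest on a single driving fact: along the $q$PWAWS sequence the cost $C_q$ strictly decreases until an iterate equals $\mathbf{M}$. First I would establish this by splitting into the two branches of (\ref{eqn:lqwagen}). If $\mathbf{y}_{(p)}\notin\{\mathbf{x}_i\}_{i=1}^m$ and $\mathbf{y}_{(p)}\ne\mathbf{M}$, then Corollary \ref{cor:charnonsing} gives $\mathbf{T}_1(\mathbf{y}_{(p)})\ne\mathbf{y}_{(p)}$, so Theorem \ref{thm:nonincreasing} yields $C_q(\mathbf{y}_{(p+1)})<C_q(\mathbf{y}_{(p)})$. If $\mathbf{y}_{(p)}=\mathbf{x}_k\ne\mathbf{M}$, then Theorem \ref{thm:charsing} gives $\mathbf{0}\notin\partial C_q(\mathbf{x}_k)=\{\nabla D_q(\mathbf{x}_k)\}$ for $1<q<2$, hence $\|\nabla D_q(\mathbf{x}_k)\|>0$, and Theorem \ref{thm:singmin} supplies $\lambda_*>0$ with $C_q(\mathbf{T}_2(\mathbf{x}_k))<C_q(\mathbf{x}_k)$. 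Conversely, if $\mathbf{y}_{(p)}=\mathbf{M}$ (a data point or not) the minimum characterizations give $\|\nabla D_q\|=0$ or $\mathbf{T}_1=\mathrm{id}$ and the method halts. This establishes the ``will not get stuck'' assertion: the only fixed point of the iteration is $\mathbf{M}$, and at every non-minimizing iterate the cost strictly drops.

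The ``at most once'' claim then follows purely from strict monotonicity. If $\mathbf{y}_{(p_1)}=\mathbf{x}_k$ with $\mathbf{x}_k\ne\mathbf{M}$, the previous step gives $C_q(\mathbf{y}_{(p_1+1)})<C_q(\mathbf{x}_k)$, and since $\{C_q(\mathbf{y}_{(p)})\}$ is nonincreasing this propagates to $C_q(\mathbf{y}_{(p)})<C_q(\mathbf{x}_k)$ for all $p>p_1$. A second visit would require $C_q(\mathbf{y}_{(p)})=C_q(\mathbf{x}_k)$, which is impossible. Because there are only $m$ data points, this simultaneously bounds the number of times the branch $\mathbf{T}_2$ is ever invoked by at most $m$, a fact I will reuse for the last claim.

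For the convex-hull statement, write $\mathcal{H}=\mathrm{conv}\{\mathbf{x}_i\}_{i=1}^m$. I would first show $\mathbf{M}\in\mathcal{H}$ by a projection argument: if $\mathbf{M}\notin\mathcal{H}$, let $\mathbf{M}'$ be the Euclidean projection of $\mathbf{M}$ onto the closed convex set $\mathcal{H}$; the obtuse-angle inequality $\langle\mathbf{M}-\mathbf{M}',\mathbf{x}_i-\mathbf{M}'\rangle\leqs 0$ gives $\|\mathbf{M}-\mathbf{x}_i\|^2\geqs\|\mathbf{M}-\mathbf{M}'\|^2+\|\mathbf{M}'-\mathbf{x}_i\|^2$, and as $\mathbf{M}\ne\mathbf{M}'$ this is the strict bound $\|\mathbf{M}'-\mathbf{x}_i\|<\|\mathbf{M}-\mathbf{x}_i\|$ for every $i$, whence $C_q(\mathbf{M}')<C_q(\mathbf{M})$, contradicting minimality. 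For the iterates, the decisive observation is that each application of $\mathbf{T}_1$ returns a convex combination of the data points: when $\mathbf{y}_{(p)}\notin\{\mathbf{x}_i\}_{i=1}^m$ the weights $\eta_i^q\|\mathbf{y}_{(p)}-\mathbf{x}_i\|^{q-2}$ are strictly positive and normalized by their sum, so $\mathbf{T}_1(\mathbf{y}_{(p)})\in\mathcal{H}$. Hence the only iterates that can possibly fall outside $\mathcal{H}$ are the starting point $\mathbf{y}_{(0)}$ and the outputs of the $\mathbf{T}_2$ branch, and by the count above $\mathbf{T}_2$ fires at most $m$ times; the exceptional set is therefore finite.

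The main obstacle is organizing the logical dependence in the convex-hull part rather than any heavy computation: the finiteness of the exceptional iterates is \emph{not} intrinsic to $\mathbf{T}_2$ (its output need not lie in $\mathcal{H}$ for the chosen $\lambda_*$), but is inherited from the ``at most once'' conclusion, so the three claims must be proved in the stated order with monotonicity established first. A secondary care point is the strictness in the projection argument for $\mathbf{M}\in\mathcal{H}$, which must be justified from $\mathbf{M}\ne\mathbf{M}'$ rather than assumed; the remaining steps are routine consequences of the earlier theorems.
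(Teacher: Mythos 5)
Your proof is correct, and for the core of the lemma it coincides with the paper's argument: the strict decrease of $C_q$ along non-minimizing iterates is assembled from exactly the same ingredients (Corollary \ref{cor:charnonsing} plus Theorem \ref{thm:nonincreasing} on the $\mathbf{T}_1$ branch; Theorem \ref{thm:charsing} giving $\|\nabla D_q(\mathbf{x}_k)\|>0$ plus Theorem \ref{thm:singmin} on the $\mathbf{T}_2$ branch), the ``at most once'' claim follows from monotonicity just as in the paper, and the finiteness of the exceptional iterates is inherited from the bounded number of $\mathbf{T}_2$ invocations---your explicit bound of $m$ is a mild sharpening of the paper's ``at most finite times.'' The one genuinely different ingredient is your proof that $\mathbf{M}$ lies in the convex hull $\mathcal{H}$ of $\{\mathbf{x}_i\}_{i=1}^m$: you argue by projection, using the obtuse-angle inequality $\langle\mathbf{M}-\mathbf{M}',\mathbf{x}_i-\mathbf{M}'\rangle\leqs 0$ to get the strict bound $\|\mathbf{M}'-\mathbf{x}_i\|<\|\mathbf{M}-\mathbf{x}_i\|$ for every $i$ when $\mathbf{M}\ne\mathbf{M}'$, hence $C_q(\mathbf{M}')<C_q(\mathbf{M})$, a contradiction. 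The paper instead splits cases: if $\mathbf{M}\in\{\mathbf{x}_i\}_{i=1}^m$ the claim is trivial, and otherwise Corollary \ref{cor:charnonsing} together with the stationarity identity (\ref{eqn:lqwa4}) exhibits $\mathbf{M}=\mathbf{T}_1(\mathbf{M})$ as a convex combination of the data points with positive weights. Your projection route is more elementary and more robust: it needs no case split, does not rely on the fixed-point characterization or on differentiability of $C_q$ at $\mathbf{M}$, and works verbatim for any $q\geqs 1$; the paper's route is shorter given the machinery already in place and reuses an identity needed elsewhere. Your care points are also well placed---the output of $\mathbf{T}_2$ indeed need not lie in $\mathcal{H}$, so the finiteness of the exceptional set must come from the ``at most once'' count rather than from $\mathbf{T}_2$ itself, which is exactly the logical order the paper follows.
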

The proof is provided in Supplementary \ref{proof:convexhull}. It is also possible that the sequence $\{\mathbf{y}_{(p)}\}$ converges to the data points $\{\mathbf{x}_i\}_{i=1}^m$. In fact, $\mathbf{T}_1(\mathbf{y})$ is continuous in almost the whole $\mathbb{R}^d$ except for the data points $\{\mathbf{x}_i\}_{i=1}^m$ where it is discontinuous. However, $\{\mathbf{x}_i\}_{i=1}^m$ are removable discontinuities. Less general versions of this characteristic are mentioned in \cite{kuhnl1median,lqmean} but without proof. Thus we present the following lemma:
\begin{lemma}
\label{lem:contintheorem}
\begin{align}
\label{eqn:continlem}
 \lim_{\mathbf{y}\rightarrow \mathbf{x}_k}\mathbf{T}_1(\mathbf{y})=\mathbf{x}_k, \quad\forall 1\leqs k\leqs m, \quad\forall 1\leqs q<2.
\end{align}
\end{lemma}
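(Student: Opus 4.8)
The plan is to exploit the fact that the weight attached to $\mathbf{x}_k$ blows up as $\mathbf{y}\to\mathbf{x}_k$ while all the other weights stay bounded, so that the normalized average is pulled entirely onto $\mathbf{x}_k$. First I would isolate the singular ($i=k$) summand in both the numerator and the denominator of $\mathbf{T}_1(\mathbf{y})$, writing
\[
\mathbf{T}_1(\mathbf{y})=\frac{\eta_k^q\|\mathbf{y}-\mathbf{x}_k\|^{q-2}\mathbf{x}_k+\sum_{i\ne k}\eta_i^q\|\mathbf{y}-\mathbf{x}_i\|^{q-2}\mathbf{x}_i}{\eta_k^q\|\mathbf{y}-\mathbf{x}_k\|^{q-2}+\sum_{i\ne k}\eta_i^q\|\mathbf{y}-\mathbf{x}_i\|^{q-2}}.
\]
Since $\mathbf{y}\ne\mathbf{x}_k$ along the limit, I would then divide numerator and denominator by the dominant factor $\|\mathbf{y}-\mathbf{x}_k\|^{q-2}$ (equivalently, multiply through by $\|\mathbf{y}-\mathbf{x}_k\|^{2-q}$), obtaining
\[
\mathbf{T}_1(\mathbf{y})=\frac{\eta_k^q\mathbf{x}_k+\sum_{i\ne k}\eta_i^q\|\mathbf{y}-\mathbf{x}_i\|^{q-2}\|\mathbf{y}-\mathbf{x}_k\|^{2-q}\mathbf{x}_i}{\eta_k^q+\sum_{i\ne k}\eta_i^q\|\mathbf{y}-\mathbf{x}_i\|^{q-2}\|\mathbf{y}-\mathbf{x}_k\|^{2-q}}.
\]

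The heart of the argument is that every cross term $\|\mathbf{y}-\mathbf{x}_i\|^{q-2}\|\mathbf{y}-\mathbf{x}_k\|^{2-q}$ with $i\ne k$ tends to $0$. I would restrict $\mathbf{y}$ to the punctured ball of radius $r<\tfrac12\min_{i\ne k}\|\mathbf{x}_k-\mathbf{x}_i\|$ about $\mathbf{x}_k$, which is legitimate because the data points are distinct. On this ball the distances $\|\mathbf{y}-\mathbf{x}_i\|$ for $i\ne k$ are bounded below by a positive constant and above by a finite constant, so $\|\mathbf{y}-\mathbf{x}_i\|^{q-2}$ stays bounded (recall $q-2<0$). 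Meanwhile $\|\mathbf{y}-\mathbf{x}_k\|^{2-q}\to 0$ because $2-q>0$. Hence each cross term is bounded by a constant times $\|\mathbf{y}-\mathbf{x}_k\|^{2-q}$ and vanishes in the limit, and the bound depends on $\mathbf{y}$ only through $\|\mathbf{y}-\mathbf{x}_k\|$, so the estimate is uniform in the direction of approach.

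Passing to the limit, both sums over $i\ne k$ vanish, leaving $\mathbf{T}_1(\mathbf{y})\to \eta_k^q\mathbf{x}_k/\eta_k^q=\mathbf{x}_k$, which is the claim; the same computation covers $q=1$ (where $q-2=-1$) and $1<q<2$ without modification. I do not anticipate a genuine obstacle here: the only point requiring care is the bookkeeping guaranteeing that the non-singular weights remain bounded near $\mathbf{x}_k$, which is exactly where distinctness of the data points is used, together with the uniformity of the vanishing estimate. This lemma then rigorously justifies calling the data points $\{\mathbf{x}_i\}_{i=1}^m$ removable discontinuities of $\mathbf{T}_1$, a property previously asserted without proof, and it feeds directly into the convergence analysis that handles the case where $\{\mathbf{y}_{(p)}\}$ accumulates at a data point.
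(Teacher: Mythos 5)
Your proposal is correct and follows essentially the same route as the paper's proof: both multiply numerator and denominator by $\|\mathbf{y}-\mathbf{x}_k\|^{2-q}$ so that the denominator tends to $\eta_k^q>0$ while the non-singular terms, whose weights $\|\mathbf{y}-\mathbf{x}_i\|^{q-2}$ ($i\ne k$) remain bounded near $\mathbf{x}_k$ by distinctness of the data points, are killed by the factor $\|\mathbf{y}-\mathbf{x}_k\|^{2-q}\to 0$. Your explicit restriction to a small punctured ball merely makes rigorous the boundedness that the paper's computation (applied to $\mathbf{T}_1(\mathbf{y})-\mathbf{x}_k$ rather than $\mathbf{T}_1(\mathbf{y})$ itself) leaves implicit.
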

The proof is provided in Supplementary \ref{proof:contintheorem}. Based on this lemma, the operator $\mathbf{T}_1$ can be extended to the removable discontinuous data points:
\begin{align}
\label{eqn:t1extend}
\mathbf{T}_1(\mathbf{x}_k)\triangleq\mathbf{x}_k, \quad \forall 1\leqs k\leqs m, \quad\forall 1\leqs q<2.
\end{align}
From (\ref{eqn:t1extend}) and Corollary \ref{cor:charnonsing}, all the fixed points of $\mathbf{T}_1$ are characterized as:
\begin{align}
\label{eqn:t1fixedpoint}
\mathbf{T}_1(\mathbf{y})=\mathbf{y}\quad\Longleftrightarrow \quad\mathbf{y}\in \{\mathbf{x}_i\}_{i=1}^m\bigcup \{\mathbf{M}\}.
\end{align}

When $\mathbf{y}$ gets into some neighborhood of a data point $\mathbf{x}_k\ne \mathbf{M}$, the operator $\mathbf{T}_1$ will
eventually drive it out of the neighborhood. This property of $\mathbf{T}_1$ helps to find the real minimum point. The following lemma is a nontrivial extension of a similar conclusion for the $1$-st power case in \cite{kuhnl1median}.

\begin{lemma}
\label{lem:kickout}
If $\mathbf{x}_k\ne \mathbf{M}$, then there exists some $\delta_0>0$ such that for all $\mathbf{y}\in B(\mathbf{x}_k,\delta_0)$, $\mathbf{T}_1^{s-1}(\mathbf{y})\in B(\mathbf{x}_k,\delta_0)$ and $\mathbf{T}_1^{s}(\mathbf{y})\notin B(\mathbf{x}_k,\delta_0)$ for some $s$. $B(\mathbf{x}_k,\delta_0)$ denotes an open $\delta_0$-ball centered at $\mathbf{x}_k$. $\mathbf{T}_1^{s}$ means implementing $\mathbf{T}_1$ for $s$ times, and $s$ depends on $\mathbf{y}$.
\end{lemma}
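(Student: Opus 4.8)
The plan is to analyze the displacement $\mathbf{T}_1(\mathbf{y})-\mathbf{x}_k$ for $\mathbf{y}$ near $\mathbf{x}_k$ and to show that, as long as the iterate stays in a small enough ball, each application of $\mathbf{T}_1$ expands the distance to $\mathbf{x}_k$ by a factor strictly larger than $1$; this forces a geometric blow-up of the radius and hence escape from the ball in finitely many steps. I will treat the new case $1<q<2$ in detail and indicate the minor refinement needed for $q=1$. The driving fact is that since $\mathbf{x}_k\ne \mathbf{M}$, Theorem \ref{thm:charsing} together with Fermat's rule gives $\mathbf{0}\notin\partial C_q(\mathbf{x}_k)=\{\nabla D_q(\mathbf{x}_k)\}$, so $\|\nabla D_q(\mathbf{x}_k)\|>0$.

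First I would isolate the singular behaviour of $\mathbf{T}_1$. Because the $i=k$ summand cancels in the numerator, I write
\[
\mathbf{T}_1(\mathbf{y})-\mathbf{x}_k=\frac{N(\mathbf{y})}{S(\mathbf{y})},\quad N(\mathbf{y})=\sum_{i\ne k}\eta_i^q\|\mathbf{y}-\mathbf{x}_i\|^{q-2}(\mathbf{x}_i-\mathbf{x}_k),\quad S(\mathbf{y})=\sum_{i=1}^m\eta_i^q\|\mathbf{y}-\mathbf{x}_i\|^{q-2}.
\]
The numerator is continuous at $\mathbf{x}_k$ with $N(\mathbf{x}_k)=-\tfrac1q\nabla D_q(\mathbf{x}_k)\ne\mathbf{0}$, so $\|N(\mathbf{y})\|\geqs \tfrac{1}{2q}\|\nabla D_q(\mathbf{x}_k)\|$ on a small ball. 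Splitting $S(\mathbf{y})=\eta_k^q\|\mathbf{y}-\mathbf{x}_k\|^{q-2}+\tilde{S}(\mathbf{y})$ with $\tilde{S}$ bounded near $\mathbf{x}_k$, and using $\|\mathbf{y}-\mathbf{x}_k\|^{q-2}\to\infty$, I get $S(\mathbf{y})\leqs 2\eta_k^q\|\mathbf{y}-\mathbf{x}_k\|^{q-2}$ once $\|\mathbf{y}-\mathbf{x}_k\|$ is small. Combining these yields, on some $B(\mathbf{x}_k,\delta_2)$,
\[
\|\mathbf{T}_1(\mathbf{y})-\mathbf{x}_k\|\geqs c\,\|\mathbf{y}-\mathbf{x}_k\|^{2-q},\qquad c\triangleq\frac{\|\nabla D_q(\mathbf{x}_k)\|}{4q\eta_k^q}>0.
\]

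Next I set up the escape. Choose $\delta_0=\min\{\delta_2,\ \tfrac12\min_{i\ne k}\|\mathbf{x}_i-\mathbf{x}_k\|,\ \tfrac12 c^{1/(q-1)}\}$, so the ball contains no other data point and $\delta_0<c^{1/(q-1)}$. For $\mathbf{y}\ne\mathbf{x}_k$ put $r_n=\|\mathbf{T}_1^n(\mathbf{y})-\mathbf{x}_k\|$, with $r_0>0$; the displacement bound is positive so no iterate equals $\mathbf{x}_k$ and each $\mathbf{T}_1^n(\mathbf{y})$ is well defined. Whenever $r_n<\delta_0$ the bound gives
\[
\frac{r_{n+1}}{r_n}\geqs c\,r_n^{1-q}>c\,\delta_0^{-(q-1)}\triangleq\gamma>1,
\]
using $q-1>0$. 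By induction, as long as $r_0,\dots,r_n$ all remain below $\delta_0$ we have $r_{n+1}\geqs\gamma^{n+1}r_0\to\infty$, which is impossible. Hence there is a finite first index $s$ with $r_{s-1}<\delta_0$ and $r_s\geqs\delta_0$, i.e. $\mathbf{T}_1^{s-1}(\mathbf{y})\in B(\mathbf{x}_k,\delta_0)$ and $\mathbf{T}_1^{s}(\mathbf{y})\notin B(\mathbf{x}_k,\delta_0)$, with $s$ depending on $\mathbf{y}$ through $r_0$, as claimed.

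The main obstacle is the asymptotic estimate of the second step and the correct placement of $\delta_0$. The subtle point is that the leading-order radial map $r\mapsto c\,r^{2-q}$ has an \emph{attracting} fixed point at $\rho_*=c^{1/(q-1)}$, so one might wrongly expect the radii to converge to $\rho_*$ rather than escape; the resolution is to keep $\delta_0$ strictly inside $\rho_*$, which converts the per-step ratio into a constant $\gamma>1$ and produces genuine geometric growth. For $1<q<2$ the crude factor-$2$ bounds suffice, because the exponent $1-q<0$ makes the ratio $c\,r_n^{1-q}$ blow up regardless of the constant $c$. For $q=1$ this slack is no longer harmless: one must instead use the exact limit $\|\mathbf{T}_1(\mathbf{y})-\mathbf{x}_k\|/\|\mathbf{y}-\mathbf{x}_k\|\to\|\nabla D_1(\mathbf{x}_k)\|/\eta_k$, which exceeds $1$ precisely because $\mathbf{0}\notin\partial C_1(\mathbf{x}_k)$ forces $\|\nabla D_1(\mathbf{x}_k)\|>\eta_k$, recovering a constant escape ratio $>1$ and the same conclusion.
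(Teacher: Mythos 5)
Your proposal is correct, and at its core it runs on the same mechanism as the paper's proof: since $\mathbf{x}_k\ne\mathbf{M}$, Theorem \ref{thm:charsing} gives $\|\nabla D_q(\mathbf{x}_k)\|>0$; the non-singular part of the numerator of $\mathbf{T}_1(\mathbf{y})-\mathbf{x}_k$ is continuous and bounded away from zero near $\mathbf{x}_k$; the denominator is dominated by the singular weight $\eta_k^q\|\mathbf{y}-\mathbf{x}_k\|^{q-2}$; and the resulting per-step expansion forces geometric growth of $\|\mathbf{T}_1^n(\mathbf{y})-\mathbf{x}_k\|$ and hence a first exit time $s$. The difference is in the bookkeeping, and it is worth recording. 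The paper decomposes $\mathbf{T}_1(\mathbf{y})-\mathbf{x}_k$ into a $-\nabla D_q(\mathbf{y})/q$ term plus a weight-deficiency term and runs an $\epsilon$--$\delta$ argument with three radii $\delta_1,\delta_2,\delta_3$ to get the linear expansion $\|\mathbf{T}_1(\mathbf{y})-\mathbf{x}_k\|>(1+\epsilon)\|\mathbf{y}-\mathbf{x}_k\|$ uniformly on the ball; you instead bound numerator and denominator separately with factor-$2$ slack, obtaining the power-law bound $\|\mathbf{T}_1(\mathbf{y})-\mathbf{x}_k\|\geqs c\,\|\mathbf{y}-\mathbf{x}_k\|^{2-q}$, and convert it into a uniform ratio $\gamma\geqs 2^{q-1}>1$ by capping $\delta_0$ strictly below $c^{1/(q-1)}$. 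Your cap plays exactly the role of the paper's $\delta_3$ in (\ref{eqn:delta3choose}), which is likewise of the form $(\mathrm{const})^{1/(q-1)}$; your remark that the comparison map $r\mapsto c\,r^{2-q}$ has an attracting fixed point at $c^{1/(q-1)}$ makes explicit why such a cap is unavoidable, something the paper leaves implicit. Your bound is also sharper inside the ball (it exhibits the superlinear escape consonant with Lemma \ref{lem:orderinfinitesimal} and Theorem \ref{thm:rateconver}), though either form suffices for Theorem \ref{thm:contheorem}. Two further points in your favor: you explicitly restrict to $\mathbf{y}\ne\mathbf{x}_k$ and check that the iterates stay off the data points so $\mathbf{T}_1^n$ is well defined --- a caveat the paper's proof leaves tacit, and one that matters because the extension (\ref{eqn:t1extend}) makes $\mathbf{x}_k$ itself a fixed point; and your $q=1$ treatment via the exact limit $\|\nabla D_1(\mathbf{x}_k)\|/\eta_k>1$ (cf.\ (\ref{eqn:rateconverproof3})) is the right substitute, since $\mathbf{0}\notin\partial C_1(\mathbf{x}_k)$ indeed forces $\|\nabla D_1(\mathbf{x}_k)\|>\eta_k$ by (\ref{eqn:subgrad}), whereas the paper's displayed $\delta_3$ is only meaningful for $1<q<2$.
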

The proof is provided in Supplementary \ref{proof:kickout}.

\begin{theorem}[Convergence Theorem]
\label{thm:contheorem}
Starting from \textbf{any initial point} $\mathbf{y}_{(0)}$, if the sequence $\{\mathbf{y}_{(p)}\}$ generated by $q$PWAWS (\ref{eqn:lqwagen}) does not hit $\mathbf{M}$, then $\mathbf{y}_{(p)}\rightarrow \mathbf{M}$. If $\mathbf{y}_{(p)}$ hits $\mathbf{M}$, the characterization of minimum (Theorem \ref{thm:charsing}) ensures that this hit could be recognized and the algorithm would be stopped.
\end{theorem}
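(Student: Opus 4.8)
The plan is to combine the monotone cost decrease with the structural lemmas (\ref{lem:convexhull})--(\ref{lem:kickout}) so that the accumulation points of $\{\mathbf{y}_{(p)}\}$ are forced to be fixed points of the \emph{extended} operator $\mathbf{T}_1$, and then to rule out every such fixed point except $\mathbf{M}$. First I would show that $\{C_q(\mathbf{y}_{(p)})\}$ is strictly decreasing and bounded below. When $\mathbf{y}_{(p)}\notin\{\mathbf{x}_i\}_{i=1}^m$ the step is $\mathbf{T}_1$ and Theorem~\ref{thm:nonincreasing} gives strict decrease unless $\mathbf{y}_{(p)}=\mathbf{M}$; when $\mathbf{y}_{(p)}=\mathbf{x}_k\ne\mathbf{M}$, Theorem~\ref{thm:charsing} yields $\nabla D_q(\mathbf{x}_k)\ne\mathbf{0}$ (since $\mathbf{0}\notin\partial C_q(\mathbf{x}_k)$), so Theorem~\ref{thm:singmin} applies to the step $\mathbf{T}_2$ and again gives strict decrease. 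As $C_q\ge C_q(\mathbf{M})$, the values converge to some $C^*\ge C_q(\mathbf{M})$. By Lemma~\ref{lem:convexhull} only finitely many coincidences occur, so there is an index $P_0$ after which every step is $\mathbf{T}_1$ and all iterates lie in the compact convex hull of $\{\mathbf{x}_i\}_{i=1}^m$; hence $\{\mathbf{y}_{(p)}\}$ is bounded, its accumulation set is nonempty, and by continuity of $C_q$ together with monotonicity every accumulation point has cost exactly $C^*$.

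Next I would use Lemma~\ref{lem:contintheorem} and the extension (\ref{eqn:t1extend}) to treat $\mathbf{T}_1$ as continuous on the whole convex hull, with fixed-point set exactly $\{\mathbf{x}_i\}_{i=1}^m\cup\{\mathbf{M}\}$ by (\ref{eqn:t1fixedpoint}). Let $\mathbf{y}_*$ be any accumulation point, $\mathbf{y}_{(p_j)}\to\mathbf{y}_*$. Continuity gives $\mathbf{y}_{(p_j+1)}=\mathbf{T}_1(\mathbf{y}_{(p_j)})\to\mathbf{T}_1(\mathbf{y}_*)$, so $\mathbf{T}_1(\mathbf{y}_*)$ is also an accumulation point with $C_q(\mathbf{T}_1(\mathbf{y}_*))=C^*=C_q(\mathbf{y}_*)$. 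If $\mathbf{y}_*\notin\{\mathbf{x}_i\}_{i=1}^m$, the equality clause of Theorem~\ref{thm:nonincreasing} forces $\mathbf{T}_1(\mathbf{y}_*)=\mathbf{y}_*$, whence $\mathbf{y}_*=\mathbf{M}$ by Corollary~\ref{cor:charnonsing}. It therefore remains to exclude $\mathbf{y}_*=\mathbf{x}_k$ with $\mathbf{x}_k\ne\mathbf{M}$, and this is the step I expect to be the main obstacle, precisely the point where earlier Weiszfeld-type proofs are incomplete: the continuous extension makes each data point a fixed point, so a non-optimal $\mathbf{x}_k$ is a \emph{spurious} fixed point that monotonicity alone cannot discard.

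To dispose of this case I would argue by contradiction using the kick-out Lemma~\ref{lem:kickout}. Suppose $\mathbf{x}_k\ne\mathbf{M}$ is an accumulation point and let $\delta_0$ be as in the lemma, shrunk so that $\overline{B(\mathbf{x}_k,\delta_0)}$ contains no other data point and not $\mathbf{M}$. Take a subsequence $\mathbf{y}_{(p_j)}\to\mathbf{x}_k$ inside $B(\mathbf{x}_k,\delta_0)$ (discarding the at most one index where $\mathbf{y}_{(p_j)}=\mathbf{x}_k$, by Lemma~\ref{lem:convexhull}). For each $j$ the lemma supplies a first exit time $s_j$ with $\mathbf{y}_{(p_j+s_j-1)}\in B(\mathbf{x}_k,\delta_0)$ and $\mathbf{y}_{(p_j+s_j)}=\mathbf{T}_1(\mathbf{y}_{(p_j+s_j-1)})\notin B(\mathbf{x}_k,\delta_0)$. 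Passing to a further subsequence, $\mathbf{y}_{(p_j+s_j-1)}\to\mathbf{a}\in\overline{B(\mathbf{x}_k,\delta_0)}$ and, by continuity, $\mathbf{y}_{(p_j+s_j)}\to\mathbf{T}_1(\mathbf{a})$ with $\|\mathbf{T}_1(\mathbf{a})-\mathbf{x}_k\|\ge\delta_0$. Monotonicity squeezes the costs of both $\mathbf{y}_{(p_j+s_j-1)}$ and $\mathbf{y}_{(p_j+s_j)}$ down to $C^*$, so $C_q(\mathbf{a})=C_q(\mathbf{T}_1(\mathbf{a}))=C^*$.

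Now $\mathbf{a}\ne\mathbf{x}_k$ (otherwise $\mathbf{T}_1(\mathbf{a})=\mathbf{x}_k$, contradicting $\|\mathbf{T}_1(\mathbf{a})-\mathbf{x}_k\|\ge\delta_0$), and by the choice of $\delta_0$ the point $\mathbf{a}$ is neither any data point nor $\mathbf{M}$; hence $\mathbf{a}$ is not a fixed point of $\mathbf{T}_1$, and the strict clause of Theorem~\ref{thm:nonincreasing} gives $C_q(\mathbf{T}_1(\mathbf{a}))<C_q(\mathbf{a})$, i.e.\ $C^*<C^*$, a contradiction. Thus no non-optimal data point is an accumulation point, so $\mathbf{M}$ is the unique accumulation point of the bounded sequence and $\mathbf{y}_{(p)}\to\mathbf{M}$. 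Finally, the case where the sequence does hit $\mathbf{M}$ is immediate: Theorem~\ref{thm:charsing} lets us test whether $\mathbf{0}\in\partial C_q(\mathbf{y}_{(p)})$ (equivalently whether $\mathbf{y}_{(p)}$ is a fixed point of the update), so the hit is detected and the algorithm terminates.
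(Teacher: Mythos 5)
Your proposal is correct, and up to the final contradiction it follows the paper's proof step for step: monotone strict decrease of $C_q$ (Theorems \ref{thm:nonincreasing}, \ref{thm:charsing}, \ref{thm:singmin}), compactness via Lemma \ref{lem:convexhull} with only finitely many coincidences, continuity of the extended $\mathbf{T}_1$ from Lemma \ref{lem:contintheorem}, the identification of accumulation points with the fixed-point set $\{\mathbf{x}_i\}_{i=1}^m\cup\{\mathbf{M}\}$ via the cost-equality argument, the use of Lemma \ref{lem:kickout} to extract pre-exit and post-exit subsequences, and uniqueness of the accumulation point to upgrade subsequential to full convergence. Where you genuinely diverge is the endgame. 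The paper shows that the two limits $\mathbf{y}_{*1}$ (of pre-exit iterates) and $\mathbf{y}_{*2}$ (of post-exit iterates) coincide and form a fixed point at distance \emph{exactly} $\delta_0$ from $\mathbf{x}_k$, then repeats the construction at every radius $0<\delta<\delta_0$ to manufacture infinitely many distinct fixed points, contradicting the finiteness of $\{\mathbf{x}_i\}_{i=1}^m\cup\{\mathbf{M}\}$. You instead shrink $\delta_0$ beforehand so that $\overline{B(\mathbf{x}_k,\delta_0)}$ avoids all other data points and $\mathbf{M}$, and then obtain a one-shot contradiction at the single limit $\mathbf{a}$: it has cost $C^*$ equal to that of $\mathbf{T}_1(\mathbf{a})$, yet by (\ref{eqn:t1fixedpoint}) it cannot be a fixed point, so the strict clause of Theorem \ref{thm:nonincreasing} forces $C^*<C^*$. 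Your shrinking step is legitimate even treating Lemma \ref{lem:kickout} as a black box (an orbit exiting $B(\mathbf{x}_k,\delta_0)$ has a first exit from any smaller ball, with its pre-exit iterate inside), and your variant buys a more economical argument — no family of radii and no counting of fixed points — at the modest cost of the extra hygiene condition on $\delta_0$; the paper's version needs no shrinking, since its contradiction is purely cardinality-based. Your handling of the case where the sequence hits $\mathbf{M}$ matches the paper's.
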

The proof is provided in Supplementary \ref{proof:contheorem}.

\textbf{Some notes}: 1. In \cite{kuhnl1median,l1median3}, the proof of convergence assumes that \\ $\lim_{v\rightarrow \infty}\mathbf{y}_{(p_v)}=\mathbf{x}_k$ but $\mathbf{T}_1(\mathbf{y}_{(p_v)})\notin B(\mathbf{x}_k,\delta_0)$ for all $v$. However, Lemma \ref{lem:kickout} indicates that $\mathbf{T}_1$ may be implemented for $s>1$ times to drive $\mathbf{y}_{(p_v)}$ out of $B(\mathbf{x}_k,\delta_0)$. Besides, this $s$ also depends on $\mathbf{y}_{(p_v)}$ and the uniform upper bound of $s$ for all the $\mathbf{y}_{(p_v)}$ may not exist. Thus it may not be concluded that $\lim_{v\rightarrow \infty}\frac{\|\mathbf{T}_1(\mathbf{y}_{(p_v)})-\mathbf{x}_k\|}{\|\mathbf{y}_{(p_v)}-\mathbf{x}_k\|}=\infty$ and it may fail the conclusions in \cite{kuhnl1median,l1median3}. Moreover, \cite{kuhnl1median,l1median3} have only shown that a subsequence $\{\mathbf{y}_{(p_v)}\}$ converges to $\mathbf{M}$, but not the whole sequence $\{\mathbf{y}_{(p)}\}$. The convergence of the whole sequence $\{\mathbf{y}_{(p)}\}$ should depend on the uniqueness of $\mathbf{M}$.

2. In \cite{lqmean}, the proof of convergence does not include Lemma \ref{lem:kickout}. Thus it may not guarantee that a subsequence $\mathbf{y}_{(p_u)}\in B(\mathbf{x}_k,\delta_0)$ and $\mathbf{T}_1(\mathbf{y}_{(p_u)})\notin B(\mathbf{x}_k,\delta_0)$. In this case, (\ref{eqn:ystar12}), (\ref{eqn:ystar12b}), (\ref{eqn:ystar12c}) and (\ref{eqn:ystar12d}) may not be deduced and the contradiction may not exist.

Therefore, the proof of convergence in this paper has also fixed some incomplete statements of some related works before.

\subsection{Rate of Convergence}
\label{sec:converatespecial}
It is also important to analyze the rate of convergence for $q$PWAWS when $\mathbf{y}_{(p)}\rightarrow \mathbf{M}$. With the proposed de-singularity subgradient $\nabla D_q$ in (\ref{eqn:dqgrad}), we can deduce the exact rate of convergence for $q$PWAWS in the special case $\mathbf{M}\in\{\mathbf{x}_i\}_{i=1}^m$. To the best of our knowledge, this is a new theoretical result that $q$PWAWS enjoys a superlinear convergence for the iteration sequence in this case. Without loss of generality, we assume $\mathbf{M}=\mathbf{x}_k$. Since the subgradients are different between the $q=1$ case and the $1<q<2$ case (Theorem \ref{thm:charsing}), we further divide the analysis into two parts. The $q=1$ case has been deduced by \cite{weiszconvOR}, while the $1<q<2$ case is somewhat complicated. The key technique for the $1<q<2$ case is to check the order of infinitesimal of $\|\nabla D_q(\mathbf{y}_{(p)})\|$ with $\mathbf{y}_{(p)}\rightarrow \mathbf{x}_k$ at first.
\begin{lemma}
\label{lem:orderinfinitesimal}
If $1<q<2$, the order of infinitesimal of $\|\nabla D_q(\mathbf{y}_{(p)})\|$ is no higher than that of $\|\mathbf{y}_{(p)}-\mathbf{x}_k\|$ with $\mathbf{y}_{(p)}\rightarrow \mathbf{x}_k$. Precisely, there exists some $\zeta\geqs 0$ such that
\begin{equation}
\setlength{\abovedisplayskip}{3pt}
\setlength{\belowdisplayskip}{3pt}
\label{eqn:orderinfinitesimal}
\lim_{\mathbf{y}_{(p)}\rightarrow \mathbf{x}_k}\frac{\|\nabla D_q(\mathbf{y}_{(p)})\|}{\|\mathbf{y}_{(p)}-\mathbf{x}_k\|} \leqs \zeta,\quad 1<q<2.
\end{equation}
\end{lemma}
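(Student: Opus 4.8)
The plan is to exploit the fact that the de-singularity term $D_q$ is genuinely smooth near $\mathbf{x}_k$ and that its gradient vanishes there precisely because $\mathbf{x}_k=\mathbf{M}$. First I would observe that $D_q(\mathbf{y})=\sum_{i\ne k}\eta_i^q\|\mathbf{y}-\mathbf{x}_i\|^q$ contains no term centred at $\mathbf{x}_k$, and since the data points are distinct we have $\|\mathbf{x}_k-\mathbf{x}_i\|>0$ for every $i\ne k$. Hence there is a radius $\delta>0$ such that on the closed ball $\bar{B}(\mathbf{x}_k,\delta)$ each distance $\|\mathbf{y}-\mathbf{x}_i\|$ ($i\ne k$) stays uniformly bounded away from $0$; consequently every summand, being a smooth function of a distance that never vanishes, is $C^\infty$ on this ball, and so is $D_q$. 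In particular $\nabla D_q$ is $C^1$ there, with a continuous (hence bounded on $\bar{B}(\mathbf{x}_k,\delta)$) Hessian $\nabla^2 D_q$.

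The second, and decisive, step is to pin down $\nabla D_q(\mathbf{x}_k)$. Because we are in the special case $\mathbf{M}=\mathbf{x}_k$, Fermat's rule gives $\mathbf{0}\in\partial C_q(\mathbf{x}_k)$; and for $1<q<2$ Theorem~\ref{thm:charsing} identifies $\partial C_q(\mathbf{x}_k)$ with the single vector $\{\nabla D_q(\mathbf{x}_k)\}$. Combining the two forces $\nabla D_q(\mathbf{x}_k)=\mathbf{0}$. This is the key structural fact: it is exactly the collapse of the subdifferential to a singleton (which fails for $q=1$, where the minimum condition only bounds $\|\nabla D_1(\mathbf{x}_k)\|$ by $\eta_k$) that makes the lemma true in the range $1<q<2$.

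With these two ingredients the conclusion is a routine first-order estimate. Applying Taylor expansion to the $C^1$ field $\nabla D_q$ around $\mathbf{x}_k$ and using $\nabla D_q(\mathbf{x}_k)=\mathbf{0}$,
\begin{equation}
\nabla D_q(\mathbf{y}_{(p)})=\nabla^2 D_q(\mathbf{x}_k)\,(\mathbf{y}_{(p)}-\mathbf{x}_k)+o(\|\mathbf{y}_{(p)}-\mathbf{x}_k\|),
\end{equation}
so that $\|\nabla D_q(\mathbf{y}_{(p)})\|/\|\mathbf{y}_{(p)}-\mathbf{x}_k\|\leqs \|\nabla^2 D_q(\mathbf{x}_k)\|+o(1)$, whence letting $\mathbf{y}_{(p)}\to\mathbf{x}_k$ yields the claim with $\zeta\triangleq\|\nabla^2 D_q(\mathbf{x}_k)\|$ (the operator norm of the Hessian). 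Equivalently, one may simply invoke Lipschitz continuity of $\nabla D_q$ on $\bar{B}(\mathbf{x}_k,\delta)$ with constant $L=\sup_{\bar{B}(\mathbf{x}_k,\delta)}\|\nabla^2 D_q\|$ and write $\|\nabla D_q(\mathbf{y}_{(p)})\|=\|\nabla D_q(\mathbf{y}_{(p)})-\nabla D_q(\mathbf{x}_k)\|\leqs L\|\mathbf{y}_{(p)}-\mathbf{x}_k\|$, giving $\zeta=L$.

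I expect the only real subtlety to be the second step: recognising that the hypothesis $\mathbf{M}=\mathbf{x}_k$ together with the singleton subdifferential of Theorem~\ref{thm:charsing} forces the gradient to vanish at $\mathbf{x}_k$. Once that is in hand, the smoothness of $D_q$ (guaranteed precisely by deleting the singular $k$-th term) reduces everything to a standard bound on a $C^1$ map near a zero of its gradient. One caveat worth flagging is that the ratio may depend on the direction of approach, so the assertion is most safely read as a $\limsup$ bounded by $\zeta$; since $\zeta$ is only required to exist and be finite, this suffices for the later superlinear-convergence argument.
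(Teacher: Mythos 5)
Your proposal is correct and rests on exactly the two structural facts the paper uses: smoothness of $D_q$ in a neighborhood of $\mathbf{x}_k$ once the singular $k$-th term is deleted, and the vanishing $\nabla D_q(\mathbf{x}_k)=\mathbf{0}$ obtained from Theorem~\ref{thm:charsing} under the standing assumption $\mathbf{M}=\mathbf{x}_k$ — an assumption you rightly flagged, since it is implicit in the lemma's statement (it is imposed at the start of Section~\ref{sec:converatespecial}, and without it the lemma is false because $\nabla D_q(\mathbf{x}_k)\ne\mathbf{0}$ makes the ratio blow up). Where you genuinely diverge is the technical execution. The paper Taylor-expands the \emph{scalar} function $\|\nabla D_q(\mathbf{y})\|^2$ to second order (see (\ref{eqn:orderinfinitesimal2})), bounds the resulting quadratic form by the largest eigenvalue $\vartheta_J$ of the Hessian $J(\mathbf{x}_k)$ of $\|\nabla D_q\|^2$, verifies $\vartheta_J\geqslant 0$, and then passes the square root through the limit, obtaining $\zeta=\sqrt{\vartheta_J/2}$. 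You instead expand the \emph{vector field} $\nabla D_q$ to first order about its zero (or, more simply still, invoke Lipschitz continuity of $\nabla D_q$ on a closed ball), obtaining $\zeta=\|\nabla^2 D_q(\mathbf{x}_k)\|$. Your route is more elementary: it dispenses with $J$, the eigenvalue argument, and the square-root-inside-the-limit step, and the Lipschitz variant even gives a non-asymptotic bound $\|\nabla D_q(\mathbf{y})\|\leq L\|\mathbf{y}-\mathbf{x}_k\|$ valid on the whole ball rather than only in the limit. The two constants in fact agree: at a zero of $\nabla D_q$ one computes $J(\mathbf{x}_k)=2H(\mathbf{x}_k)^2$ with $H=\nabla^2 D_q(\mathbf{x}_k)$ symmetric, so $\sqrt{\vartheta_J/2}=\|H\|$. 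Finally, your caveat about reading (\ref{eqn:orderinfinitesimal}) as a $\limsup$ is well taken — the ratio can depend on the direction of approach, so the limit need not exist, a looseness also present in the paper's own passage from (\ref{eqn:orderinfinitesimal5}) to (\ref{eqn:orderinfinitesimal6}); the $\limsup$ reading is all that the proof of Theorem~\ref{thm:rateconver} requires, since there the bound is multiplied by $\|\mathbf{y}_{(p)}-\mathbf{x}_k\|^{2-q}\rightarrow 0$.
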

The proof is provided in Supplementary \ref{proof:orderinfinitesimal}.

\begin{theorem}
\label{thm:rateconver}
If $\mathbf{M}=\mathbf{x}_k$ for some $k$, the rate of convergence for $q$PWAWS is:
\begin{align}
\label{eqn:rateconver}
\lim_{\mathbf{y}_{(p)}\rightarrow \mathbf{x}_k}\frac{\|\mathbf{y}_{(p+1)}-\mathbf{x}_k\|}{\|\mathbf{y}_{(p)}-\mathbf{x}_k\|} &=\left\{\begin{array}{ll}
\frac{\|\nabla D_1(\mathbf{x}_k)\|}{\eta_k} & q=1,\\
 0  & 1<q<2.
\end{array} \right.
\end{align}
It is a superlinear convergence when $1<q<2$ and no worse than a sublinear convergence when $q=1$.
\end{theorem}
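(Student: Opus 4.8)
The plan is to reduce everything to the local behaviour of the fixed-point map $\mathbf{T}_1$ near $\mathbf{x}_k$. By the Convergence Theorem (Theorem \ref{thm:contheorem}), if the sequence does not hit $\mathbf{M}=\mathbf{x}_k$, then $\mathbf{y}_{(p)}\to\mathbf{x}_k$ with $\mathbf{y}_{(p)}\neq\mathbf{x}_k$ for every $p$; moreover, by Lemma \ref{lem:convexhull} the sequence visits each data point at most once, so for all large $p$ the iterate lies off $\{\mathbf{x}_i\}_{i=1}^m$ and is updated by $\mathbf{y}_{(p+1)}=\mathbf{T}_1(\mathbf{y}_{(p)})$. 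I would therefore analyse the single-step ratio $\|\mathbf{T}_1(\mathbf{y})-\mathbf{x}_k\|/\|\mathbf{y}-\mathbf{x}_k\|$ as $\mathbf{y}\to\mathbf{x}_k$, writing $r=\|\mathbf{y}-\mathbf{x}_k\|$, and show it has the claimed limit.

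The key algebraic step is to subtract $\mathbf{x}_k$ inside the weighted average defining $\mathbf{T}_1$. The $i=k$ summand contributes $\mathbf{x}_k-\mathbf{x}_k=\mathbf{0}$ to the numerator, so
\[
\mathbf{T}_1(\mathbf{y})-\mathbf{x}_k=\frac{\sum_{i\neq k}\eta_i^q\|\mathbf{y}-\mathbf{x}_i\|^{q-2}(\mathbf{x}_i-\mathbf{x}_k)}{\eta_k^q\,r^{q-2}+\sum_{i\neq k}\eta_i^q\|\mathbf{y}-\mathbf{x}_i\|^{q-2}}.
\]
Setting $W(\mathbf{y})=\sum_{i\neq k}\eta_i^q\|\mathbf{y}-\mathbf{x}_i\|^{q-2}$ and decomposing $\mathbf{y}-\mathbf{x}_i=(\mathbf{y}-\mathbf{x}_k)-(\mathbf{x}_i-\mathbf{x}_k)$, the definition (\ref{eqn:dqgrad}) of $\nabla D_q$ gives the identity $\nabla D_q(\mathbf{y})=qW(\mathbf{y})(\mathbf{y}-\mathbf{x}_k)-q\sum_{i\neq k}\eta_i^q\|\mathbf{y}-\mathbf{x}_i\|^{q-2}(\mathbf{x}_i-\mathbf{x}_k)$, so the numerator above equals $W(\mathbf{y})(\mathbf{y}-\mathbf{x}_k)-\tfrac{1}{q}\nabla D_q(\mathbf{y})$. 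Since the data points are distinct, $W(\mathbf{y})\to W_0:=\sum_{i\neq k}\eta_i^q\|\mathbf{x}_k-\mathbf{x}_i\|^{q-2}\in(0,\infty)$, while the singular term $\eta_k^q r^{q-2}$ blows up because $q-2<0$.

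For $q=1$ I would multiply numerator and denominator by $r$. The denominator becomes $\eta_k+W(\mathbf{y})\,r\to\eta_k$, and the numerator $\|W(\mathbf{y})(\mathbf{y}-\mathbf{x}_k)-\nabla D_1(\mathbf{y})\|\to\|\nabla D_1(\mathbf{x}_k)\|$, because $W(\mathbf{y})(\mathbf{y}-\mathbf{x}_k)\to\mathbf{0}$ and $\nabla D_1$ is continuous at $\mathbf{x}_k$. This yields the limit $\|\nabla D_1(\mathbf{x}_k)\|/\eta_k$; Fermat's rule together with the characterization $\partial C_1(\mathbf{x}_k)=\{\nabla D_1(\mathbf{x}_k)+\eta_k\mathbf{u}:\|\mathbf{u}\|\leqs1\}$ (Theorem \ref{thm:charsing}) forces $\|\nabla D_1(\mathbf{x}_k)\|\leqs\eta_k$ when $\mathbf{x}_k=\mathbf{M}$, so the ratio is at most $1$, i.e.\ no worse than sublinear.

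For $1<q<2$ the subtlety, and the main obstacle, is that Fermat's rule now forces $\nabla D_q(\mathbf{x}_k)=\mathbf{0}$ (Theorem \ref{thm:charsing}), so the numerator itself vanishes in the limit and a naive evaluation is indeterminate. Here I would invoke Lemma \ref{lem:orderinfinitesimal}, which supplies the sharper estimate $\|\nabla D_q(\mathbf{y})\|=O(r)$. Combined with $\|W(\mathbf{y})(\mathbf{y}-\mathbf{x}_k)\|=W(\mathbf{y})\,r=O(r)$, the whole numerator is $O(r)$. The denominator equals $\eta_k^q r^{q-1}+W(\mathbf{y})\,r$, and since $q-1<1$ the term $r^{q-1}$ dominates $r$, so the denominator is asymptotic to $\eta_k^q r^{q-1}$. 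The ratio is therefore $O(r)/(\eta_k^q r^{q-1})=O(r^{2-q})\to0$, establishing superlinear convergence. The crux of the whole argument is thus Lemma \ref{lem:orderinfinitesimal}: mere continuity of $\nabla D_q$ only yields that the numerator tends to $\mathbf{0}$, whereas pinning down the exact power $r^{2-q}$, and hence the superlinear rate, requires knowing that $\|\nabla D_q(\mathbf{y})\|$ vanishes at least as fast as $r$.
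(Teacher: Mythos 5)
Your proposal is correct and follows essentially the same route as the paper's own proof: the same decomposition of $\mathbf{T}_1(\mathbf{y})-\mathbf{x}_k$ with the identity expressing the numerator as $W(\mathbf{y})(\mathbf{y}-\mathbf{x}_k)-\tfrac{1}{q}\nabla D_q(\mathbf{y})$, the same limit computation giving $\|\nabla D_1(\mathbf{x}_k)\|/\eta_k$ for $q=1$, and the same reliance on Lemma \ref{lem:orderinfinitesimal} to show the numerator is $O(\|\mathbf{y}-\mathbf{x}_k\|)$ while the singular term $\eta_k^q\|\mathbf{y}-\mathbf{x}_k\|^{q-1}$ dominates the denominator, yielding the $O(\|\mathbf{y}-\mathbf{x}_k\|^{2-q})\to 0$ rate for $1<q<2$. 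Your added observation that Fermat's rule and Theorem \ref{thm:charsing} force $\|\nabla D_1(\mathbf{x}_k)\|\leqs\eta_k$ when $\mathbf{x}_k=\mathbf{M}$ is a correct and slightly more explicit justification of the ``no worse than sublinear'' claim than the paper's stated trichotomy.
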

The proof is provided in Supplementary \ref{proof:rateconver}. As for the general case $\mathbf{M}\notin\{\mathbf{x}_i\}_{i=1}^m$, the rate of convergence is unknown at present according to our knowledge. It is difficult to eliminate the infinitesimal $\|\mathbf{y}_{(p)}-\mathbf{M}\|$ and obtain a uniform upper bound of $\lim_{\mathbf{y}_{(p)}\rightarrow \mathbf{M}}\frac{\|\mathbf{y}_{(p+1)}-\mathbf{M}\|}{\|\mathbf{y}_{(p)}-\mathbf{M}\|}$ that is no greater than $1$. When $q=1$, \cite{weiszconvOR} shows some computational evidence that the rate of convergence for $\mathbf{M}\notin\{\mathbf{x}_i\}_{i=1}^m$ is usually somewhat less than that of $\mathbf{M}\in\{\mathbf{x}_i\}_{i=1}^m$. We will give some computational results for the $1<q<2,\mathbf{M}\notin\{\mathbf{x}_i\}_{i=1}^m$ case in Section \ref{sec:converateexperiment} to show that $q$PWAWS enjoys a reasonable rate of linear convergence. We summarize the whole $q$PWAWS in Supplementary \ref{sec:solvealgo}, which gives a complete procedure to deal with different situations that may occur in real applications.

\section{Experimental Results}
\label{sec:experiment}
Since $1$PWA and $q$PWA have already been verified to be effective in many applications \cite{rotspace1,rotspace2,genrie,genrie2,RMR2}, we do not intend to repeat their experiments. Instead, we focus on validating that $q$PWAWS successfully solves the singular problem and enjoys a reasonable rate of linear convergence.

We conduct experiments on an interesting machine learning application: online portfolio selection (OPS, \cite{olpsjmlr,SSPO,SPOLC,egrmvgap}). By using the notations in this paper, a data point $\mathbf{x}_i\in \mathbb{R}^d$ indicates a price vector of $d$ assets on the $i$-th day. $\{\mathbf{x}_i\}_{i=1}^m$ contains the asset prices on the most recent $m$ days. Following \cite{RMR2}, we fed the $q$-th power median
\begin{equation}
\setlength{\abovedisplayskip}{3pt}
\setlength{\belowdisplayskip}{3pt}
\label{eqn:lqmedianexp}
\hat{\mathbf{x}}=\argmin_\mathbf{y} \sum_{i=1}^m \| \mathbf{y}-\mathbf{x}_i  \|^q, \quad 1\leqs q\leqs 2
\end{equation}
to a portfolio optimization model \cite{RMR2} to decide the future portfolio. We adopt the NYSE(N) data set \cite{CWMR} and propose a new CSI300 data set.
\begin{itemize}[leftmargin=*]
\item NYSE(N): it contains daily price relative sequences of $d=23$ stocks from New York Stock Exchange during 1/Jan/1985$\sim$30/Jun/2010 ($T=6431$ days).
\item CSI300: it contains daily price relative sequences of $d=47$ stocks from the CSI300 constituents\footnote{\href{http://www.csindex.com.cn}{http://www.csindex.com.cn}} of Shanghai Stock Exchange and Shenzhen Stock Exchange in China during 16/Mar/2015$\sim$19/May/2017 ($T=534$ days).
\end{itemize}

Experiments consist of four parts: first, we verify that $q$PWAWS solves the singularity problem and obtains the same results as $q$PWA. Second, we measure the computational cost and the number of iterations for convergence of $q$PWAWS. Third, we give computational results on the rate of convergence for $q$PWAWS. Fourth, we show that the $q$-th power median ($1<q<2$) is more effective and robust than the $1$-st power median and the $2$-nd power median in some cases, thus the $q$-th power median is useful and it is important to solve the singularity problem. As for the parameters, if not specified, the observation window size is $m=5$, which is consistent with previous related methods \cite{RMR2,PPT,AICTR,mtcvar}. The tolerance threshold is $Tol=10^{-9}$. The reducing factor is $\rho=0.1$, which is a moderate value. As the observation window moves from $t=1$ to $t=T-m+1$, there are a total number of $(T-m+1)$ sets of data points $\{\mathbf{x}_i\}_{i=1}^m$. Thus the $q$-th power median computation can be conducted for $(T-m+1)$ times to evaluate the average performance of $q$PWAWS.

\subsection{Addressing the Singularity Problem}
\label{sec:expsolvesing}
If the sequence of iterates does not hit the data points, then $q$PWAWS is equivalent to $q$PWA. Without loss of generality, we examine a specific case: let the starting point $\mathbf{y}_{(0)}=\mathbf{x}_1$ for $q$PWAWS and $\mathbf{y}_{(0)}=\frac{1}{m}\sum_{i=1}^m \mathbf{x}_i$ for $q$PWA. Besides, we also watch to avoid getting stuck for $q$PWA, since it cannot deal with singularity.

To evaluate the efficiency of $q$PWAWS, we measure how many iterates that $q$PWAWS takes to successfully get away from the singular point with a reduced cost (implementing only Step 11 in Algorithm \ref{alg:lqwaws}). For each data set, we use two different sizes of observation windows $m=5$ and $m=10$, and let $q=1.1\sim 1.9$. The mean and the standard deviation (STD) of the number of iterates are shown in Table \ref{tab:escapesing}. $q$PWAWS successfully gets away from the singular points in all the experiments. As $q$ increases from $1.1$ to $1.9$, the average number of iterates decreases from about $2.54$ to $2$ for $m=5$ and from about $2.75$ to $2$ for $m=10$ on CSI300. In fact, a smaller $\rho$ and a smaller $\lambda_0$ in (\ref{eqn:initlbd0}) can further reduce the number of iterates and Step 11 only need to compute the $q$-th power cost. In general, the number of iterates to get out of the singular point is small.

\begin{table}[!htb]
\vspace{-5pt}
\small{
\centering
\scalebox{0.9}{
\begin{tabular}{ccccc}
\hline
 \multirow{2}{*}{$q$} & \multicolumn{2}{c}{NYSE(N)}  &  \multicolumn{2}{c}{CSI300}\\
 \cline{2-5}
  &  $m=5$  &  $m=10$  &$m=5$  & $m=10$    \\
 \hline
   $1.1$    &  $2.01\pm 0.49$  & $2.19\pm0.46$   &  $2.54\pm0.50$  & $2.75\pm0.43$   \\
   $1.2$    &  $2.00 \pm0.41 $  & $2.17\pm0.41 $   &  $2.38 \pm0.49 $  & $2.71 \pm0.46 $   \\
   $1.3$    &  $1.99 \pm0.31  $  & $2.15 \pm0.37 $   &  $2.19  \pm0.39  $  & $2.64\pm0.48 $   \\
   $1.4$    &  $1.99 \pm0.22$  & $2.12 \pm0.32 $   &  $2.03\pm0.17$  & $2.52\pm 0.50$   \\
   $1.5$    &  $1.99 \pm0.10$  & $2.07 \pm0.25$   &  $2.00  \pm0.00$  & $2.28 \pm 0.45$   \\
   $1.6$    &  $2.00\pm0.04$  & $2.03 \pm0.18$   &  $2.00  \pm0.00$  & $2.06\pm0.23 $   \\
   $1.7$    &  $2.00  \pm0.00$  & $2.00 \pm0.05$   &  $2.00  \pm0.00$  & $2.00  \pm0.00$   \\
   $1.8$    &  $2.00  \pm0.00$  & $2.00  \pm0.00$   &  $2.00  \pm0.00$  & $2.00  \pm0.00$   \\
   $1.9$    &  $2.00  \pm0.00$  & $2.00  \pm0.00$   &  $2.00  \pm0.00$  & $2.00  \pm0.00 $   \\
  \hline
\end{tabular}
}
}
\caption{Average number of iterates for $q$PWAWS to get away from the singular point (mean$\pm$STD).}
\label{tab:escapesing}
\vspace{-5pt}
\end{table}

Next, we need to verify that $q$PWAWS obtains the same $q$-th power median as $q$PWA if the latter does not get stuck. The experimental procedure is similar to the above. In each $q$-th power median computation, denote the $q$-th power medians of $q$PWAWS and $q$PWA by $\hat{\mathbf{x}}_{WAWS}$ and $\hat{\mathbf{x}}_{WA}$, respectively. Then we compute the maximum relative difference $\|\hat{\mathbf{x}}_{WAWS}-\hat{\mathbf{x}}_{WA}\|/\|\hat{\mathbf{x}}_{WA}\|$ of all the $(T-m+1)$ times of computations, shown in Table \ref{tab:sameresult}. Since the differences are close to zero ($<1e-07$), $q$PWAWS obtains nearly the same $q$-th power median as $q$PWA.

\begin{table}[!htb]
\vspace{-5pt}
\centering
\scalebox{0.9}{
\small{
\begin{tabular}{ccccc}
\hline
 \multirow{2}{*}{$q$} & \multicolumn{2}{c}{NYSE(N)}  &  \multicolumn{2}{c}{CSI300}\\
 \cline{2-5}
  &  $m=5$  &  $m=10$  &$m=5$  & $m=10$    \\
 \hline
   $1.1$    &  $5.9010e-09$  & $2.3797e-08$   &  $6.9425e-09$  & $2.4404e-08$   \\
   $1.2$    &  $6.3249e-09$  & $1.2411e-08$   &  $6.8934e-09$  & $6.1940e-09$   \\
   $1.3$    &  $5.4543e-09$  & $6.8011e-09$   &  $3.8610e-09$  & $4.0452e-09$   \\
   $1.4$    &  $5.2329e-09$  & $5.1516e-09$   &  $2.1728e-09$  & $3.9566e-09$   \\
   $1.5$    &  $4.5439e-09$  & $2.6621e-09$   &  $2.8953e-09$  & $2.4237e-09$   \\
   $1.6$    &  $1.8894e-09$  & $1.8003e-09$   &  $2.0482e-09$  & $1.4021e-09$   \\
   $1.7$    &  $1.4698e-09$  & $1.3701e-09$   &  $1.2259e-09$  & $1.0679e-09$   \\
   $1.8$    &  $6.1063e-10$  & $7.1172e-10$   &  $6.8329e-10$  & $4.7757e-10$   \\
   $1.9$    &  $2.1942e-10$  & $2.8647e-10$   &  $3.1570e-10$  & $3.0911e-10$   \\
  \hline
\end{tabular}
}
}
\caption{Maximum relative difference between $q$-th power medians of $q$PWAWS and $q$PWA: $\|\hat{\mathbf{x}}_{WAWS}-\hat{\mathbf{x}}_{WA}\|/\|\hat{\mathbf{x}}_{WA}\|$.}
\label{tab:sameresult}
\vspace{-5pt}
\end{table}

\subsection{Computational Costs and Convergence}
\label{sec:comcost}
A computer with an Intel Core i7-6700 CPU and a 4GB DDR3 memory card is used to record the computational time of $q$PWAWS. $(T-m+1)$ times of computations are recorded and the mean time cost for each $q$ and $m$ is shown in Table \ref{tab:comcost}. All the computations successfully converge and the average number of iterates for convergence (including the escaping iterates) is also shown in Table \ref{tab:comcost}. All the time costs ($<1e-03s$) and the numbers of iterates ($<27$) are small, which indicates that $q$PWAWS runs fast.

\begin{table}[!htb]
\vspace{-5pt}
\centering
\scalebox{0.93}{
\small{
\begin{tabular}{ccccc}
\hline
 \multirow{3}{*}{$q$} &  \multicolumn{2}{c}{$m=5$}  &  \multicolumn{2}{c}{$m=10$}     \\
 \cline{2-5} &  Time & Iters  &  Time &  Iters  \\
 \cline{2-5}  &  \multicolumn{4}{c}{NYSE(N)}   \\
 \hline
   $1.1$    &  $3.6059e-04$ & $25.31\pm 4.16$  & $6.0150e-04$ & $25.03\pm 6.50$\\
   $1.2$    &  $3.3310e-04$ & $22.78\pm 3.48$  & $5.5750e-04$ & $22.38\pm 4.64$     \\
   $1.3$    &  $3.0204e-04$ & $20.58\pm 3.06$  & $4.8230e-04$ & $20.08\pm 3.51$     \\
   $1.4$    &  $2.7411e-04$ & $18.51\pm 2.58$  & $4.4243e-04$ & $18.02\pm 2.70$      \\
   $1.5$    &  $2.5000e-04$ & $16.61\pm 2.12$  & $3.9745e-04$ & $16.12\pm 2.09$      \\
   $1.6$    &  $2.3567e-04$ & $14.85\pm 1.70$  & $3.5268e-04$ & $14.36\pm 1.60$      \\
   $1.7$    &  $2.0549e-04$ & $13.30\pm 1.20$  & $3.1274e-04$ & $12.71\pm 1.21$       \\
   $1.8$    &  $2.0305e-04$ & $11.71\pm 0.84$  & $2.7530e-04$ & $11.07\pm 0.91$       \\
   $1.9$    &  $1.5853e-04$ & $9.98\pm 0.55$  & $2.2369e-04$ & $9.31\pm 0.63$     \\
  \hline
   &  \multicolumn{4}{c}{CSI300} \\
   \hline
  $1.1$ & $5.0074e-04$ & $26.86\pm 4.85$  & $7.4105e-04$ & $26.89\pm 7.87$ \\
  $1.2$ & $3.9539e-04$ & $23.77\pm 3.70$  & $7.2515e-04$ & $23.97\pm 5.30$  \\
  $1.3$ & $4.2771e-04$ & $21.09\pm 3.19$  & $6.4174e-04$ & $21.49\pm 3.88$ \\
  $1.4$ & $3.8925e-04$ & $18.49\pm 2.74$  & $5.9630e-04$ & $19.18\pm 2.87$ \\
  $1.5$ & $3.2568e-04$ & $16.21\pm 2.31$  & $5.1977e-04$ & $16.85\pm 2.18$ \\
  $1.6$ & $3.2078e-04$ & $14.32\pm 2.09$  & $4.5260e-04$ & $14.82\pm 1.64$ \\
  $1.7$ & $3.2034e-04$ & $13.25\pm 1.39$  & $4.2355e-04$ & $13.06\pm 1.21$ \\
  $1.8$ & $2.8558e-04$ & $11.78\pm 0.96$  & $3.8196e-04$ & $11.33\pm 0.85$ \\
  $1.9$ &  $2.0650e-04$ & $10.06\pm 0.61$  & $3.2261e-04$ & $9.48\pm 0.56$ \\
  \hline
\end{tabular}}}
\caption{Average computational time (in seconds) and average number of iterates (mean$\pm$STD) for convergence of $q$PWAWS.}
\label{tab:comcost}
\vspace{-5pt}
\end{table}

\subsection{Rate of Convergence (Computational)}
\label{sec:converateexperiment}
In Section \ref{sec:converatespecial}, we have deduced the theoretical rate of convergence for $q$PWAWS in the special case $\mathbf{M}\in\{\mathbf{x}_i\}_{i=1}^m$. Now we conduct computational experiments to show that $q$PWAWS enjoys a reasonable rate of linear convergence in general. The procedure is the same as the above subsections and $(T-m+1)$ times of computations are implemented. In each computation, we take $\frac{\|\mathbf{y}_{(o-1)}-\mathbf{y}_{(o)}\|}{\|\mathbf{y}_{(o-2)}-\mathbf{y}_{(o)}\|}$ as an approximation for the rate of convergence, where $\mathbf{y}_{(o)}$ denotes the last iterate (i.e., the minimum) in Algorithm \ref{alg:lqwaws}. The means and the STDs of the rates of convergence for $q$PWAWS with different values of $q$ are shown in Table \ref{tab:converate}. When $q$ changes from $1$ to $1.9$, the empirical rate of convergence decreases from about $0.36$ to $0.06$, thus the newly-developed $1<q<2$ algorithm enjoys better rate of convergence than the long-used $q=1$ algorithm. Besides, all the situations with $1\leqs q<2$ achieve reasonable rates of linear convergence.

In order to see how the rate of convergence changes as $q$PWAWS runs, we plot the sequences of rates $\left\{\frac{\|\mathbf{y}_{(p+1)}-\mathbf{y}_{(o)}\|}{\|\mathbf{y}_{(p)}-\mathbf{y}_{(o)}\|}\right\}_{p=1}^{o-2}$ with $q=1.1\sim 1.9$ (one instance for each $q$) and $m=5$ in Figure \ref{fig:converate}. Each plot has a slowly-decreasing period and drops sharply in the last few iterates, thus the inflection point is close to the rate of convergence for $q$PWAWS, which suggests a linear convergence. Besides, the plots with smaller $q$s are above the plots with larger $q$s, which accords with the results in Table \ref{tab:converate}. Other instances of rates of convergence show similar patterns and we need not plot all of them.

\begin{table}[!htb]
\vspace{-5pt}
\centering
\scalebox{0.9}{
\small{
\begin{tabular}{ccccc}
\hline
 \multirow{2}{*}{$q$} & \multicolumn{2}{c}{NYSE(N)}  &  \multicolumn{2}{c}{CSI300}\\
 \cline{2-5}
  &  $m=5$  &  $m=10$  &$m=5$  & $m=10$    \\
  \hline
   $1$    &  $0.35\pm 0.03$  & $0.33\pm0.04$   &  $0.36\pm0.03$  & $0.34\pm0.04$   \\
 \hline
   $1.1$    &  $0.33\pm 0.03$  & $0.31\pm0.04$   &  $0.33\pm0.03$  & $0.32\pm0.04$   \\
   $1.2$    &  $0.31 \pm0.03 $  & $0.29\pm0.04 $   &  $0.31 \pm0.03 $  & $0.30 \pm0.04 $   \\
   $1.3$    &  $0.28\pm0.03  $  & $0.27 \pm0.04 $   &  $0.29  \pm0.03  $  & $0.27\pm0.04 $   \\
   $1.4$    &  $0.26 \pm0.03$  & $0.24 \pm0.04 $   &  $0.26\pm0.03$  & $0.25\pm 0.04$   \\
   $1.5$    &  $0.23 \pm0.03$  & $0.21 \pm0.03$   &  $0.23  \pm0.03$  & $0.22 \pm 0.03$   \\
   $1.6$    &  $0.19\pm0.03$  & $0.18 \pm0.03$   &  $0.19  \pm0.03$  & $0.18\pm0.03 $   \\
   $1.7$    &  $0.15  \pm0.02$  & $0.14 \pm0.02$   &  $0.15  \pm0.03$  & $0.15 \pm0.03$   \\
   $1.8$    &  $0.11  \pm0.02$  & $0.10  \pm0.02$   &  $0.11  \pm0.02$  & $0.10  \pm0.02$   \\
   $1.9$    &  $0.06  \pm0.01$  & $0.05  \pm0.01$   &  $0.06  \pm0.01$  & $0.05 \pm0.01 $   \\
  \hline
\end{tabular}
}
}
\caption{Average rate of convergence (mean$\pm$STD) for $q$PWAWS with different values of $q$.}
\label{tab:converate}
\vspace{-5pt}
\end{table}

\label{sec:plotrate}
\begin{figure*}[th]
\vspace{-5pt}
\centering
\subfloat[NYSE(N)]{
\centering
\includegraphics[width=0.45\textwidth]{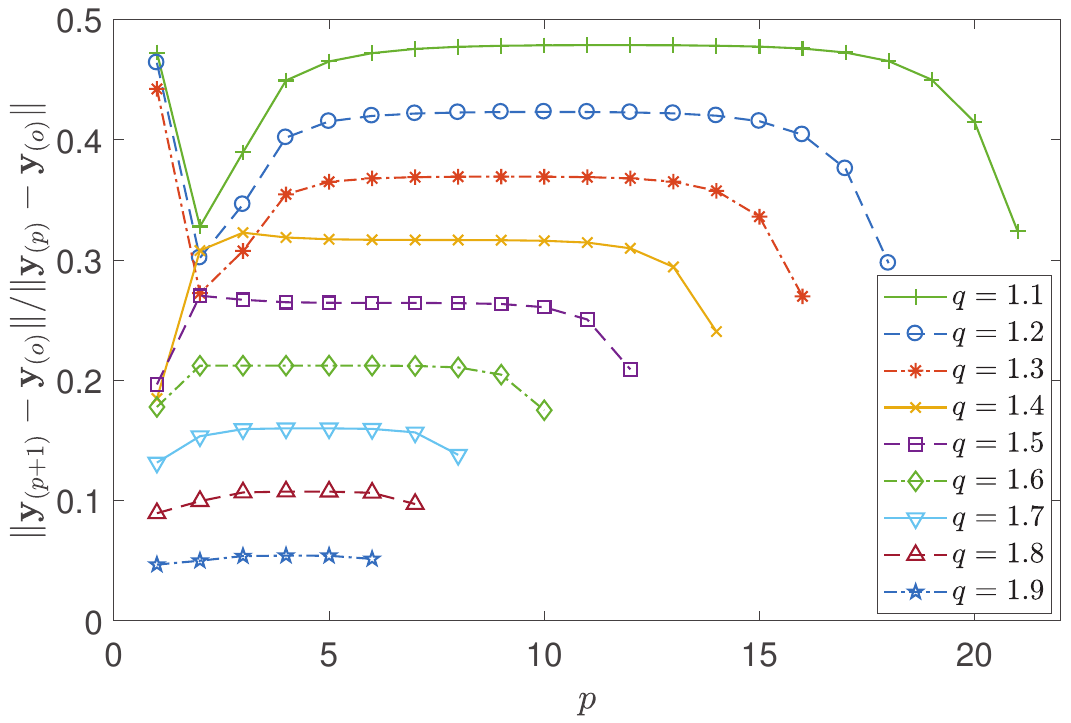}}
\subfloat[CSI300]{
\centering
\includegraphics[width=0.45\textwidth]{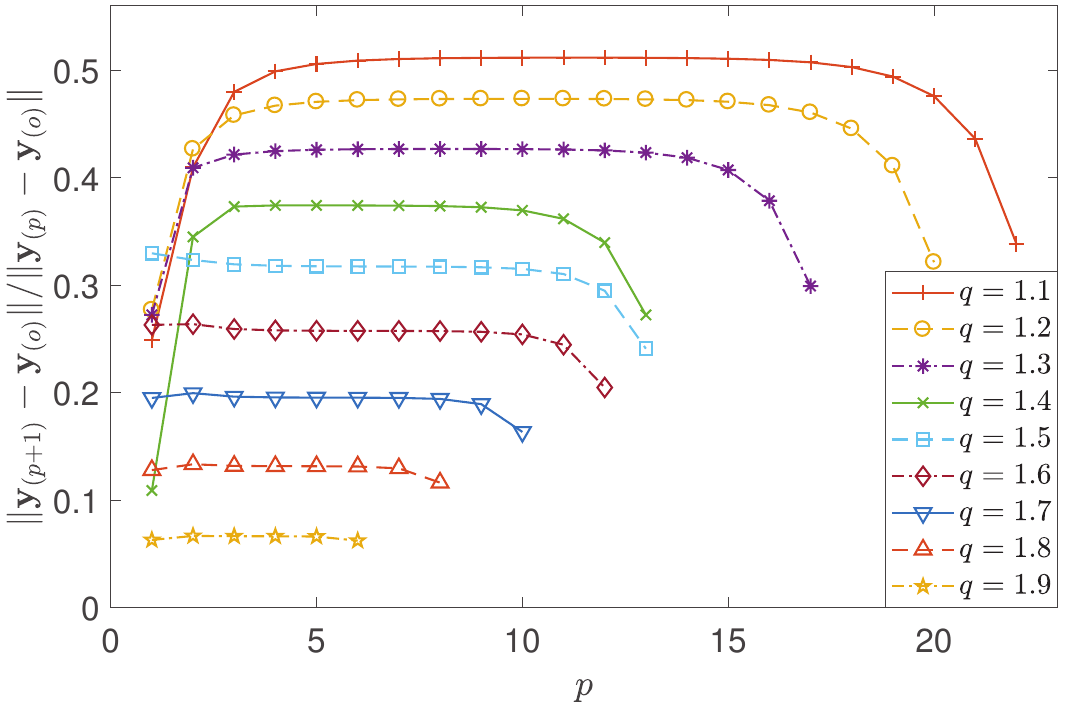}}
\caption{Sequences of rates of convergence $\left\{\frac{\|\mathbf{y}_{(p+1)}-\mathbf{y}_{(o)}\|}{\|\mathbf{y}_{(p)}-\mathbf{y}_{(o)}\|}\right\}_{p=1}^{o-2}$ for $q$PWAWS with $q=1.1\sim1.9$ and $m=5$.}
\label{fig:converate}
\vspace{-5pt}
\end{figure*}

\subsection{Investing Performance}
\label{sec:investperform}
We apply the $q$-th power median and $q$PWAWS to the OPS problem as a price prediction strategy and evaluate the investing performance on the two data sets NYSE(N) and CSI300. Interested readers are referred to \cite{RMR2} for the specific model setting. We change $q$ from $1$ to $2$ to see how it affects the investing performance. We set $m=5$ to be consistent with the window size of previous works.

The first indicator of evaluation is the final cumulative wealth (CW) when a strategy goes through the whole investment. The second indicator is the daily Sharpe Ratio (SR, \cite{SHARPratio}), which is a risk-adjusted average return that considers both return and risk in the investment. The results in Table \ref{tab:cwsr} indicate that $q=1.3$ and $q=1.6$ achieve the best CW and SR on NYSE(N) and CSI300, respectively. They improve on the trivial $q=1$ and $q=2$ to some extent, which suggests that the $q$-th power median ($1<q<2$) is more effective and robust than the $1$-st power median and the $2$-nd power median in some cases. Hence the $q$-th power median and $q$PWAWS are useful in this machine learning scenario.

\begin{table}[!htb]
\vspace{-5pt}
\centering
\scalebox{0.9}{
\small{
\begin{tabular}{ccccc}
\hline
 \multirow{2}{*}{$q$} & \multicolumn{2}{c}{NYSE(N)}  &  \multicolumn{2}{c}{CSI300}\\
 \cline{2-5}
  &  CW  &  SR  &   CW  & SR    \\
\hline
   $1$    &  $3.3183e+08$  & $0.1034$   &  $1.7750$  & $0.0479$   \\
\hline
   $1.1$    &  $1.0166e+09$  & $0.1082$   &  $1.7803$  & $0.0481$   \\
   $1.2$    &  $1.1564e+09$  & $0.1086$   &  $1.7544$  & $0.0473$   \\
   $1.3$    &  $\mathbf{1.1581e+09}$  & $\mathbf{0.1087}$   &  $1.7447$  & $0.0470$   \\
   $1.4$    &  $9.3994e+08$  & $0.1078$   &  $1.7469$  & $0.0471$   \\
   $1.5$    &  $8.4610e+08$  & $0.1074$   &  $1.8024$  & $0.0489$   \\
   $1.6$    &  $7.3675e+08$  & $0.1068$   &  $\mathbf{1.8434}$  & $\mathbf{0.0502}$   \\
   $1.7$    &  $6.6769e+08$  & $0.1063$   &  $1.8120$  & $0.0492$   \\
   $1.8$    &  $5.7382e+08$  & $0.1056$   &  $1.7892$  & $0.0485$   \\
   $1.9$    &  $4.7956e+08$  & $0.1047$   &  $1.7591$  & $0.0475$   \\
\hline
   $2$    &  $4.0764e+08$  & $0.1040$   &  $1.7311$  & $0.0466$   \\
  \hline
\end{tabular}
}
}
\caption{Cumulative wealth (CW) and Sharpe Ratio (SR) of $q$PWAWS with different values of $q$.}
\label{tab:cwsr}
\vspace{-5pt}
\end{table}

\section{Conclusions and Future Works}
\label{sec:conclusion}
This paper mainly establishes a novel de-singularity subgradient approach and a corresponding algorithm ($q$-th Power Weiszfeld Algorithm without Singularity, $q$PWAWS) for the extended Weber location problem. We characterize the subgradient(s) and the optimality of any given singular point. If this singular point is not optimal, the algorithm can escape from it and reduce the cost simultaneously. This advantage makes the sequence of $q$PWAWS monotonically converge to the exact minimum. A complete proof of convergence for $q$PWAWS is also presented, which has fixed some incomplete statements of the proofs for some previous Weiszfeld algorithms. $q$PWAWS enjoys a superlinear convergence for the iteration sequence in the special case where the minimum point is a singular point, which is a new theoretical result.

Experiments with real-world financial data sets indicate that $q$PWAWS successfully gets out of the singular point with only a small number of iterates. Besides, $q$PWAWS runs fast, converges with only a few iterates, obtains the same $q$-th power median as $q$PWA if the latter does not get stuck, and shows a reasonable rate of linear convergence. In some cases of the online portfolio selection, the $q$-th power median ($1<q<2$) outperforms the $1$-st power median and the $2$-nd power median in both the final cumulative wealth and the Sharpe Ratio, which suggests that the $q$-th power median ($1<q<2$) achieves better investing performance. Therefore, the de-singularity subgradient approach is beneficial to advancing both theory and practice for the extended Weber location problem.

Future works may fall into the following aspects: 1. Find the optimal de-singularity subgradient descent in (\ref{eqn:singnextiter}). 2. Establish the de-singularity subgradient theory and algorithm for the $q$-th power and $L_p$-norm median problem. 3. Extend the de-singularity subgradient theory to other related optimization problems that are also challenged by the singularity problem.

\section*{Acknowledgments}
This work is supported in part by the National Natural Science Foundation of China under grants 62176103, 62276114, 62206110, in part by the Science and Technology Planning Project of Guangzhou under grants 2024A04J9896, 202206030007, Nansha District: 2023ZD001, in part by Guangdong-Macao Advanced Intelligent Computing Joint Laboratory under grant 2020B1212030003, in part by the Key Laboratory of Smart Education of Guangdong Higher Education Institutes, Jinan University under grant 2022LSYS003, and in part by the Fundamental Research Funds for the Central Universities, JNU under grant 21623202.

\bibliographystyle{named}
\bibliography{qrref}

\clearpage

\appendix
\onecolumn

\section*{Supplementary Material}
\vspace{10pt}

\section{Additional Content}

\subsection{Update Formula with Coincidence for $q=1$}
\label{sec:updatecoinqone}
The $q=1$ case is solved by \cite{l1median3} and interested readers are referred to it for a detailed proof. We directly give the $q=1$ formula as follows:
\begin{align}
\label{eqn:l1wawsgen}
\tilde{\mathbf{T}}(\mathbf{y}_{(p)})&= \frac{\sum_{i\ne k} \eta_i\| \mathbf{y}_{(p)}-\mathbf{x}_i  \|^{-1}\mathbf{x}_i}{\sum_{i\ne k} \eta_i\| \mathbf{y}_{(p)}-\mathbf{x}_i  \|^{-1}},\\
\label{eqn:l1wawsgen2}
\mathbf{y}_{(p+1)}&= (1-\lambda)\tilde{\mathbf{T}}(\mathbf{y}_{(p)})+\lambda \mathbf{y}_{(p)}, \nonumber\\ 
\lambda &= \min \left\{ 1, \frac{\eta_k}{\|\nabla D_1(\mathbf{y}_{(p)})\|} \right\}.\qquad
\end{align}
This strategy ensures $\mathbf{y}_{(p+1)}\ne \mathbf{y}_{(p)} \Leftrightarrow$ $C_1(\mathbf{y}_{(p+1)})<C_1(\mathbf{y}_{(p)})$ and $\mathbf{y}_{(p+1)}= \mathbf{y}_{(p)} \Leftrightarrow \mathbf{y}_{(p)}=\mathbf{M}$.

\subsection{Solving Algorithm}
\label{sec:solvealgo}
Note that the multiplicities are changed from $\{\eta_i\}_{i=1}^m$ back to $\{\xi_i\}_{i=1}^m$ to simplify the expressions.
\small
\begin{breakablealgorithm}
\caption{$q$-th power Weiszfeld algorithm without singularity ($q$PWAWS)}
\label{alg:lqwaws}
\begin{algorithmic}
\REQUIRE Given $m$ distinct data points $\{\mathbf{x}_i\}_{i=1}^m$, the corresponding multiplicities $\{\xi_i\}_{i=1}^m$, the order of power $q$, the reducing factor $\rho$ and the tolerance threshold $Tol$. \\
\STATE 1. Initialize with a starting point $\mathbf{y}_{(0)}$.
\WHILE{1}
\IF {$\mathbf{y}_{(p)}\notin \{\mathbf{x}_i\}_{i=1}^m$}
\STATE 2. Compute $ \mathbf{y}_{(p+1)}=\frac{\sum_{i=1}^m \xi_i\| \mathbf{y}_{(p)}-\mathbf{x}_i  \|^{q-2}\mathbf
{x}_i}{\sum_{i=1}^m \xi_i\| \mathbf{y}_{(p)}-\mathbf{x}_i  \|^{q-2}}$.
\IF {$\mathbf{y}_{(p+1)}=\mathbf{y}_{(p)}$}
\STATE 3. $\mathbf{M}=\mathbf{y}_{(p)}$. Break.
\ENDIF
\STATE 4. $p\leftarrow p+1$.
\ELSE
\STATE 5. Suppose $\mathbf{y}_{(p)}=\mathbf{x}_k$, 
\STATE \quad compute $\nabla D_q(\mathbf{x}_k)=\sum_{i\ne k} q\xi_i\| \mathbf{x}_k-
\mathbf{x}_i  \|^{q-2}(\mathbf{x}_k-\mathbf{x}_i)$.
\IF {$q=1$}
\IF {$\|\nabla D_1(\mathbf{x}_k)\|\leqs \xi_k$}
\STATE 6. $\mathbf{M}=\mathbf{x}_k$. Break.
\ELSE
\STATE 7. Compute $\tilde{\mathbf{T}}(\mathbf{x}_k)=\frac{\sum_{i\ne k} \xi_i\| \mathbf{x}_k-\mathbf{x}_i  \|^{-1}\mathbf
{x}_i}{\sum_{i\ne k} \xi_i\| \mathbf{x}_k-\mathbf{x}_i  \|^{-1}}$, 
\STATE $\lambda = \frac{\xi_k}{\|\nabla D_1(\mathbf{x}_k)\|}$,
$\mathbf{y}_{(p+1)}=(1-\lambda)\tilde{\mathbf{T}}(\mathbf{x}_k)+\lambda \mathbf{x}_k$.
\STATE 8. $p\leftarrow p+1$.
\ENDIF
\ELSE
\IF {$\|\nabla D_q(\mathbf{x}_k)\|=0$}
\STATE 9. $\mathbf{M}=\mathbf{x}_k$. Break.
\ELSE
\STATE 10. Set $w=0$, 
\STATE $\lambda_w= \min\left\{\frac{1}{q}\xi_k^{-\frac{1}{q-1}}\| \nabla D_q(\mathbf{x}_k) \|^{\frac{2-q}{q-1}},1\right\}$.
\WHILE {$C_q(\mathbf{x}_k-\lambda_w \nabla D_q(\mathbf{x}_k))\geqs C_q(\mathbf{x}_k)$}
\STATE 11. $\lambda_{w+1}=\rho\lambda_w$. $w\leftarrow w+1$.
\ENDWHILE
\STATE 12. $\mathbf{y}_{(p+1)}=\mathbf{x}_k-\lambda_w \nabla D_q(\mathbf{x}_k)$. $p\leftarrow p+1$.
\ENDIF
\ENDIF
\ENDIF
\IF {$\|\mathbf{y}_{(p+1)}-\mathbf{y}_{(p)}\|/\|\mathbf{y}_{(p)}\|\leqs Tol$}
\STATE 13. $\mathbf{M}=\mathbf{y}_{(p+1)}$. Break.
\ENDIF
\ENDWHILE
\ENSURE The minimum point $\mathbf{M}$.
\end{algorithmic}
\end{breakablealgorithm}
\normalsize

\section{Proofs}

\subsection{Proof of Theorem \ref{thm:nonincreasing}}
\label{proof:nonincreasing}
To prove this theorem, we need the following lemma:
\begin{lemma}[\cite{l1median2,extendfermat,extendfermat2,lqmean}]
\label{lem:nonincreasing}
If $a_i>0$ and $b_i>0$, $0<q<n$ and $\sum_{i=1}^m a_i^{q-n}b_i^n< \sum_{i=1}^m a_i^{q}$, then $\sum_{i=1}^m b_i^{q}\leqs\sum_{i=1}^m a_i^{q}$ and the equality holds only when $a_i=b_i, \forall i$.
\end{lemma}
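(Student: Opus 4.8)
The plan is to reduce the whole statement to a single termwise inequality obtained from the weighted arithmetic--geometric mean inequality (equivalently, Young's inequality), applied after writing $b_i^q$ as an appropriate geometric mean of the two quantities $a_i^{q-n}b_i^n$ and $a_i^q$ that appear in the hypothesis and the conclusion. This is the cleanest route because it handles the inequality and the equality characterization at the same time, and it matches the convexity structure behind $0<q<n$.

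First I would set the weight $\alpha \triangleq q/n$, noting that $\alpha \in (0,1)$ precisely because $0<q<n$. The key algebraic observation is the identity
\[
b_i^q = \bigl(a_i^{q-n}b_i^n\bigr)^{\alpha}\bigl(a_i^q\bigr)^{1-\alpha}, \qquad i=1,\dots,m .
\]
This is checked just by collecting exponents: the exponent of $b_i$ on the right is $n\alpha = q$, while the exponent of $a_i$ is $(q-n)\alpha + q(1-\alpha) = (q-n)\tfrac{q}{n} + q - q\tfrac{q}{n} = 0$. Thus each $b_i^q$ is genuinely a weighted geometric mean of the positive numbers $u_i \triangleq a_i^{q-n}b_i^n$ and $v_i \triangleq a_i^q$ with weights $\alpha$ and $1-\alpha$.

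Next I would apply weighted AM--GM termwise, $u_i^{\alpha}v_i^{1-\alpha} \leqs \alpha u_i + (1-\alpha) v_i$, giving
\[
b_i^q \leqs \tfrac{q}{n}\,a_i^{q-n}b_i^n + \bigl(1-\tfrac{q}{n}\bigr)a_i^q ,
\]
with equality if and only if $u_i = v_i$, i.e. $a_i^{q-n}b_i^n = a_i^q$, which for $a_i>0$ is equivalent to $b_i^n = a_i^n$ and hence to $a_i = b_i$. Summing over $i$ and invoking the hypothesis $\sum_i a_i^{q-n}b_i^n < \sum_i a_i^q$ yields
\[
\sum_{i=1}^m b_i^q \leqs \tfrac{q}{n}\sum_{i=1}^m a_i^{q-n}b_i^n + \bigl(1-\tfrac{q}{n}\bigr)\sum_{i=1}^m a_i^q < \sum_{i=1}^m a_i^q ,
\]
which is the desired bound. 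For the equality clause I would observe that global equality in the chain forces equality in every termwise AM--GM step, hence $a_i=b_i$ for all $i$; conversely $a_i=b_i$ for all $i$ makes both sums equal, so the equality case is exactly $\{a_i=b_i\ \forall i\}$.

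I do not expect a serious obstacle here; the entire argument is a single convexity estimate. The only points that require genuine care are choosing the correct weight $\alpha = q/n$ so that the geometric-mean decomposition is an exact identity, and tracking the equality condition through both the termwise AM--GM inequality and the summation, so that the characterization $a_i=b_i$ is pinned down rather than merely asserted.
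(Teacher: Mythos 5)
Your proof is correct, and it is a mildly but genuinely different packaging of the paper's argument. The paper aggregates: it defines $g(t)=\sum_{i=1}^m a_i^{q-t}b_i^t$ on $[0,n]$, computes $g''(t)=\sum_{i=1}^m a_i^{q-t}b_i^t(\log a_i-\log b_i)^2$, concludes $g$ is strictly convex unless $a_i=b_i$ for all $i$, and interpolates the hypothesis $g(n)<g(0)$ to obtain $g(q)<g(0)$. Your termwise identity $b_i^q=\bigl(a_i^{q-n}b_i^n\bigr)^{q/n}\bigl(a_i^q\bigr)^{1-q/n}$ followed by weighted AM--GM is precisely the pointwise convexity of $t\mapsto a_i^{q-t}b_i^t=a_i^q e^{t\log(b_i/a_i)}$ evaluated at the interpolation $q=\frac{q}{n}\cdot n+\bigl(1-\frac{q}{n}\bigr)\cdot 0$, so both arguments rest on the same convexity fact; what differs is where convexity is invoked (per term via AM--GM versus on the summed function via $g''>0$) and how equality is tracked (termwise AM--GM equality $u_i=v_i\Rightarrow a_i=b_i$ versus failure of strict convexity of $g$). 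Your route buys elementarity --- no differentiation is needed, the equality analysis is localized to each term, and the strict conclusion $\sum_{i=1}^m b_i^q<\sum_{i=1}^m a_i^q$ drops out immediately because the hypothesis enters with positive weight $q/n$ --- while the paper's $g(t)$ formulation matches the classical references and makes transparent that the same interpolation works for every intermediate exponent $0<q<n$. One shared quibble, not a gap in your argument: under the stated strict hypothesis, $a_i=b_i$ for all $i$ would force $\sum_{i=1}^m a_i^{q-n}b_i^n=\sum_{i=1}^m a_i^q$, so the equality clause of the lemma is vacuous as written; both your proof and the paper's in fact deliver the strict inequality, and your termwise equality characterization is exactly what is needed if the hypothesis is weakened to $\leqslant$.
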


\begin{proof}
Consider the following function $g(t)$:
\begin{align}
\label{eqn:strickcont}
g(t)=\sum_{i=1}^m a_i^{q-t}b_i^t, 0\leqs t\leqs n.
\end{align}
The second derivative of $g$ with respect to $t$ is:
\begin{align}
\label{eqn:strickconsecond}
g''(t)=\sum_{i=1}^m a_i^{q-t}b_i^t(\log a_i-\log b_i)^2.
\end{align}
Since all the $a_i,b_i>0$, then $g''(t)>0$ and $g(t)$ is a strictly convex function unless $a_i=b_i, \forall i$. If $g(t)$ is a strictly convex function, then $g(n)<g(0)$ implies $g(q)<g(0)$. Thus the lemma is proven.  
\end{proof}

Lemma \ref{lem:nonincreasing} reveals the relation between the $q$-th power ($1\leqs q<2$) cost in (\ref{eqn:lqmediangenreal}) and the following weighted $2$-nd power cost:
\begin{align}
\label{eqn:approxcqgen}
\tilde{C}_q(\mathbf{y})= \sum_{i=1}^m \eta_i^q\| \mathbf{y}_{(p)}-\mathbf{x}_i  \|^{q-2}\| \mathbf{y}-\mathbf{x}_i\|^2.
\end{align}

\begin{proof}
$\tilde{C}_q(\mathbf{y})$ in (\ref{eqn:approxcqgen}) is a strictly convex function on $\mathbf{y}$. By taking the gradient of $\tilde{C}_q(\mathbf{y})$ and setting it to zero, it yields:
\begin{align}
\label{eqn:approxcqgengrad}
\nabla\tilde{C}_q(\mathbf{y})= \sum_{i=1}^m 2\eta_i^q\| \mathbf{y}_{(p)}-\mathbf{x}_i  \|^{q-2}( \mathbf{y}-\mathbf{x}_i)=\mathbf{0}.
\end{align}
Hence $\mathbf{T}_1( \mathbf{y}_{(p)})$ is the minimizer of $\tilde{C}_q(\mathbf{y})$. It yields $\tilde{C}_q(\mathbf{T}_1( \mathbf{y}_{(p)}))\leqs \tilde{C}_q(\mathbf{y}_{(p)})= C_q(\mathbf{y}_{(p)})$ with equality holds only when $\mathbf{T}_1( \mathbf{y}_{(p)})= \mathbf{y}_{(p)}$.

If $\tilde{C}_q(\mathbf{T}_1( \mathbf{y}_{(p)}))<C_q(\mathbf{y}_{(p)})$, it means:
\begin{align}
\label{eqn:approxineq}
\sum_{i=1}^m \eta_i^q\| \mathbf{y}_{(p)}-\mathbf{x}_i  \|^{q-2}\| \mathbf{T}_1( \mathbf{y}_{(p)})-\mathbf{x}_i\|^2 &<\sum_{i=1}^m \eta_i^q\| \mathbf{y}_{(p)}-\mathbf{x}_i  \|^{q}, \nonumber\\
\sum_{i=1}^m \| \eta_i(\mathbf{y}_{(p)}{-}\mathbf{x}_i)  \|^{q-2}\| \eta_i(\mathbf{T}_1( \mathbf{y}_{(p)}){-}\mathbf{x}_i)\|^2 &<\sum_{i=1}^m \| \eta_i(\mathbf{y}_{(p)}{-}\mathbf{x}_i)  \|^{q}.
\end{align}
By setting $a_i=\| \eta_i(\mathbf{y}_{(p)}{-}\mathbf{x}_i)  \|$, $b_i=\| \eta_i(\mathbf{T}_1( \mathbf{y}_{(p)}){-}\mathbf{x}_i)\|$ for all $i$ and using Lemma \ref{lem:nonincreasing} ($n=2$), it leads to:
\begin{align}
\label{eqn:approxineq2}
C_q(\mathbf{T}_1( \mathbf{y}_{(p)})) =&\sum_{i=1}^m \| \eta_i(\mathbf{T}_1( \mathbf{y}_{(p)}){-}\mathbf{x}_i)\|^q<\sum_{i=1}^m \| \eta_i(\mathbf{y}_{(p)}{-}\mathbf{x}_i)  \|^{q} =C_q(\mathbf{y}_{(p)}).
\end{align}
It proves Theorem \ref{thm:nonincreasing}.  
\end{proof}

\subsection{Proof of Corollary \ref{cor:charnonsing}}
\label{proof:charnonsing}

\begin{proof}
With $\mathbf{y}_{(p)}\notin \{\mathbf{x}_i\}_{i=1}^m$ and (\ref{eqn:lqwa3}), the following equivalence holds:
\begin{align}
\label{eqn:lqwa4}
&\mathbf{T}_1( \mathbf{y}_{(p)})= \mathbf{y}_{(p)} \quad\Longleftrightarrow \quad   \mathbf{0}= \sum_{i=1}^m \eta_i^q\| \mathbf{y}_{(p)}-\mathbf{x}_i  \|^{q-2}(\mathbf{y}_{(p)}-\mathbf{x}_i)=\frac{1}{q}\nabla C_q(\mathbf{y}_{(p)}).
\end{align}
Since $C_q(\mathbf{y})$ is strictly convex, $\nabla C_q(\mathbf{y}_{(p)})=\mathbf{0} \Leftrightarrow \mathbf{y}_{(p)}=\mathbf{M}$.  
\end{proof}

\subsection{Proof of Theorem \ref{thm:charsing}}
\label{proof:charsing}

\begin{proof}
Let $\mathbf{x}_k+\lambda \mathbf{z}$ $(\lambda>0, \|\mathbf{z}\|=1)$ be a point displaced from $\mathbf{x}_k$ towards an arbitrary direction. Then the gradient of $C_q(\mathbf{x}_k+\lambda \mathbf{z})$ with respect to $\lambda$ is:
\begin{align}
\label{eqn:lqgradlbd}
&\frac{\ud C_q(\mathbf{x}_k+\lambda \mathbf{z})}{\ud\lambda}{=} \sum_{i\ne k} q\eta_i^q\| \mathbf{x}_k+\lambda \mathbf{z}-\mathbf{x}_i  \|^{q-2}(\mathbf{x}_k+\lambda \mathbf{z}-\mathbf{x}_i)^\top \mathbf{z} {+}q\eta_k^q\lambda^{q-1}.
\end{align}
The limit of $\frac{\ud}{\ud\lambda}C_q(\mathbf{x}_k+\lambda \mathbf{z})$ when $\lambda \rightarrow 0$ is:
\begin{subequations}
\begin{align}
\label{eqn:lqgradlbd01}
&\frac{\ud C_1(\mathbf{x}_k+\lambda \mathbf{z})}{\ud\lambda}\arrowvert_{\lambda=0}= \sum_{i\ne k} \eta_i\| \mathbf{x}_k-\mathbf{x}_i  \|^{-1}(\mathbf{x}_k-\mathbf{x}_i)^\top \mathbf{z}+\eta_k, \quad q=1.\\
\label{eqn:lqgradlbd0q}
&\frac{\ud C_q(\mathbf{x}_k+\lambda \mathbf{z})}{\ud\lambda}\arrowvert_{\lambda=0}= \sum_{i\ne k} q\eta_i^q\| \mathbf{x}_k-\mathbf{x}_i  \|^{q-2}(\mathbf{x}_k-\mathbf{x}_i)^\top \mathbf{z},\quad 1<q<2.
\end{align}
\end{subequations}
From Definition \ref{defn:desinggrad}, (\ref{eqn:lqgradlbd01}) and (\ref{eqn:lqgradlbd0q}) can be formulated as:
\begin{subequations}
\begin{align}
\label{eqn:lqgradlbd01var}
\frac{\ud C_1(\mathbf{x}_k+\lambda \mathbf{z})}{\ud\lambda}\arrowvert_{\lambda=0}&=\nabla D_1(\mathbf{x}_k)^\top\mathbf{z}+\eta_k, \quad q=1.\\
\label{eqn:lqgradlbd0qvar}
\frac{\ud C_q(\mathbf{x}_k+\lambda \mathbf{z})}{\ud\lambda}\arrowvert_{\lambda=0}&=\nabla D_q(\mathbf{x}_k)^\top\mathbf{z}, \quad 1<q<2.
\end{align}
\end{subequations}
Thus the multiplicity $\eta_k$ affects the gradient only when $q=1$. By setting \\ $\displaystyle{\mathbf{z}=-\frac{\nabla D_q(\mathbf{x}_k)}{\|\nabla D_q(\mathbf{x}_k)\|}}$ in (\ref{eqn:lqgradlbd01var}) and (\ref{eqn:lqgradlbd0qvar}), we have:
\begin{subequations}
\begin{align}
\label{eqn:lqgradlbd01varmin}
&\min_{\mathbf{z}}\frac{\ud C_1(\mathbf{x}_k+\lambda \mathbf{z})}{\ud\lambda}\arrowvert_{\lambda=0}=-\|\nabla D_1(\mathbf{x}_k)\|+\eta_k,\quad  q=1.\qquad\\
\label{eqn:lqgradlbd0qvarmin}
&\min_{\mathbf{z}}\frac{\ud C_q(\mathbf{x}_k+\lambda \mathbf{z})}{\ud\lambda}\arrowvert_{\lambda=0}=-\|\nabla D_q(\mathbf{x}_k)\|,\quad  1<q<2.\\
\label{eqn:lqminchar2}
&C_q(\mathbf{x}_k) \:\text{is the minimum} \quad\Longleftrightarrow\quad  \min_{\mathbf{z}}\frac{\ud C_q(\mathbf{x}_k+\lambda \mathbf{z})}{\ud\lambda}\arrowvert_{\lambda=0}\geqs 0, \: 1\leqs q<2.
\end{align}
\end{subequations}
Combining (\ref{eqn:lqgradlbd01varmin}), (\ref{eqn:lqgradlbd0qvarmin}) and (\ref{eqn:lqminchar2}), one can find that the subgradient sets in (\ref{eqn:subgrad}) are equivalent to Definition \ref{def:frechetsubdifferential}. Thus Theorem \ref{thm:charsing} is proven.  
\end{proof}

\subsection{Proof of Theorem \ref{thm:singmin}}
\label{proof:singmin}

\begin{proof}
This theorem indicates that a sufficiently small displacement towards the negative de-singularity subgradient $-\nabla D_q(\mathbf{x}_k)$ can reduce the $q$-th power cost. $C_q(\mathbf{x}_k-\lambda \nabla D_q(\mathbf{x}_k))$ is continuous on $\lambda>0$. It consists of two parts: the nonsingular part $D_q(\mathbf{x}_k-\lambda \nabla D_q(\mathbf{x}_k))$ and the singular part $\eta_k^q\lambda^q\| \nabla D_q(\mathbf{x}_k)  \|^q$. From the Taylor series expansion of $D_q(\mathbf{x}_k-\lambda \nabla D_q(\mathbf{x}_k))$,
\begin{align}
\label{eqn:cqdecom}
&C_q(\mathbf{x}_k{-}\lambda \nabla D_q(\mathbf{x}_k))\nonumber\\
= &D_q(\mathbf{x}_k{-}\lambda \nabla D_q(\mathbf{x}_k)){+}\eta_k^q\lambda^q\| \nabla D_q(\mathbf{x}_k)  \|^q \nonumber\\
= &D_q(\mathbf{x}_k)-\lambda \| \nabla D_q(\mathbf{x}_k) \|^2+\frac{\lambda^2}{2}\nabla D_q(\mathbf{x}_k)^\top H(\mathbf{x}_k)\nabla D_q(\mathbf{x}_k)+o(\lambda^2)+\eta_k^q\lambda^q\| \nabla D_q(\mathbf{x}_k)  \|^q,
\end{align}
where $H(\mathbf{x}_k)$ is the Hessian of $D_q(\mathbf{y})$ at $\mathbf{x}_k$. Besides, it is easy to find that $D_q(\mathbf{x}_k)=C_q(\mathbf{x}_k)$. Then (\ref{eqn:cqdecom}) can be rearranged to:
\begin{align}
\label{eqn:cqdecom2}
&C_q(\mathbf{x}_k{-}\lambda \nabla D_q(\mathbf{x}_k)){-}C_q(\mathbf{x}_k) = -\lambda \| \nabla D_q(\mathbf{x}_k) \|^2{+}\frac{\lambda^2}{2}G(\mathbf{x}_k)+o(\lambda^2){+}\eta_k^q\lambda^q\| \nabla D_q(\mathbf{x}_k)  \|^q,
\end{align}
where $G(\mathbf{x}_k)\triangleq \nabla D_q(\mathbf{x}_k)^\top H(\mathbf{x}_k)\nabla D_q(\mathbf{x}_k)$. Therefore, we need to find a $\lambda$ such that the right side of (\ref{eqn:cqdecom2}) is negative.

When $\lambda \rightarrow 0$, the negative term $-\lambda \| \nabla D_q(\mathbf{x}_k) \|^2$ dominates the other terms on the right side, thus it is possible to make the right side negative. To specify, a $\lambda$ should be found to satisfy the following inequality:
\begin{align}
\label{eqn:cqdecom3}
-\lambda \| \nabla D_q(\mathbf{x}_k) \|^2&{+}\frac{\lambda^2}{2}G(\mathbf{x}_k){+}o(\lambda^2){+}\eta_k^q\lambda^q\| \nabla D_q(\mathbf{x}_k)  \|^q<0.\quad
\end{align}
Dividing both sides of (\ref{eqn:cqdecom3}) by $\lambda$ yields:
\begin{align}
\label{eqn:cqdecom4}
&{-}\| \nabla D_q(\mathbf{x}_k) \|^2 {+}\frac{\lambda}{2}G(\mathbf{x}_k){+}o(\lambda){+}\eta_k^q\lambda^{q-1}\| \nabla D_q(\mathbf{x}_k)  \|^q<0,\quad \nonumber\\
&\frac{\lambda}{2}G(\mathbf{x}_k)+o(\lambda)+\eta_k^q\lambda^{q-1}\| \nabla D_q(\mathbf{x}_k)  \|^q< \| \nabla D_q(\mathbf{x}_k) \|^2.
\end{align}
Since $1<q<2$, the left side of (\ref{eqn:cqdecom4}) approaches zero when $\lambda \rightarrow 0$, while $\| \nabla D_q(\mathbf{x}_k) \|^2>0$. Therefore, there exists a $\lambda_*>0$ such that for any $0<\lambda\leqs\lambda_*$, (\ref{eqn:cqdecom4}) holds. From (\ref{eqn:cqdecom4}) back to (\ref{eqn:cqdecom2}), the theorem is proven.  
\end{proof}

\subsection{Proof of Lemma \ref{lem:convexhull}}
\label{proof:convexhull}

\begin{proof}
From Corollary \ref{cor:charnonsing}, Theorem \ref{thm:nonincreasing}, Theorem \ref{thm:charsing} and Theorem \ref{thm:singmin}, $q$PWAWS has the following decreasing property:
\begin{align}
\label{eqn:lqwawsdecrease}
C_q(\mathbf{y}_{(0)})>C_q(\mathbf{y}_{(1)})>\cdots >C_q(\mathbf{y}_{(p)})>\cdots>C_q(\mathbf{M}),
\end{align}
unless some $\mathbf{y}_{(p)}$ hits $\mathbf{M}$. In particular, if $\mathbf{y}_{(p)}=\mathbf{x}_k$ but $\mathbf{x}_k\ne \mathbf{M}$, then $C_q(\mathbf{y}_{(p)})>C_q(\mathbf{y}_{(p+1)})$ and the subsequent iterates will never get back to $\mathbf{x}_k$, otherwise the decreasing property will be violated. Hence the sequence of iterates visits each $\mathbf{x}_k\ne \mathbf{M}$ at most once and will not get stuck.

From (\ref{eqn:lqwagen}), if $\mathbf{y}_{(p)}\notin \{\mathbf{x}_i\}_{i=1}^m$, $\mathbf{T}_1( \mathbf{y}_{(p)})$ is a weighted sum of the data points $\{\mathbf{x}_i\}_{i=1}^m$ with positive weights that sum to one. Hence $\mathbf{y}_{(p+1)}{=}\mathbf{T}_1( \mathbf{y}_{(p)})$ lies in the convex hull of $\{\mathbf{x}_i\}_{i=1}^m$. Since $\{\mathbf{y}_{(p)}\}$ visits each $\mathbf{x}_k\ne \mathbf{M}$ at most once, $\mathbf{T}_2( \mathbf{y}_{(p)})$ is invoked at most finite times. Because $\mathbf{T}_2( \mathbf{y}_{(p)})$ cannot ensure that $\mathbf{y}_{(p+1)}$ lies in the convex hull, there are at most a finite set of iterates that do not lie in the convex hull.

Last, if $\mathbf{M}\in \{\mathbf{x}_i\}_{i=1}^m$, then $\mathbf{M}$ is trivially in the convex hull. If $\mathbf{M}\notin \{\mathbf{x}_i\}_{i=1}^m$, Corollary \ref{cor:charnonsing} and (\ref{eqn:lqwa4}) indicate that $\mathbf{M}$ lies in the convex hull.  
\end{proof}

\subsection{Proof of Lemma \ref{lem:contintheorem}}
\label{proof:contintheorem}

\begin{proof}
First, taking a difference between both sides of (\ref{eqn:continlem}) leads to
\begin{align}
\label{eqn:contintheoremproof}
\mathbf{T}_1(\mathbf{y})-\mathbf{x}_k =&\frac{\sum_{i\ne k} \eta_i^q\| \mathbf{y}-\mathbf{x}_i  \|^{q-2}(\mathbf{x}_i-\mathbf{x}_k)}{\sum_{i=1}^m \eta_i^q\| \mathbf{y}-\mathbf{x}_i  \|^{q-2}}\\
\label{eqn:contintheoremproof2}
=&\frac{\| \mathbf{y}-\mathbf{x}_k  \|^{2-q}\cdot(\sum_{i\ne k} \eta_i^q\| \mathbf{y}-\mathbf{x}_i  \|^{q-2}(\mathbf{x}_i-\mathbf{x}_k))}{\eta_k^q+\| \mathbf{y}-\mathbf{x}_k  \|^{2-q}\cdot(\sum_{i\ne k} \eta_i^q\| \mathbf{y}-\mathbf{x}_i  \|^{q-2})}.
\end{align}
Then the limit of its $L_2$-norm is
\begin{align}
\label{eqn:contintheoremproof3}
&\lim_{\mathbf{y}\rightarrow \mathbf{x}_k}\|\mathbf{T}_1(\mathbf{y}){-}\mathbf{x}_k\| {=}\frac{0\cdot \|\sum_{i\ne k} \eta_i^q\| \mathbf{x}_k-\mathbf{x}_i  \|^{q-2}(\mathbf{x}_i-\mathbf{x}_k)\|}{\eta_k^q+0\cdot (\sum_{i\ne k} \eta_i^q\| \mathbf{x}_k-\mathbf{x}_i  \|^{q-2})}=0.
\end{align}
Since $\eta_k^q\ne 0$, the above limit is well-defined and equals $0$. It indicates that $\mathbf{T}_1(\mathbf{y})\rightarrow \mathbf{x}_k$ when $\mathbf{y}\rightarrow \mathbf{x}_k$.  
\end{proof}

\subsection{Proof of Lemma \ref{lem:kickout}}
\label{proof:kickout}

\begin{proof}
From (\ref{thm:charsing}), if $\mathbf{x}_k\ne \mathbf{M}$, then $\|\nabla D_q(\mathbf{x}_k)\|>0$. This is the key condition for $\mathbf{T}_1$ to drive $\mathbf{y}$ away. For any sufficiently small $0<\epsilon<1$, since $\|\nabla D_q(\mathbf{y})\|$ is continuous around $\mathbf{x}_k$, there exists $\delta_1>0$ such that
\begin{align}
\label{eqn:delta1choose}
\mathbf{y}\in B(\mathbf{x}_k,\delta_1) \Longrightarrow \|\nabla D_q(\mathbf{y})\|>\|\nabla D_q(\mathbf{x}_k)\|-\epsilon>0.
\end{align}

Second, when $\mathbf{y}\rightarrow \mathbf{x}_k$, the weight of $\mathbf{x}_k$ in (\ref{eqn:lqwa3}) will approach $1$. In other words, there exists $\delta_2>0$ such that
\begin{align}
\label{eqn:delta2choose}
\mathbf{y}\in B(\mathbf{x}_k,\delta_2) \Longrightarrow  1-\epsilon<\frac{ \eta_k^q\| \mathbf{y}-\mathbf{x}_k  \|^{q-2}}{\sum_{i=1}^m \eta_i^q\| \mathbf{y}-\mathbf{x}_i  \|^{q-2}}< 1.
\end{align}

Third, to handle some remainders, define $\delta_3>0$ as follows:
\begin{align}
\label{eqn:delta3choose}
& \delta_3=\left(  \frac{(1-\epsilon)(\|\nabla D_q(\mathbf{x}_k)\|-\epsilon)}{q\eta_k^q(1+2\epsilon)}   \right)^{1/(q-1)}, \quad 1<q<2.\\
 \label{eqn:delta3choose2}
& \mathbf{y}\in B(\mathbf{x}_k,\delta_3)  \Longrightarrow  \frac{(1-\epsilon)(\|\nabla D_q(\mathbf{x}_k)\|-\epsilon)}{q\eta_k^q\| \mathbf{y}-\mathbf{x}_k  \|^{q-1}}>1+2\epsilon.
\end{align}
When $0<\epsilon<1$ is sufficiently small, $\delta_3>0$ is well defined.

Let $\delta_0=\min\{\delta_1,\delta_2,\delta_3\}$ and $\mathbf{y}\in B(\mathbf{x}_k,\delta_0)$, then:
\begin{align}
\label{eqn:t1xkdiff}
\mathbf{T}_1(\mathbf{y})-\mathbf{x}_k =&\frac{\sum_{i=1}^m \eta_i^q\| \mathbf{y}-\mathbf{x}_i  \|^{q-2}(\mathbf{x}_i-\mathbf{y})}{\sum_{i=1}^m \eta_i^q\| \mathbf{y}-\mathbf{x}_i  \|^{q-2}}+\mathbf{y}- \mathbf{x}_k\nonumber\\
=&\frac{-\nabla D_q(\mathbf{y})/q}{\sum_{i=1}^m \eta_i^q\| \mathbf{y}{-}\mathbf{x}_i  \|^{q-2}} {+}\left(  \frac{ \eta_k^q\| \mathbf{y}{-}\mathbf{x}_k  \|^{q-2}}{\sum_{i=1}^m \eta_i^q\| \mathbf{y}{-}\mathbf{x}_i  \|^{q-2}}{-}1 \right)(\mathbf{x}_k{-}\mathbf{y}).
\end{align}
Therefore
\begin{align}
\label{eqn:t1xkdiff2}
 \|\mathbf{T}_1(\mathbf{y})-\mathbf{x}_k\|  >&\frac{\|\nabla D_q(\mathbf{y})/q\|}{\sum_{i=1}^m \eta_i^q\| \mathbf{y}-\mathbf{x}_i  \|^{q-2}}-\epsilon\|\mathbf{x}_k-\mathbf{y}\|\nonumber\\
>&\frac{(1-\epsilon)\|\nabla D_q(\mathbf{y})/q\|}{\eta_k^q\| \mathbf{y}-\mathbf{x}_k  \|^{q-2}}-\epsilon\|\mathbf{x}_k-\mathbf{y}\|\nonumber\\
>&\frac{(1-\epsilon)(\|\nabla D_q(\mathbf{x}_k)\|-\epsilon)}{q\eta_k^q\| \mathbf{y}-\mathbf{x}_k  \|^{q-2}}-\epsilon\|\mathbf{x}_k-\mathbf{y}\|\nonumber\\
>&(1+2\epsilon)\|\mathbf{x}_k-\mathbf{y}\|-\epsilon\|\mathbf{x}_k-\mathbf{y}\|\nonumber\\
=&(1+\epsilon)\|\mathbf{x}_k-\mathbf{y}\|,
\end{align}
where the first inequality is based on the triangle inequality and (\ref{eqn:delta2choose}); The second inequality is based on the left inequality of (\ref{eqn:delta2choose}); The third and the fourth inequalities are based on (\ref{eqn:delta1choose}) and (\ref{eqn:delta3choose2}), respectively.

Therefore, $\|\mathbf{T}_1(\mathbf{y})-\mathbf{x}_k\|>(1+\epsilon)\|\mathbf{y}-\mathbf{x}_k\|$. If $\mathbf{T}_1(\mathbf{y})\in B(\mathbf{x}_k,\delta_0)$, then $\|\mathbf{T}_1^2(\mathbf{y})-\mathbf{x}_k\|>(1+\epsilon)\|\mathbf{T}_1(\mathbf{y})-\mathbf{x}_k\|>(1+\epsilon)^2 \|\mathbf{y}-\mathbf{x}_k\|$. As long as the current iterate lies in $B(\mathbf{x}_k,\delta_0)$, $\mathbf{T}_1$ will keep on driving it out of $B(\mathbf{x}_k,\delta_0)$. Thus there exists some $s$ such that $\mathbf{T}_1^{s-1}(\mathbf{y})\in B(\mathbf{x}_k,\delta_0)$ and $\mathbf{T}_1^{s}(\mathbf{y})\notin B(\mathbf{x}_k,\delta_0)$ (see Figure \ref{fig:driveout0}).  
\end{proof}

\begin{figure}[!htb]
\centering
\includegraphics[width=0.4\columnwidth]{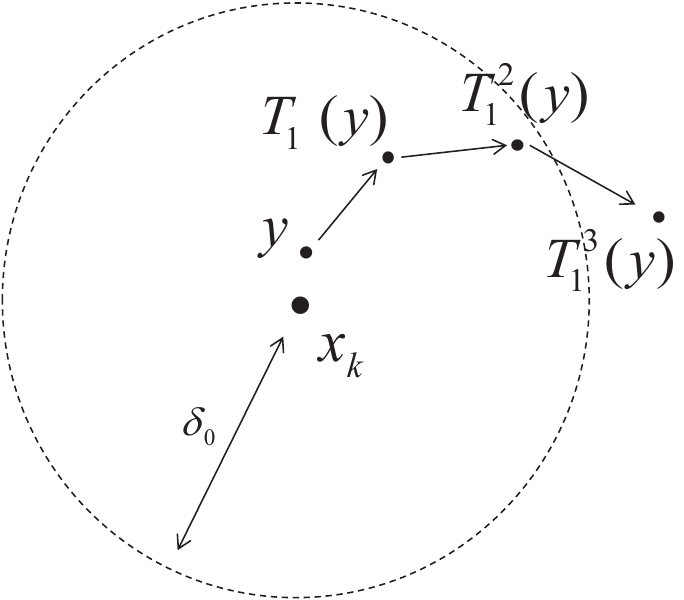}
\caption{Once $\mathbf{y}$ gets into $B(\mathbf{x}_k,\delta_0)$ where $\mathbf{x}_k\ne \mathbf{M}$, $\mathbf{T}_1$ will eventually drive it out of $B(\mathbf{x}_k,\delta_0)$.}
\label{fig:driveout0}
\end{figure}

\subsection{Proof of Theorem \ref{thm:contheorem}}
\label{proof:contheorem}

\begin{proof}
We can assume that $\mathbf{y}_{(p)}$ differs from $\mathbf{M}$ for all $p$. From Lemma \ref{lem:convexhull}, since at most a finite set of iterates do not lie in the convex hull of $\{\mathbf{x}_i\}_{i=1}^m$, the whole sequence $\{\mathbf{y}_{(p)}\}$ is a compact set in $\mathbb{R}^d$. Moreover, by omitting at most a finite number of iterates, we can assume that $\{\mathbf{y}_{(p)}\}\bigcap \{\mathbf{x}_i\}_{i=1}^m=\varnothing$. By the Bolzano-Weierstrass Theorem, there exists a subsequence $\{\mathbf{y}_{(p_v)}\}$ such that $\lim_{v\rightarrow \infty}\mathbf{y}_{(p_v)}=\mathbf{y}_*$ for some $\mathbf{y}_*\in\mathbb{R}^d$. Since the extended operator $\mathbf{T}_1$ is continuous,
\begin{align}
\label{eqn:contingenupdate}
 \lim_{v\rightarrow \infty}\mathbf{T}_1(\mathbf{y}_{(p_v)})=\mathbf{T}_1(\mathbf{y}_*).
\end{align}

According to the decreasing property of $q$PWAWS (\ref{eqn:lqwawsdecrease}), the sequence $C_q(\mathbf{y}_{(p)})$ is bounded below and decreasing, thus it has a limit and any subsequence of $C_q(\mathbf{y}_{(p)})$ should have the same limit. In particular, $C_q(\mathbf{y}_{(p_v)})$ and $C_q(\mathbf{T}_1(\mathbf{y}_{(p_v)}))$ are two subsequences of $C_q(\mathbf{y}_{(p)})$. Hence
\begin{align}
\label{eqn:subsequencelim}
\lim_{v\rightarrow \infty}C_q(\mathbf{T}_1(\mathbf{y}_{(p_v)}))=\lim_{v\rightarrow \infty}C_q(\mathbf{y}_{(p_v)}).\quad
\end{align}
Since $C_q$ is continuous, (\ref{eqn:contingenupdate}) and (\ref{eqn:subsequencelim}) indicate
\begin{align}
\label{eqn:mincharstar}
C_q(\mathbf{T}_1(\mathbf{y}_*))=C_q(\mathbf{y}_*).
\end{align}
If $\mathbf{y}_*\notin \{\mathbf{x}_i\}_{i=1}^m$, then Theorem \ref{thm:nonincreasing} and (\ref{eqn:mincharstar}) indicate $\mathbf{y}_*=\mathbf{T}_1(\mathbf{y}_*)$. By Corollary \ref{cor:charnonsing}, $\mathbf{y}_*=\mathbf{M}$. If $\mathbf{y}_*\in \{\mathbf{x}_i\}_{i=1}^m$, then (\ref{eqn:mincharstar}) trivially holds from (\ref{eqn:t1extend}). To summarize, $\mathbf{y}_*\in\{\mathbf{x}_i\}_{i=1}^m\bigcup \{\mathbf{M}\}$ and only the points in the finite set $\{\mathbf{x}_i\}_{i=1}^m\bigcup \{\mathbf{M}\}$ satisfy (\ref{eqn:mincharstar}) and constitute the fixed points of $\mathbf{T}_1$.

The next step is to prove that if $\mathbf{y}_*=\mathbf{x}_k$ for some $k$, then $\mathbf{x}_k=\mathbf{M}$. If not, we invoke Lemma \ref{lem:kickout} to induce a contradiction. Since $\lim_{v\rightarrow \infty}\mathbf{y}_{(p_v)}=\mathbf{x}_k$, once $\mathbf{y}_{(p_v)}$ gets into $B(\mathbf{x}_k,\delta_0)$, it will be driven out by $\mathbf{T}_1$. Thus for each $\mathbf{y}_{(p_v)}\in B(\mathbf{x}_k,\delta_0)$, there exists a $\mathbf{y}_{(p_u)}\in B(\mathbf{x}_k,\delta_0)$ and a $\mathbf{T}_1(\mathbf{y}_{(p_u)})\notin B(\mathbf{x}_k,\delta_0)$. In other words, $\mathbf{y}_{(p_u)}$ is the iterate that is going to be driven out of $B(\mathbf{x}_k,\delta_0)$. Since $\mathbf{y}_{(p_v)}$ is an infinite sequence converging to $\mathbf{x}_k$, $\mathbf{y}_{(p_u)}$ and $\mathbf{T}_1(\mathbf{y}_{(p_u)})$ are also infinite sequences. By the Bolzano-Weierstrass Theorem, $\mathbf{y}_{(p_u)}$ has a subsequence that converges to some $\mathbf{y}_{*1}$. We still denote this subsequence by $\mathbf{y}_{(p_u)}$. Then $\mathbf{T}_1(\mathbf{y}_{(p_u)})$ also has a subsequence that converges to some $\mathbf{y}_{*2}$ and the subsequence can still be denoted by $\mathbf{T}_1(\mathbf{y}_{(p_u)})$. Note that $\mathbf{y}_{(p_u)}$ and $\mathbf{T}_1(\mathbf{y}_{(p_u)})$ are not necessarily subsequences of $\mathbf{y}_{(p_v)}$. Then
\begin{align}
\label{eqn:ystar12}
&\lim_{u\rightarrow \infty}\mathbf{y}_{(p_u)}=\mathbf{y}_{*1}, \quad\lim_{u\rightarrow \infty}\mathbf{T}_1(\mathbf{y}_{(p_u)})=\mathbf{y}_{*2}.\\
\label{eqn:ystar12b}
&\|\mathbf{y}_{*1}-\mathbf{x}_k\|\leqs \delta_0, \quad\|\mathbf{y}_{*2}-\mathbf{x}_k\|\geqs \delta_0.
\end{align}
Similar to the demonstrations of (\ref{eqn:contingenupdate}),(\ref{eqn:subsequencelim}) and (\ref{eqn:mincharstar}), the accumulation point $\mathbf{y}_{*1}$ is also a fixed point of $\mathbf{T}_1$:
\begin{align}
\label{eqn:ystar12c}
\lim_{u\rightarrow \infty}\mathbf{T}_1(\mathbf{y}_{(p_u)})=\mathbf{T}_1(\lim_{u\rightarrow \infty}\mathbf{y}_{(p_u)})=\mathbf{T}_1(\mathbf{y}_{*1})=\mathbf{y}_{*1}.
\end{align}
From (\ref{eqn:ystar12}), (\ref{eqn:ystar12b}) and (\ref{eqn:ystar12c}),
\begin{align}
\label{eqn:ystar12d}
\mathbf{y}_{*1}=\mathbf{y}_{*2},\quad \|\mathbf{y}_{*1}-\mathbf{x}_k\|= \delta_0.
\end{align}
The process of inducing $\mathbf{y}_{*1}=\mathbf{y}_{*2}$ can be shown as Figure \ref{fig:driveout}.

\begin{figure}[!htb]
\centering
\includegraphics[width=0.4\columnwidth]{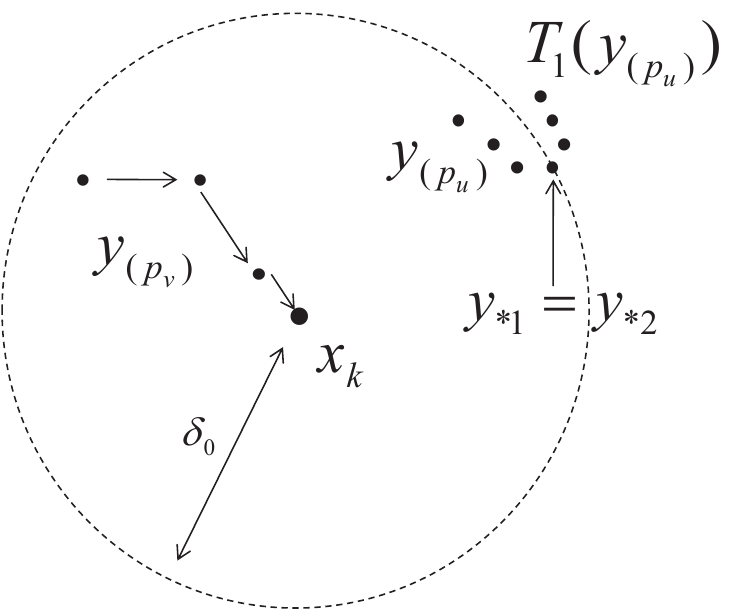}
\caption{The process of inducing $\mathbf{y}_{*1}=\mathbf{y}_{*2}$. $\mathbf{y}_{(p_v)}$ is a subsequence that converges to $\mathbf{x}_k\ne\mathbf{M}$. By Lemma \ref{lem:kickout}, each $\mathbf{y}_{(p_v)}$ induces a $\mathbf{y}_{(p_u)}\in B(\mathbf{x}_k,\delta_0)$ and a $\mathbf{T}_1(\mathbf{y}_{(p_u)})\notin B(\mathbf{x}_k,\delta_0)$. Then the subsequences $\mathbf{y}_{(p_u)}$ and $\mathbf{T}_1(\mathbf{y}_{(p_u)})$ have the same accumulation point $\mathbf{y}_{*1}=\mathbf{y}_{*2}$ at the boundary of $B(\mathbf{x}_k,\delta_0)$.}
\label{fig:driveout}
\end{figure}

For any $0<\delta<\delta_0$, we can apply the same method to obtain a distinct fixed point $\mathbf{y}_{*\delta}$ such that $\mathbf{T}_1(\mathbf{y}_{*\delta})=\mathbf{y}_{*\delta}$ and $\|\mathbf{y}_{*\delta}-\mathbf{x}_k\|= \delta$. Then there are infinite fixed points $\{\mathbf{y}_{*\delta}\}$, which is contradictory to the finite set of fixed points $\{\mathbf{x}_i\}_{i=1}^m\bigcup \{\mathbf{M}\}$. Therefore, $\mathbf{y}_*=\mathbf{x}_k=\mathbf{M}$.

The last step is to prove that the whole sequence $\{\mathbf{y}_{(p)}\}$ converges to $\mathbf{y}_*$. From the above illustrations, the accumulation point $\mathbf{y}_*=\mathbf{M}$. If there is another accumulation point $\tilde{\mathbf{y}}\ne \mathbf{y}_*$, then $\tilde{\mathbf{y}}\in \{\mathbf{x}_i\}_{i=1}^m$. Without loss of generality, suppose $\tilde{\mathbf{y}}=\mathbf{x}_k\ne \mathbf{M}$, then the above method can be repeated to induce an infinite set of fixed points $\{\mathbf{y}_{*\delta}\}$, which leads to a contradiction. Hence, there is only one accumulation point $\mathbf{y}_*=\mathbf{M}$ for the whole sequence $\{\mathbf{y}_{(p)}\}$. Thus $\{\mathbf{y}_{(p)}\}$ and any subsequences converge to $\mathbf{y}_*=\mathbf{M}$.  
\end{proof}

\subsection{Proof of Lemma \ref{lem:orderinfinitesimal}}
\label{proof:orderinfinitesimal}

\begin{proof}
It is straightforward to check from (\ref{eqn:dqgrad}) that $\nabla D_q(\mathbf{y})$ is analytic in some neighborhood $B(\mathbf{x}_k,\delta)$ of $\mathbf{x}_k$ such that $B(\mathbf{x}_k,\delta)\bigcap \{\mathbf{x}_i\}_{i\ne k}=\varnothing$, since the singular component has been excluded from $\nabla D_q(\mathbf{y})$. Furthermore, $\|\nabla D_q(\mathbf{y})\|^2=\nabla D_q(\mathbf{y})^{\top}\nabla D_q(\mathbf{y})$ is also analytic in this neighborhood $B(\mathbf{x}_k,\delta)$. Thus we can adopt the second-order Taylor series expansion of $\|\nabla D_q(\mathbf{y})\|^2$ for $\mathbf{y}_{(p)}\in B(\mathbf{x}_k,\delta)$ at $\mathbf{x}_k$:
\begin{align}
\label{eqn:orderinfinitesimal2}
\|\nabla D_q(\mathbf{y}_{(p)})\|^2 =&\|\nabla D_q(\mathbf{x}_k)\|^2+2\nabla D_q(\mathbf{x}_k)^{\top}H(\mathbf{x}_k)(\mathbf{y}_{(p)}-\mathbf{x}_k){+}\frac{1}{2}(\mathbf{y}_{(p)}{-}\mathbf{x}_k)^{\top}J(\mathbf{x}_k)(\mathbf{y}_{(p)}{-}\mathbf{x}_k){+}o(\|\mathbf{y}_{(p)}{-}\mathbf{x}_k\|^2),
\end{align}
where $H(\mathbf{x}_k)$ and $J(\mathbf{x}_k)$ are the Hessians of $D_q(\mathbf{y})$ and $\|\nabla D_q(\mathbf{y})\|^2$ at $\mathbf{x}_k$, respectively.

Theorem \ref{thm:charsing} indicates that $\nabla D_q(\mathbf{x}_k)=\mathbf{0}$ when $1<q<2$, thus (\ref{eqn:orderinfinitesimal2}) can be further simplified as
\begin{align}
\label{eqn:orderinfinitesimal3}
\|\nabla D_q(\mathbf{y}_{(p)})\|^2 =&\frac{1}{2}(\mathbf{y}_{(p)}{-}\mathbf{x}_k)^{\top}J(\mathbf{x}_k)(\mathbf{y}_{(p)}{-}\mathbf{x}_k){+}o(\|\mathbf{y}_{(p)}{-}\mathbf{x}_k\|^2) \leqs \frac{\vartheta_{J}}{2}\|\mathbf{y}_{(p)}{-}\mathbf{x}_k\|^2{+}o(\|\mathbf{y}_{(p)}{-}\mathbf{x}_k\|^2),
\end{align}
where $\vartheta_{J}$ denotes the largest eigenvalue of $J(\mathbf{x}_k)$. Since $\|\nabla D_q(\mathbf{y}_{(p)})\|^2>0$ when $\mathbf{y}_{(p)}\ne \mathbf{x}_k$, (\ref{eqn:orderinfinitesimal3}) implies that
\begin{align}
\label{eqn:orderinfinitesimal4}
 \frac{\vartheta_{J}}{2}\|\mathbf{y}_{(p)}{-}\mathbf{x}_k\|^2&{+}o(\|\mathbf{y}_{(p)}{-}\mathbf{x}_k\|^2)\geqs\|\nabla D_q(\mathbf{y}_{(p)})\|^2>0 \quad\Longrightarrow \quad \frac{\vartheta_{J}}{2}+o(1)>0 \quad\Longrightarrow \quad\vartheta_{J} \geqs 0.
\end{align}
It means that the inequality in (\ref{eqn:orderinfinitesimal3}) really holds without contradiction.

Next, dividing the leftmost side and the rightmost side of (\ref{eqn:orderinfinitesimal3})
by $\|\mathbf{y}_{(p)}{-}\mathbf{x}_k\|^2$ leads to
\begin{align}
\label{eqn:orderinfinitesimal5}
&\quad\frac{\|\nabla D_q(\mathbf{y}_{(p)})\|^2}{\|\mathbf{y}_{(p)}{-}\mathbf{x}_k\|^2} \leqs \frac{\vartheta_{J}}{2}+o(1) \quad\Longrightarrow \quad \lim_{\mathbf{y}_{(p)}\rightarrow \mathbf{x}_k}\frac{\|\nabla D_q(\mathbf{y}_{(p)})\|^2}{\|\mathbf{y}_{(p)}-\mathbf{x}_k\|^2} \leqs\frac{\vartheta_{J}}{2}.
\end{align}
Since the square root operator $\sqrt{\cdot}$ is continuous in the interval $(0,+\infty)$ and right continuous at $0$, we can take $\sqrt{\cdot}$ inside the limit of (\ref{eqn:orderinfinitesimal5}) and get
\begin{align}
\label{eqn:orderinfinitesimal6}
 &\lim_{\mathbf{y}_{(p)}\rightarrow \mathbf{x}_k}\frac{\|\nabla D_q(\mathbf{y}_{(p)})\|}{\|\mathbf{y}_{(p)}-\mathbf{x}_k\|}{=}\sqrt{\lim_{\mathbf{y}_{(p)}\rightarrow \mathbf{x}_k}\frac{\|\nabla D_q(\mathbf{y}_{(p)})\|^2}{\|\mathbf{y}_{(p)}-\mathbf{x}_k\|^2}} {\leqs}\sqrt{\frac{\vartheta_{J}}{2}}.
\end{align}
Let $\zeta\triangleq\sqrt{\frac{\vartheta_{J}}{2}}$ and the proof is finished.  
\end{proof}

\subsection{Proof of Theorem \ref{thm:rateconver}}
\label{proof:rateconver}

\begin{proof}
Since $\mathbf{y}_{(p)}\rightarrow \mathbf{x}_k$ and the data points are distinct, we can assume that $\mathbf{y}_{(p)}\notin\{\mathbf{x}_i\}_{i=1}^m$, $\forall p \geqs P$ for some sufficiently large $P$. Therefore, $\mathbf{y}_{(p+1)}=\mathbf{T}_1(\mathbf{y}_{(p)})$, $\forall p \geqs P$. We begin with an important equation:
\begin{align}
\label{eqn:rateconverproof}
\mathbf{y}_{(p+1)}{-}\mathbf{x}_k {=}\frac{\sum_{i\ne k} \eta_i^q\| \mathbf{y}_{(p)}-\mathbf{x}_i  \|^{q-2}(\mathbf{x}_i-\mathbf{x}_k)}{\eta_k^q \| \mathbf{y}_{(p)}-\mathbf{x}_k  \|^{q-2}+ \sum_{i\ne k} \eta_i^q\| \mathbf{y}_{(p)}-\mathbf{x}_i  \|^{q-2}}.
\end{align}
The key technique is to eliminate the singular term $\| \mathbf{y}_{(p)}-\mathbf{x}_k  \|^{q-2}$ in the denominator of the right side of (\ref{eqn:rateconverproof}) and construct the rate of convergence simultaneously.

In the $q=1$ case, we divide both sides of (\ref{eqn:rateconverproof}) by the nonzero scalar $\|\mathbf{y}_{(p)}-\mathbf{x}_k\|$:
\begin{align}
\label{eqn:rateconverproof2}
&\frac{\mathbf{y}_{(p+1)}-\mathbf{x}_k}{\|\mathbf{y}_{(p)}-\mathbf{x}_k\|} =\frac{ \sum_{i\ne k} \eta_i\| \mathbf{y}_{(p)}-\mathbf{x}_i  \|^{-1}(\mathbf{x}_i-\mathbf{x}_k)}{\eta_k+\| \mathbf{y}_{(p)}-\mathbf{x}_k  \|\cdot(\sum_{i\ne k} \eta_i\| \mathbf{y}_{(p)}-\mathbf{x}_i  \|^{-1})}.
\end{align}
Taking $L_2$-norm $\|\cdot\|$ on both sides of (\ref{eqn:rateconverproof2}) and letting $\mathbf{y}_{(p)}\rightarrow \mathbf{x}_k$ lead to
\begin{align}
\label{eqn:rateconverproof3}
\lim_{\mathbf{y}_{(p)}\rightarrow \mathbf{x}_k}\frac{\|\mathbf{y}_{(p+1)}-\mathbf{x}_k\|}{\|\mathbf{y}_{(p)}-\mathbf{x}_k\|} =&\lim_{\mathbf{y}_{(p)}\rightarrow \mathbf{x}_k} \frac{ \|\sum_{i\ne k} \eta_i\| \mathbf{y}_{(p)}-\mathbf{x}_i  \|^{-1}(\mathbf{x}_i-\mathbf{x}_k)\|}{\eta_k+\| \mathbf{y}_{(p)}-\mathbf{x}_k  \|\cdot(\sum_{i\ne k} \eta_i\| \mathbf{y}_{(p)}-\mathbf{x}_i  \|^{-1})} \nonumber\\
=&\frac{\|-\nabla D_1(\mathbf{x}_k)\|}{\eta_k+0\cdot(\sum_{i\ne k} \eta_i\| \mathbf{y}_{(p)}-\mathbf{x}_i  \|^{-1})}=\frac{\|\nabla D_1(\mathbf{x}_k)\|}{\eta_k}.
\end{align}
Since $\eta_k>0$, the above limit is well-defined. Based on Theorem \ref{thm:charsing}, the convergence is sublinear, linear or superlinear when $\|\nabla D_1(\mathbf{x}_k)\|=\eta_k$, $0<\|\nabla D_1(\mathbf{x}_k)\|<\eta_k$ or $\|\nabla D_1(\mathbf{x}_k)\|=0$, respectively.

In the $1<q<2$ case, it is a little subtle and we take two steps to eliminate the singular term $\| \mathbf{y}_{(p)}-\mathbf{x}_k  \|^{q-2}$ in the denominator of (\ref{eqn:rateconverproof}). First, we divide both sides of (\ref{eqn:rateconverproof}) by the nonzero scalar $\|\mathbf{y}_{(p)}-\mathbf{x}_k\|$:
\begin{align}
\label{eqn:rateconverproof4}
\frac{\mathbf{y}_{(p+1)}-\mathbf{x}_k}{\|\mathbf{y}_{(p)}-\mathbf{x}_k\|} =&\frac{ \sum_{i\ne k} \eta_i^q\| \mathbf{y}_{(p)}-\mathbf{x}_i  \|^{q-2}(\mathbf{x}_i-\mathbf{x}_k)}{\eta_k^q\| \mathbf{y}_{(p)}{-}\mathbf{x}_k  \|^{q-1}{+}\| \mathbf{y}_{(p)}{-}\mathbf{x}_k  \|\cdot(\sum_{i\ne k} \eta_i^q\| \mathbf{y}_{(p)}{-}\mathbf{x}_i  \|^{q-2})}.
\end{align}
Second, we multiply both the numerator and the denominator of the right side of (\ref{eqn:rateconverproof4}) by $\| \mathbf{y}_{(p)}{-}\mathbf{x}_k  \|^{1-q}$:
\begin{align}
\label{eqn:rateconverproof5}
&\frac{\mathbf{y}_{(p+1)}-\mathbf{x}_k}{\|\mathbf{y}_{(p)}-\mathbf{x}_k\|} \nonumber\\
=&\frac{ \| \mathbf{y}_{(p)}{-}\mathbf{x}_k  \|^{1-q}\cdot(\sum_{i\ne k} \eta_i^q\| \mathbf{y}_{(p)}-\mathbf{x}_i  \|^{q-2}(\mathbf{x}_i-\mathbf{x}_k))}{\eta_k^q{+}\| \mathbf{y}_{(p)}{-}\mathbf{x}_k  \|^{2-q}\cdot(\sum_{i\ne k} \eta_i^q\| \mathbf{y}_{(p)}{-}\mathbf{x}_i  \|^{q-2})} \nonumber\\
=&\frac{ \| \mathbf{y}_{(p)}{-}\mathbf{x}_k  \|^{1-q}\cdot(\sum_{i\ne k} \eta_i^q\| \mathbf{y}_{(p)}-\mathbf{x}_i  \|^{q-2})(\mathbf{y}_{(p)}-\mathbf{x}_k)}{\eta_k^q{+}\| \mathbf{y}_{(p)}{-}\mathbf{x}_k  \|^{2-q}\cdot(\sum_{i\ne k} \eta_i^q\| \mathbf{y}_{(p)}{-}\mathbf{x}_i  \|^{q-2})} -\frac{ \| \mathbf{y}_{(p)}{-}\mathbf{x}_k  \|^{1-q}\cdot\nabla D_q(\mathbf{y}_{(p)})/q}{\eta_k^q{+}\| \mathbf{y}_{(p)}{-}\mathbf{x}_k  \|^{2-q}\cdot(\sum_{i\ne k} \eta_i^q\| \mathbf{y}_{(p)}{-}\mathbf{x}_i  \|^{q-2})}.
\end{align}

Taking $L_2$-norm $\|\cdot\|$ on the leftmost side and the rightmost side of (\ref{eqn:rateconverproof5}) leads to
\begin{align}
\label{eqn:rateconverproof6}
&\frac{\|\mathbf{y}_{(p+1)}-\mathbf{x}_k\|}{\|\mathbf{y}_{(p)}-\mathbf{x}_k\|} \nonumber\\
\leqs&\frac{ \| \mathbf{y}_{(p)}{-}\mathbf{x}_k  \|^{2-q}\cdot(\sum_{i\ne k} \eta_i^q\| \mathbf{y}_{(p)}-\mathbf{x}_i  \|^{q-2})}{\eta_k^q{+}\| \mathbf{y}_{(p)}{-}\mathbf{x}_k  \|^{2-q}\cdot(\sum_{i\ne k} \eta_i^q\| \mathbf{y}_{(p)}{-}\mathbf{x}_i  \|^{q-2})} +\frac{ \| \mathbf{y}_{(p)}{-}\mathbf{x}_k  \|^{1-q}\cdot\|\nabla D_q(\mathbf{y}_{(p)})\|/q}{\eta_k^q{+}\| \mathbf{y}_{(p)}{-}\mathbf{x}_k  \|^{2-q}\cdot(\sum_{i\ne k} \eta_i^q\| \mathbf{y}_{(p)}{-}\mathbf{x}_i  \|^{q-2})}\nonumber\\
=&\frac{ \| \mathbf{y}_{(p)}{-}\mathbf{x}_k  \|^{2-q}\cdot(\sum_{i\ne k} \eta_i^q\| \mathbf{y}_{(p)}-\mathbf{x}_i  \|^{q-2})}{\eta_k^q{+}\| \mathbf{y}_{(p)}{-}\mathbf{x}_k  \|^{2-q}\cdot(\sum_{i\ne k} \eta_i^q\| \mathbf{y}_{(p)}{-}\mathbf{x}_i  \|^{q-2})} +\frac{ \| \mathbf{y}_{(p)}{-}\mathbf{x}_k  \|^{2-q}\cdot\left(\frac{\|\nabla D_q(\mathbf{y}_{(p)})\|}{\| \mathbf{y}_{(p)}{-}\mathbf{x}_k  \|}\right)/q}{\eta_k^q{+}\| \mathbf{y}_{(p)}{-}\mathbf{x}_k  \|^{2-q}\cdot(\sum_{i\ne k} \eta_i^q\| \mathbf{y}_{(p)}{-}\mathbf{x}_i  \|^{q-2})}.
\end{align}

Because $1<q<2$, $0<2-q<1$ and $\| \mathbf{y}_{(p)}{-}\mathbf{x}_k  \|^{2-q}$ is nonsingular when $\mathbf{y}_{(p)}\rightarrow \mathbf{x}_k$. Then we can adopt Lemma \ref{lem:orderinfinitesimal} to dominate the rightmost side of (\ref{eqn:rateconverproof6}):
\begin{align}
\label{eqn:rateconverproof7}
\lim_{\mathbf{y}_{(p)}\rightarrow \mathbf{x}_k}\frac{\|\mathbf{y}_{(p+1)}-\mathbf{x}_k\|}{\|\mathbf{y}_{(p)}-\mathbf{x}_k\|} \leqs&\frac{ 0\cdot(\sum_{i\ne k} \eta_i^q\| \mathbf{x}_k-\mathbf{x}_i  \|^{q-2})+0\cdot\zeta/q}{\eta_k^q{+}0{\cdot}(\sum_{i\ne k} \eta_i^q\| \mathbf{x}_k{-}\mathbf{x}_i  \|^{q-2})} =0,
\end{align}
which shows a superlinear convergence for $1<q<2$.  
\end{proof}

\end{document}